\documentclass{article}

\usepackage{microtype}
\usepackage{graphicx}
\usepackage{subcaption}
\usepackage{booktabs} % for professional tables
\usepackage{makecell}
\usepackage{multirow}
\usepackage{xcolor}
\usepackage{bm}
\usepackage{hyperref}

\usepackage[most]{tcolorbox}
\newtcolorbox{remarkbox}{
  colback=blue!3,        
  colframe=blue!60!black,
  boxrule=0.6pt,
  arc=2pt,
  left=6pt,
  right=6pt,
  top=6pt,
  bottom=6pt
}

\usepackage[preprint]{icml2026}

\usepackage{amsmath}
\usepackage{amssymb}
\usepackage{mathtools}
\usepackage{amsthm}
\usepackage{amsfonts} 
\usepackage{enumitem}

\usepackage{algorithm}
\usepackage{algorithmic}

\newcommand{\ten}[1]{\bm{\mathcal{#1}}}
\newcommand{\mat}[1]{\mathbf{#1}}
\definecolor{darkgreen}{RGB}{0,120,0}

\usepackage[capitalize,noabbrev]{cleveref}

\theoremstyle{plain}
\newtheorem{theorem}{Theorem}[section]
\newtheorem{proposition}[theorem]{Proposition}
\newtheorem{lemma}[theorem]{Lemma}
\newtheorem{corollary}[theorem]{Corollary}
\theoremstyle{definition}
\newtheorem{definition}[theorem]{Definition}
\newtheorem{assumption}[theorem]{Assumption}
\theoremstyle{remark}

\usepackage{amsmath}
\DeclareMathOperator*{\argmin}{argmin}

\usepackage[textsize=tiny]{todonotes}

\icmltitlerunning{\textsc{Teon}: Tensorized Orthonormalization Beyond Layer-Wise \textsc{Muon}}

\begin{document}

\twocolumn[
  \icmltitle{\textsc{Teon}: Tensorized Orthonormalization Beyond Layer-Wise \textsc{Muon} \\ for Large Language Model Pre-Training}

  % It is OKAY to include author information, even for blind submissions: the
  % style file will automatically remove it for you unless you've provided
  % the [accepted] option to the icml2026 package.

  % List of affiliations: The first argument should be a (short) identifier you
  % will use later to specify author affiliations Academic affiliations
  % should list Department, University, City, Region, Country Industry
  % affiliations should list Company, City, Region, Country

  % You can specify symbols, otherwise they are numbered in order. Ideally, you
  % should not use this facility. Affiliations will be numbered in order of
  % appearance and this is the preferred way.
  \icmlsetsymbol{equal}{*}

  \begin{icmlauthorlist}
    \icmlauthor{Ruijie Zhang}{csucsb}
    \icmlauthor{Yequan Zhao}{ece}
    \icmlauthor{Ziyue Liu}{csucsb}
    \icmlauthor{Zhengyang Wang}{csucsb}
    \icmlauthor{Dongyang Li}{math}\\
    \icmlauthor{Yupeng Su}{csucsb}
    \icmlauthor{Sijia Liu}{msu}
    %\icmlauthor{}{sch}
    \icmlauthor{Zheng Zhang}{ece}
    %\icmlauthor{}{sch}
    %\icmlauthor{}{sch}
  \end{icmlauthorlist}
  \icmlaffiliation{csucsb}{Computer Science, UC Santa Barbara}
  \icmlaffiliation{ece}{Electrical \& Computer Engineering, UC Santa Barbara}
  \icmlaffiliation{math}{Mathematics, UC Santa Barbara}
  \icmlaffiliation{msu}{Computer Science \& Engineering, Michigan State University}

  \icmlcorrespondingauthor{Zheng Zhang}{zhengzhang@ece.ucsb.edu}

  % You may provide any keywords that you find helpful for describing your
  % paper; these are used to populate the "keywords" metadata in the PDF but
  % will not be shown in the document
  \icmlkeywords{Machine Learning, ICML}

  \vskip 0.3in
]

% this must go after the closing bracket ] following \twocolumn[ ...

% This command actually creates the footnote in the first column listing the
% affiliations and the copyright notice. The command takes one argument, which
% is text to display at the start of the footnote. The \icmlEqualContribution
% command is standard text for equal contribution. Remove it (just {}) if you
% do not need this facility.

% Use ONE of the following lines. DO NOT remove the command.
% If you have no special notice, KEEP empty braces:
\printAffiliationsAndNotice{}  % no special notice (required even if empty)
% Or, if applicable, use the standard equal contribution text:
% \printAffiliationsAndNotice{\icmlEqualContribution}

\begin{abstract}
The \textsc{Muon} optimizer has demonstrated strong empirical performance in pre-training large language models by performing matrix-level gradient (or momentum)$^\dagger$ orthogonalization in each layer independently. In this work, we propose \textbf{TEON}, a principled generalization of \textsc{Muon} that extends orthogonalization beyond individual layers by modeling the gradients of a neural network as a structured higher-order tensor. 
We present \textsc{Teon}'s improved convergence guarantee over layer-wise \textsc{Muon}, and further develop a practical instantiation of \textsc{Teon} based on the theoretical analysis with corresponding ablation. We evaluate our approach on two widely adopted architectures: GPT-style models, ranging from 130M to 774M parameters, and LLaMA-style models, ranging from 60M to 1B parameters. Experimental results show that \textsc{Teon} consistently improves training and validation perplexity across model scales and exhibits strong robustness under various approximate SVD schemes.
\end{abstract}
\section{Introduction}
\footnotetext[0]{$\dagger$ For simplicity, we use gradient orthogonalization in the following discussion.}

\begin{figure}[t]
    \centering
    \includegraphics[width=\linewidth]{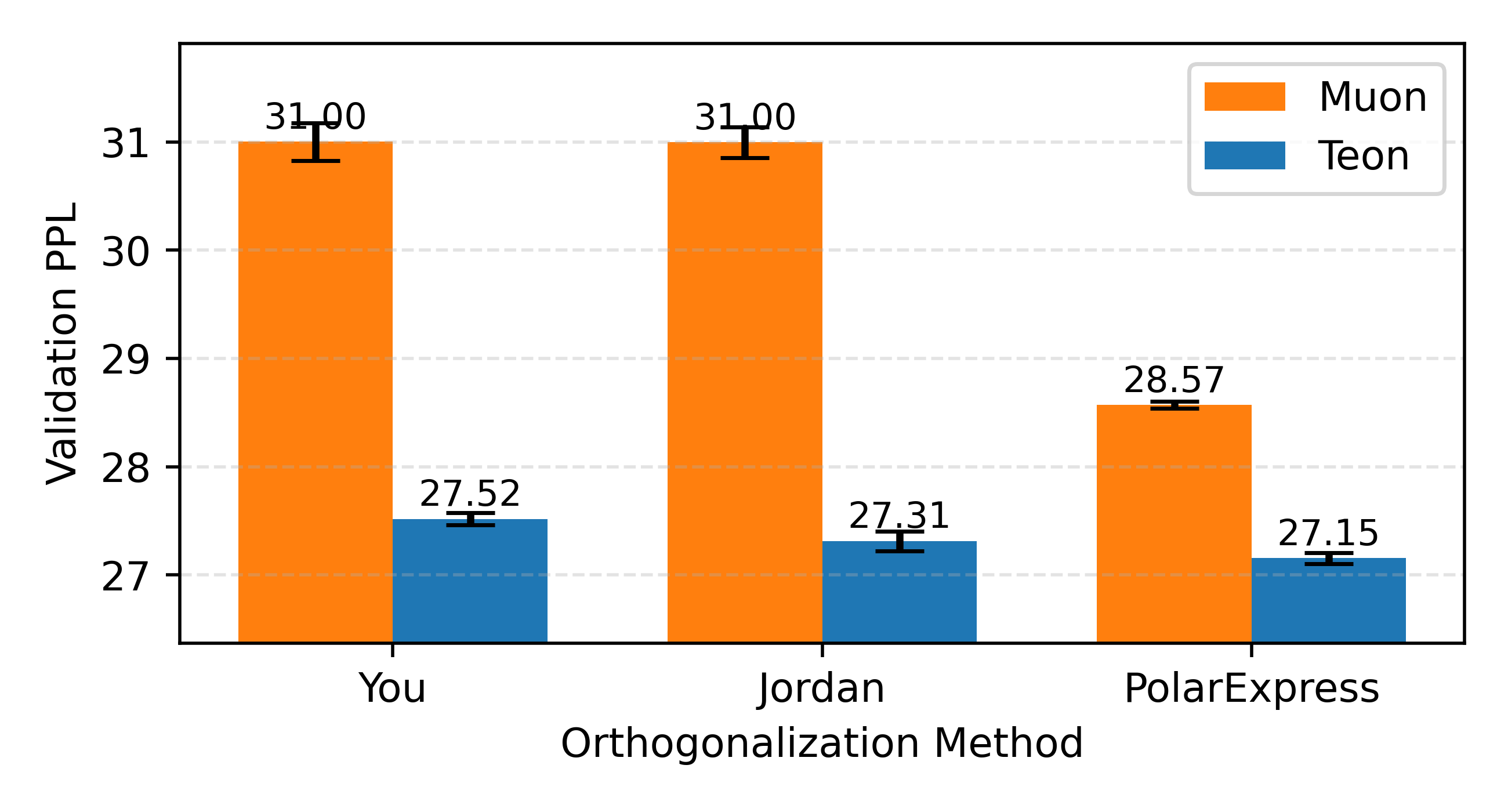}
    \caption{Pre-training GPT-Small on 10 Billion FineWeb Tokens.
            We run 5 trials with different random seeds to estimate the standard deviations. Validation perplexity (PPL) is reported; lower is better. {Our proposed method \textsc{Teon}, consistently outperforms \textsc{Muon} across different orthogonalization methods.} Additional experimental results across various model configurations are presented, with more detailed analysis in Section~\ref{sec:experiment}.}
    \label{fig:main_results_fig}
    \vspace{-10pt}
\end{figure}

Following the scaling laws \cite{kaplan2020scaling, hoffmann2022training, kumar2025scaling}, models like GPT, DeepSeek, LLaMA and Gemini~\cite{achiam2023gpt,liu2024deepseek, grattafiori2024llama, team2023gemini} have made remarkable advances in artificial general intelligence. 
However, as both model size and training data volume continue to grow to an extreme scale, pre-training large foundation models has become super resource-intensive. This trend has motivated renewed interest in improving pre-training efficiency \cite{mehmood2023efficient, han2024sltrain, you2019large, zhao2024galore, zhang2025lax,liu2025cola}. In this context, optimizers play a pivotal role in enabling efficient training. Over the years, substantial progress has been made \cite{kingma2014adam, loshchilov2017decoupled,liu2024sophia,jordan2024muon,yuan2024mars,vyas2025soap,Li_2018,li2018preconditionermatrixliegroup,pooladzandi2024curvatureinformedsgdgeneralpurpose,li2022blackboxliegroup,li2024stochastichessianfittingslie,pethick2025trainingdeeplearningmodels} in developing efficient optimizers for large-scale training. Among them, Adam \cite{kingma2014adam} and its variant AdamW \cite{loshchilov2017decoupled} have become the most widely used optimizers. 

Recently, the \textsc{Muon} optimizer, proposed by \cite{jordan2024muon}, has attracted increasing attention in the pre-training community. \textsc{Muon} updates the model parameters by orthogonalizing the gradient momentum using Newton–Schulz iterations, effectively mitigating the collapse of the gradient rank. Initial studies demonstrate that \textsc{Muon} achieves promising results in small-scale LLM training, and subsequent works \cite{liu2025muon} further show that it can be scaled to the pre-training of large foundation models. Following this observation, many large foundation models \cite{team2025kimi,ding2025kimi,zeng2025glm,team2025kimivl} have adopted the Muon optimizer during pre-training and have achieved improved overall performance. Many other works also explore the efficiency, effectiveness, scalability, and interpretability of \textsc{Muon} \cite{bernstein2025deriving,khaled2025muonbp,amsel2025polar,li2025normuon,kovalev2025understanding}.  

Despite these successes, existing \textsc{Muon}-based approaches operate in a layer-wise manner, treating each layer independently and thereby neglecting correlations across layers. Motivated by this limitation, we propose \textbf{\textsc{Teon}}, a \underline{\textbf{Te}}sor-level generalization of \textsc{Mu\underline{\textbf{on}}} that extends gradient orthogonalization beyond individual layers. Rather than performing gradient orthogonalization on a per-layer basis, \textsc{Teon} jointly considers multiple layers and applies gradient orthogonalization on a higher-order tensor, enabling the optimizer to capture cross-layer correlations during training. We also develop a practical variant of \textsc{Teon}. As shown in Figure~\ref{fig:main_results_fig}, {\textsc{Teon} consistently outperforms \textsc{Muon}}.

We summarize our contributions as follows:
\begin{itemize}
    \item We propose \textbf{\textsc{Teon}}, a generalization of the \textsc{Muon} optimizer that extends gradient orthogonalization from individual matrices to structured tensors.
    \item We develop a theoretical analysis showing that \textsc{Teon} offers stronger convergence guarantees than \textsc{Muon}, providing algorithmic insights into practical design guidance. These guidelines are validated through extensive ablation studies to yield the best practical \textsc{Teon} for LLMs pre-training. 
    \item We conduct pre-training experiments comparing \textsc{Teon} and \textsc{Muon} across GPT-style and LLaMA-style models at various scales, under diverse training configurations, to demonstrate the effectiveness and robustness of \textsc{Teon}.
\end{itemize}
\vspace{-10pt}
\section{Preliminary}

\subsection{Layer-wise \textsc{Muon}}
\label{sec:layer-wise-muon}

Unlike Adam/SGD-based optimizers, \textsc{Muon} \cite{jordan2024muon} operates on a matrix rather than a vector. By enforcing orthogonalization on the layer-wise gradient, \textsc{Muon} prevents the rank collapse of the gradient by replacing the singular value matrix with an identity matrix. Let $\eta$ and $\mu$ denote the learning rate and the momentum coefficient, respectively. Assume that $\mat{W}_t \in \mathbb{R}^{m \times n}$ is the layer being adapted at iteration $t$, $\mat{G}_t \in \mathbb{R}^{m \times n}$ is its stochastic gradient, and $\mat{M}_t$ is the gradient momentum at iteration $t$. The \textsc{Muon} update is given by
\vspace{-3pt}
\begin{equation}
\begin{aligned}
\mat{M}_t &= \mu \mat{M}_{t-1} + (1 - \mu) \mat{G}_t \\
\mat{O}_t &= \mathrm{Ortho}(\mat{M}_t) \\
\mat{W}_{t} &= \mat{W}_{t-1} - \eta \cdot \sqrt{m/n} \cdot \mat{O}_t
\end{aligned}
\end{equation}
where $\mathrm{Ortho}(\cdot)$ denotes the semi-orthogonal matrix function closest to the input matrix \cite{higham2008functions}. Specifically, if the SVD of the input matrix $\mat{M}$ is $\mat{M} = \mat{U} \mat{\Sigma} \mat{V}^T$, then $\mathrm{Ortho}(\mat{M}) := \mat{U} \mat{V}^T$. In practice, the Newton-Schulz iteration process \cite{higham2008functions} is commonly used to approximate the SVD. The dimensional pre-factor $\sqrt{m/n}$ was suggested by \cite{bernstein2025deriving} for better scalability. In addition, various other methods have been explored to improve the accuracy and speed of this approximation \cite{amsel2025polar, cesista2025muonoptcoeffs}.

\vspace{-5pt}
\subsection{Tensor: notations and definitions}
\begin{figure}[t]
    \centering
    \includegraphics[width=\linewidth]{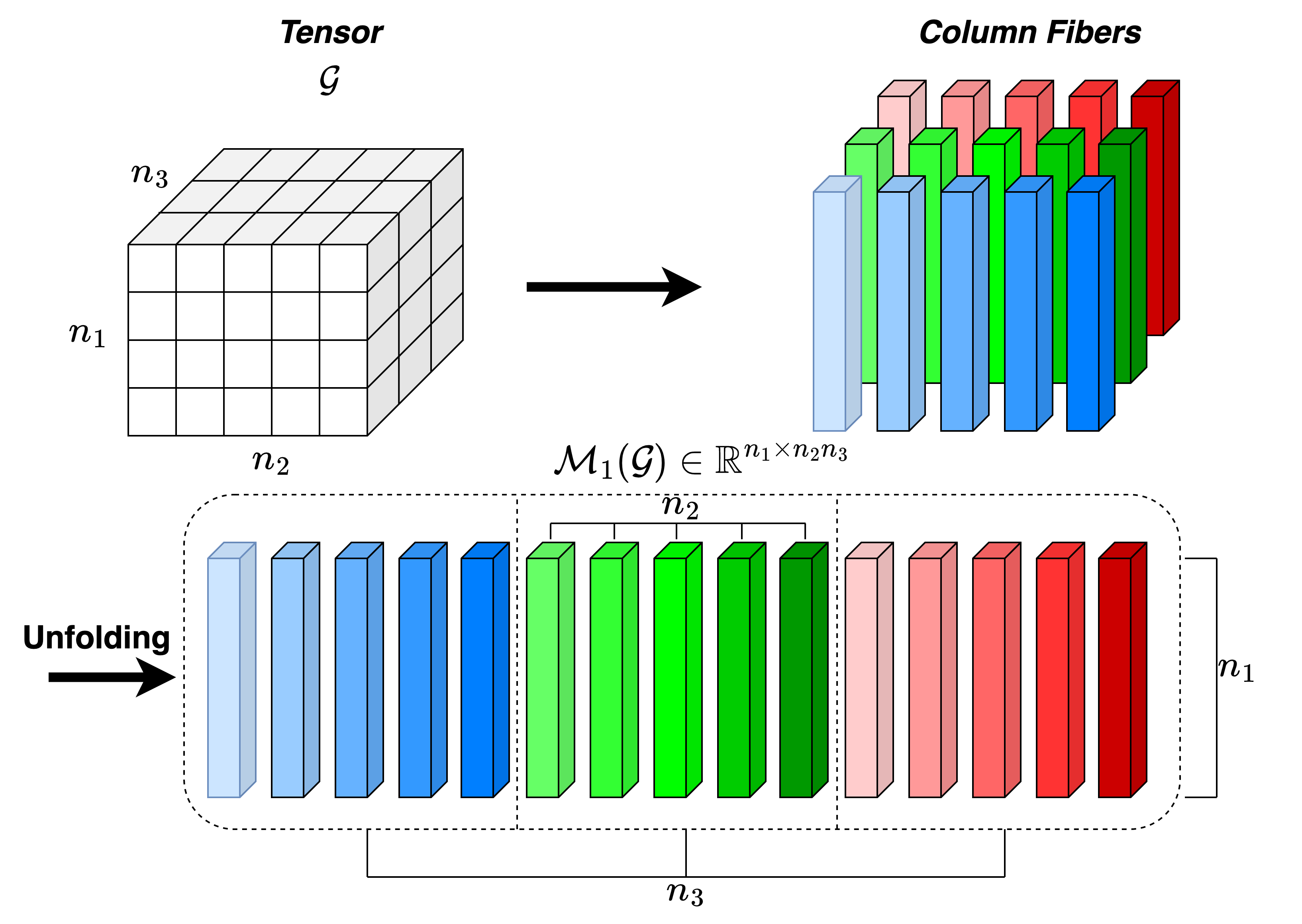}
    \caption{Mode-1 Matricization of a given tensor $\ten{G}$. $\ten{G}$ is first sliced into its column fibers, i.e., vectors obtained by fixing all indices except the first, and these fibers are then arranged as columns of a matrix to form the mode-1 unfolding.}
    \label{fig:mode1_unfolding}
    \vspace{-10pt}
\end{figure}
A tensor is a multidimensional array that generalizes vectors and matrices to higher orders \cite{kolda2009tensor}. The \emph{order} of a tensor refers to the number of its dimensions (or modes).
Let $\ten{G} \in \mathbb{R}^{n_1 \times n_2 \times \cdots \times n_d}$ denote an order-$d$ tensor, where $n_i$ is the size of the $i$-th mode. Let $g_{i_1 i_2 \cdots i_d}$ denote the $(i_1, i_2, \cdots, i_d)$-th element of $\ten{G}$, then for two tensors $\ten{G}$ and $\ten{W}$ with the same size, their inner product is defined as
\begin{equation}
    \langle \ten{G}, \ten{W} \rangle=\sum \limits_{i_1=1}^{n_1} \cdots \sum \limits_{i_d=1}^{n_d} g_{i_1 i_2 \cdots i_d} w_{i_1 i_2 \cdots i_d}.
\end{equation}
The Frobenius norm of a tensor $\ten{G}$ is defined as
\begin{equation}
    \| \ten{G}\|_{\rm F}=\sqrt{\langle \ten{G}, \ten{G} \rangle}.
\end{equation}

\paragraph{Matricization and folding.} A tensor can be unfolded along a selected mode to form a matrix, which is called matricization. The \emph{mode-$i$ matricization} of $\ten{G}$, denoted by $\mathcal{M}_i(\ten{G})$, reshapes the tensor into a matrix by unfolding $\ten{G}$ along its $i$-th mode. As shown in Figure~\ref{fig:mode1_unfolding}, the fibers of $\ten{G}$ along the $i$-th dimension are vectorized and stacked as the rows of the resulting matrix.
As a result, the mode-$i$ matricization has the shape
\[
\mathcal{M}_i(\ten{G}) \in \mathbb{R}^{n_i \times \left(\prod_{j \neq i} n_j\right)}.
\]
We use $\mathcal{M}_i^{-1}$ to denote the inverse operator, which converts the mode-$i$ matricized result back to the original tensor. 

\section{The \textsc{Teon} Method}
\label{sec:method}

We are inspired by the Fisher information matrix underlying natural gradient methods \cite{amari1998natural}. In principle, the optimal natural gradient is defined with respect to the Fisher metric of the {\it entire vectorized model parameters}, resulting in a dense and intractable matrix. Consequently, practical optimizers rely on various approximations.
For instance, Adam can be interpreted as using a diagonal approximation of the Fisher matrix, which ignores inter-parameter correlations \cite{hwang2024fadam}. \textsc{Muon} operates on layer-wise matrices rather than individual parameter vectors; however, it does not model inter-layer correlations, which is formally analogous to diagonal approximations used in Adam. Therefore, we ask the following question:
\begin{center}
\textit{Can we extend \textsc{Muon} to account for the correlations across multiple matrices?}
\end{center}

\subsection{From Layer-wise to Tensor-wise Modeling: The \textsc{Teon} Optimizer} 

\begin{figure}
    \centering
    
    \includegraphics[width=\linewidth]{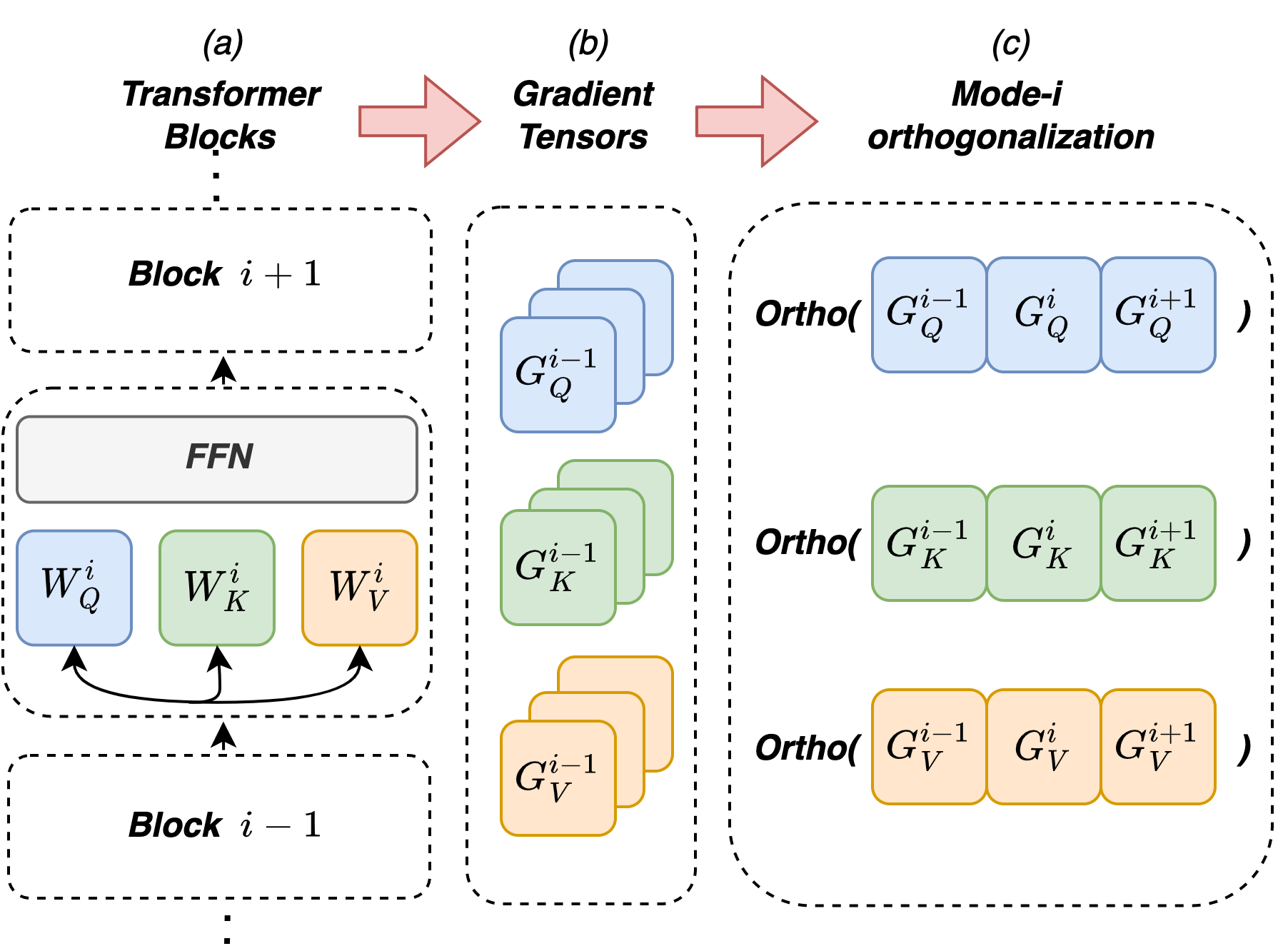}
    \caption{Overview of \textsc{Teon}. \textbf{(a) Transformer blocks}: Gradients are collected from $K$ successive layers ($K=3$ in this figure). \textbf{(b) Gradient tensors}: Gradients of the same layer type are stacked to form structured gradient tensors. \textbf{(c) Mode-$i$ orthogonalization}: The gradient tensor is matricized along mode $i$, followed by \textsc{Muon}-style orthogonalization on the resulting matrix, which is then used to update the parameters accordingly. }
    \label{fig:Teon}
    \vspace{-10pt}
\end{figure}
We now introduce \textsc{Teon}, a generalization of the \textsc{Muon} optimizer that extends layer-wise gradient orthogonalization to a tensor-based formulation, which attempts to capture correlations among gradients from multiple layers. A natural approach is to stack a set of layer gradients together into a single tensor and use \emph{tensor orthogonalization} to constrain the update direction. Specifically, \textsc{Teon} first stack gradients across multiple identical layers $\mat{G}^{(1)}, \dots, \mat{G}^{(K)},  \mat{G}^{(k)} \in \mathbb{R}^{m \times n}$  into an order-$3$ tensor:
\begin{equation}
    \text{Ten}(\mat{G}^1,\dots,\mat{G}^K)=\ten{G} \in \mathbb{R}^{m\times n\times K}, \ten{G}[:, :, \ell] = \mat{G}^{(k)}
\end{equation}
and then orthogonalize the gradient tensor $\ten{G}$. 

However, unlike matrix orthogonalization, there is no unique or universally accepted notion of tensor orthogonalization. Existing approaches typically rely on reducing the tensor to a matrix representation before applying matrix-based orthogonalization procedures.
In this work, we adopt a strategy known as \emph{matricization-based tensor orthogonalization}, where a tensor is first unfolded along a selected mode (e.g. Figure~\ref{fig:mode1_unfolding}) and then orthogonalized through the corresponding matrix representation. Specifically, for $i\in\{1,2,3\}$, let $\mathcal{M}_i(\ten{G})$ denote the mode-$i$ matricization. Then we have
\begin{equation}
\label{eq:mode-matricizations}
\begin{aligned}
\mathcal{M}_1(\ten{G}) &\in \mathbb{R}^{m\times (nK)}, \quad
\mathcal{M}_2(\ten{G}) \in \mathbb{R}^{n\times (mK)},\\
\mathcal{M}_3(\ten{G}) &\in \mathbb{R}^{K\times (mn)}.
\end{aligned}
\end{equation}

The mode-$i$ orthogonalization of tensor $\ten{G}$ is defined as 
\begin{equation}
\label{eq:mode-k-orthogonalization}
{
\mathcal{O}_i(\mathcal{G})
\;:=\;
\mathcal{M}_i^{-1}
\!\left(
\mathrm{Ortho}\!\left(
\mathcal{M}_i(\mathcal{G})
\right)
\right).
}
\end{equation}
where $\mathrm{Ortho}(\cdot)$ is the same orthogonalization operator used in Section~\ref{sec:layer-wise-muon}. Finally, as shown in Figure~\ref{fig:Teon}, the update step at time $t$ of $\textsc{Teon}$ is summarized as: 
\begin{equation}
\begin{aligned}
\ten{T}^l_t &= \text{Ten}(\mat{M_t^a},\dots,\mat{M^b_t})\\
\ten{O}^l_t &= \mathcal{M}^{-1}_i(\mathrm{Ortho}(\mathcal{M}_i(\ten{T}^l_t))) \\
\ten{W}^l_{t} &= \ten{W}^l_{t-1} - \eta \cdot \sqrt{m/n} \cdot \ten{O}^l_t
\end{aligned}
\end{equation}
Here, $\mat{M}^k_t$ represents the momentum matrix from the $k$-th layer, and $\ten{T}^l_t$ is the stacked momentum tensor formed by combining momentum matrices from the $a$-th to the $b$-th layer, where $l = b - a + 1$. Similarly, $\ten{W}^l_t = \mathrm{Ten}(\mat{W}^a_t, \dots, \mat{W}^b_t)$ denotes the stacked weight tensor over the same layer range. 
\subsection{Implementation of \textsc{Teon}}
\label{sec:implementation}
\begin{algorithm}[h]
\caption{\textsc{Teon} (mode-1 orthogonalization)}
\label{alg:teon_mode1}
\begin{algorithmic}[1]
\small
\REQUIRE Parameters $\{\mathbf{X}^{(k)}\}_{n=1}^N$, steps $T$, learning rate $\eta$, momentum $\mu$, group size $K$.
\ENSURE Updated parameters $\{\mathbf{X}^{(k)}_{T}\}_{n=1}^N$
\FOR{$t \leftarrow 0$ \textbf{to} $T-1$}
    \STATE Compute gradient $\{\mathbf{G}^{(k)}_{t}\}_{n=1}^N$ for $\{\mathbf{X}^{(k)}_{t}\}_{n=1}^N$
    \FOR{$i \leftarrow 0$ \textbf{to} $\lfloor N/K \rfloor$}
        \STATE Form tensor $\ten{G}_t\in\mathbb{R}^{m\times n\times K}$ with $\ten{G}_t[:,:,k] = \mathbf{G}^{(iK+k)}_{t}$
    
        \STATE $\ten{M}_t \leftarrow \mu\,\ten{M}_{t-1} + \ten{G}_t$
    
        \STATE $\mathbf{Z}_t \leftarrow \mathcal{M}_1(\ten{M}_t)$ 
        \STATE $\mathbf{Q}_t \leftarrow \mathrm{Ortho}(\mathbf{Z}_t)$ 
        \STATE $\ten{U}_t \leftarrow \mathcal{M}_1^{-1}(\mathbf{Q}_t)$
    
        \FOR{$k \leftarrow 1$ \textbf{to} $K$}
            \STATE $\mathbf{X}^{(iK+k)}_{t+1} \leftarrow \mathbf{X}^{(iK+k)}_{t} - \eta\,\sqrt{m/n}\,\ten{U}_t[:,:,k]$
        \ENDFOR
    \ENDFOR
\ENDFOR

% \RETURN $\{\mathbf{X}^{(k)}_{T}\}_{k=1}^K$
\end{algorithmic}
\end{algorithm}
The implementation of \textsc{Teon} involves the choice of some hyper-parameters. Specifically:
\begin{itemize}
\vspace{-5pt}
    \item {\bf Matricization mode $i$}: since mode $3$ has an extremely unbalanced and large size, we choose to matricize mode $1$ or $2$. In practice, we choose $i=1$ to achieve the best performance based on the theoretical analysis in Section~\ref{sec:main_theo_results} and the ablation study in Section~\ref{sec:ablation-mode}. 
    \vspace{-5pt}
    \item {\bf Number of stacking layers $K$}: the choice of $K$ depends on the trade-off between the potentially optimal performance gain and the actual correlation among stacked layers. We choose $K=2$ based on the theoretical analysis in Section~\ref{sec:main_theo_results} and the ablation study in Section~\ref{sec:ablation_layer_number}.
    \vspace{-5pt}
    \item {\bf The layers selected for stacking}: for transformer architecture, we stack the gradient matrices of $\mat{Q}$, $\mat{K}$ and $\mat{V}$ from two consecutive layers. This is because they have the strongest correlated top singular vectors, which is supported by the theory in Section~\ref{sec:main_theo_results} and the ablation study in Section~\ref{sec:ablation_layer_type}. 
\end{itemize}

In the next section, we will explain the theoretical benefit of \textsc{Teon} and justify the above implementation choices. 

\section{Theoretical Analysis of \textsc{Teon}}
\label{sec:Teon-theory}
In this section, we provide some key theoretical analysis to answer the following key questions: 
\begin{itemize}

\vspace{-5pt}
    \item Why can $\textsc{Teon}$ offer stronger convergence guarantees than $\textsc{Muon}$?
    \vspace{-5pt}
    \item Why should we choose the algorithm design parameters as in Section~\ref{sec:implementation}?
\end{itemize}

We first provide the key results and their practical implications in Section~\ref{sec:main_theo_results}, then provide a sketch of proof in Section~\ref{subsec:proof}.  

\subsection{Main Theoretical Results}
\label{sec:main_theo_results}

\subsubsection{Definitions and Assumptions}
\label{sec:definitions_and_assumptions}

We study \textsc{Teon} and \textsc{muon} based on the Non-Euclidean Trust Region (NTR) formulation \cite{kovalev2025understanding}, a template for steepest-descent in Non-Euclidean norms. We extend the analysis when the objective $\ten{W}\in\mathbb{R}^{m\times n\times K}$ is a tensor of stacked parameters.

We first define \textsc{Muon} and \textsc{Teon} norms as follows.

\begin{definition}[\textsc{Muon} and \textsc{Teon} norms]
\label{lem:norm_defs}
Let $\ten{X}\in\mathbb{R}^{m\times n\times K}$ with slices
$\ten{X}[:,:,k]=\mat{X}^{(k)}\in\mathbb{R}^{m\times n}$.
For $i\in\{1,2,3\}$, let $\mathcal{M}_i(\ten{X})$ denote the mode-$i$ matricization.
We define a \textsc{Teon} family of norms parameterized by the mode-$i$ matricization:
% \resizebox{\columnwidth}{!}{
$$
\|\ten{X}\|_{\textsc{teon}-i}
:= \|\mathcal{M}_i(\ten{X})\|_{\mathrm{op}},
\|\ten{X}\|_{\textsc{teon}-i,*}
:= \|\mathcal{M}_i(\ten{X})\|_*.
$$
% }
We also define the \textsc{Muon} norm and its dual:
% \resizebox{\columnwidth}{!}{
$$
\|\ten{X}\|_{\textsc{muon}}
:= \max_{k\in[K]}\|\mat{X}^{(k)}\|_{\mathrm{op}},
\|\ten{X}\|_{\textsc{muon},*}
:= \sum_{k=1}^K \|\mat{X}^{(k)}\|_* .
$$
% }
Detailed derivations are provided in Appendix \ref{app:norms}. 
\end{definition}

We adopt standard assumptions under the NTR framework following \cite{kovalev2025understanding, khaled2025muonbp}.

\begin{assumption}[Smoothness]
\label{ass:teon_smooth}
Let $\|\cdot\|$ be a norm on $\mathbb{R}^{m\times n\times L}$ with dual $\|\cdot\|_{*}$. The objective $f$ is $L$-smooth w.r.t.\ $\|\cdot\|$ if for all $\ten{W},\ten{W}'$,
\begin{equation}
\small
\label{eq:smooth_geom}
\|\nabla f(\ten{W})-\nabla f(\ten{W}')\|_{*}
\;\le\;
L\ \|\ten{W}-\ten{W}'\|.
\normalsize
\end{equation}
\end{assumption}

\begin{assumption}[Unbiased gradients and bounded variance]
\label{ass:teon_var}
Stochastic gradients $\ten{G}(\ten{W};\xi)$ satisfy
$\mathbb{E}_\xi[\ten{G}(\ten{W};\xi)]=\nabla f(\ten{W})$
and
$\mathbb{E}_\xi\!\left[\|\ten{G}(\ten{W};\xi)-\nabla f(\ten{W})\|_F^2\right]\le \sigma^2$.
\end{assumption}

\begin{assumption}[Norm--Frobenius equivalence]
\label{ass:teon_normeq}
There exists $\rho>0$ such that $\|\ten{X}\| \le \rho \|\ten{X}\|_F$.
\end{assumption}

With a chosen norm and its dual norm $(\|\cdot\|,\|\cdot\|_{*})$, each NTR step with momentum can be written as follows:
\begin{equation}
\label{eq:teon_ntr}
\begin{aligned}
\ten{M}_t &\ =\mu\ten{M}_{t-1}+\ten{G}_t,
\\
\ten{W}_{t+1} &\ =\argmin \limits_{\|\ten{W}-\ten{W}_t\|\le \eta}
\langle \ten{M}_t,\ten{W}-\ten{W}_t\rangle,
\end{aligned}
\end{equation}
where $\ten{G}_t$ is a stochatic gradient, $\mu\in[0,1)$ and $\eta>0$. The NTR step with the norms defined in Definition \ref{def:muon_norm} recovers the corresponding \textsc{Teon} and \textsc{Muon} updates in Appendix~\ref{app:teon-muon-steepest}.

\subsubsection{Key Results.} 

The first theorem shows that \textsc{Teon} can achieve the same or up to $\sqrt{K}\times$ better convergence bound than \textsc{Muon}.
\begin{theorem}[Convergence Bound]
Consider minimizing $f(\ten{W}), \ten{W}\in \mathbb{R}^{m\times n\times K}$ and suppose $f$ is lower bounded by $f^\star$. 
Define the best iterates of \textsc{Muon} and \textsc{Teon} as
$\tau_{\textsc{muon}} := \arg\min_{0\le t < T}\|\nabla f(\ten{W}_t)\|_{\textsc{muon},*}$
and
$\tau_{\textsc{teon}} := \arg\min_{0\le t < T}\|\nabla f(\ten{W}_t)\|_{\textsc{teon},*}$.

Under the same \textsc{teon} norm (c.f. Definition \ref{lem:norm_defs}), the convergence bounds satisfy:
\label{thm:convergence_bound_main}
\begin{align}
\|\nabla f(\ten{W}_{\tau_{\textsc{teon}}})\|_{\textsc{teon},*}
&\;\le\;
\sqrt{\frac{2L_{\textsc{teon}}\left(f(\ten{W}_0)-f^\star\right)}{T}}, \nonumber
% \label{eq:teon_convergence_guarantee}
\\
\|\nabla f(\ten{W}_{\tau_{\textsc{muon}}})\|_{\textsc{teon},*}
&\;\le\;
\sqrt{\frac{2L_{\textsc{muon}}\left(f(\ten{W}_0)-f^\star\right)}{T}} . \nonumber
% \label{eq:muon_convergence_upper}
\end{align}

The smoothness constants satisfy
$
L_{\textsc{teon}} \le L_{\textsc{muon}} \le K \, L_{\textsc{teon}}
$. 
\end{theorem}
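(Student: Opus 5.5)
The plan is to run the standard deterministic steepest-descent argument for the NTR step \eqref{eq:teon_ntr} once in a generic norm. I would then specialize it to the \textsc{Teon} and \textsc{Muon} norms and finish with norm-comparison inequalities between the two geometries. Throughout I take the \textsc{Teon} norm to be $\|\cdot\|_{\textsc{teon}-1}$ (mode $2$ is identical after transposing slices). I define $L_{\textsc{teon}}$ and $L_{\textsc{muon}}$ as the smallest constants for which Assumption~\ref{ass:teon_smooth} holds in the respective norm pairs. Since the displayed bounds contain no $\sigma$, I treat the exact-gradient case ($\ten{G}_t=\nabla f(\ten{W}_t)$, $\mu=0$). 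The stochastic/momentum version would add the usual variance terms via Assumptions~\ref{ass:teon_var}--\ref{ass:teon_normeq}, and I would not pursue it here.

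\emph{Step 1 (generic rate).} For a norm $\|\cdot\|$ with dual $\|\cdot\|_*$, the minimizer of $\langle \nabla f(\ten{W}_t),\Delta\rangle$ over $\|\Delta\|\le\eta$ attains the value $-\eta\|\nabla f(\ten{W}_t)\|_*$. Combining this with the descent lemma implied by $L$-smoothness,
\[
f(\ten{W}_{t+1})\le f(\ten{W}_t)-\eta\|\nabla f(\ten{W}_t)\|_*+\tfrac{L\eta^2}{2}.
\]
Telescoping over $t<T$ and using $f\ge f^\star$ gives $\min_t\|\nabla f(\ten{W}_t)\|_*\le \frac{f(\ten{W}_0)-f^\star}{\eta T}+\frac{L\eta}{2}$. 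Choosing $\eta=\sqrt{2(f(\ten{W}_0)-f^\star)/(LT)}$ yields $\sqrt{2L(f(\ten{W}_0)-f^\star)/T}$. With the \textsc{Teon} pair this is exactly the first claimed bound. With the \textsc{Muon} pair it gives the same bound for $\|\nabla f(\ten{W}_{\tau_{\textsc{muon}}})\|_{\textsc{muon},*}$ with $L_{\textsc{muon}}$.

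\emph{Step 2 (norm comparisons).} Since $\mathcal{M}_1(\ten{X})=[\mat{X}^{(1)},\dots,\mat{X}^{(K)}]$ is a horizontal concatenation, two facts follow. First, the operator norm of the concatenation dominates each block, and $\mathcal{M}_1\mathcal{M}_1^\top=\sum_k\mat{X}^{(k)}\mat{X}^{(k)\top}$ has norm at most $K\max_k\|\mat{X}^{(k)}\|_{\mathrm{op}}^2$. Together these give $\|\ten{X}\|_{\textsc{muon}}\le\|\ten{X}\|_{\textsc{teon}}\le\sqrt K\|\ten{X}\|_{\textsc{muon}}$. Second, dualizing (or directly using the triangle inequality for the nuclear norm on the zero-padded blocks) gives $\|\ten{X}\|_{\textsc{teon},*}\le\|\ten{X}\|_{\textsc{muon},*}\le\sqrt K\|\ten{X}\|_{\textsc{teon},*}$. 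The left inequality immediately converts the \textsc{Muon} rate of Step 1 into the second claimed bound, which is measured in $\|\cdot\|_{\textsc{teon},*}$.

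\emph{Step 3 (smoothness constants).} Write $\Delta g=\nabla f(\ten{W})-\nabla f(\ten{W}')$ and $\Delta\ten{W}=\ten{W}-\ten{W}'$. For the lower bound,
\[
\|\Delta g\|_{\textsc{teon},*}\le\|\Delta g\|_{\textsc{muon},*}\le L_{\textsc{muon}}\|\Delta\ten{W}\|_{\textsc{muon}}\le L_{\textsc{muon}}\|\Delta\ten{W}\|_{\textsc{teon}},
\]
so $L_{\textsc{teon}}\le L_{\textsc{muon}}$. For the upper bound,
\[
\|\Delta g\|_{\textsc{muon},*}\le\sqrt K\|\Delta g\|_{\textsc{teon},*}\le\sqrt K L_{\textsc{teon}}\|\Delta\ten{W}\|_{\textsc{teon}}\le K L_{\textsc{teon}}\|\Delta\ten{W}\|_{\textsc{muon}},
\]
so $L_{\textsc{muon}}\le K L_{\textsc{teon}}$. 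Combined with Step 1, this gives the advertised $\sqrt K$ gap in the bounds.

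The main obstacle I expect is the $\sqrt K$ dual comparison $\sum_k\|\mat{X}^{(k)}\|_*\le\sqrt K\|[\mat{X}^{(1)},\dots,\mat{X}^{(K)}]\|_*$. I would prove it by duality from $\|\cdot\|_{\textsc{teon}}\le\sqrt K\|\cdot\|_{\textsc{muon}}$, using the characterizations of the dual norms in Appendix~\ref{app:norms}. A second point to state carefully is the interpretation that both methods are evaluated in the same $\|\cdot\|_{\textsc{teon},*}$ measure. This is what makes the \textsc{Muon} bound a comparison rather than a statement in its own geometry.
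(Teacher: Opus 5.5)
Your proposal is correct and follows essentially the same route as the paper. Both arguments use the simplified ($\sigma=0$, $\mu=0$) steepest-descent rate $\sqrt{2L\Delta_0/T}$ in each norm, the comparisons $\|\cdot\|_{\textsc{muon}}\le\|\cdot\|_{\textsc{teon}-i}\le\sqrt K\|\cdot\|_{\textsc{muon}}$ together with their duals, the same chain of inequalities giving $L_{\textsc{teon}}\le L_{\textsc{muon}}\le K L_{\textsc{teon}}$, and the conversion of the \textsc{Muon} bound via $\|\cdot\|_{\textsc{teon},*}\le\|\cdot\|_{\textsc{muon},*}$; the differences are cosmetic, namely that you derive the deterministic rate directly from the descent lemma rather than specializing the cited NTR theorem, and you bound $\|\mathcal{M}_1(\ten{X})\|_{\mathrm{op}}$ through $\mathcal{M}_1(\ten{X})\mathcal{M}_1(\ten{X})^\top=\sum_k \mat{X}^{(k)}\mat{X}^{(k)\top}$ instead of the paper's block-vector Cauchy--Schwarz argument.
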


When the smoothness constant $L_{\textsc{teon}}\approx L_{\textsc{muon}}$, the convergence guaranties of \textsc{the teon} and \textsc{muon} match up to constants. 
In the best case, \textsc{Teon} has an improved convergence bound over \textsc{Muon} by $\sqrt{K} \times$. 

The following Proposition shows the maximal benefit that \textsc{Teon} can achieve over \textsc{Muon}.

\begin{proposition}[Maximal Gain (Informal)]
\label{prop:maximal_gain}
Let $\{\mat{G}^{(k)}\}_{k=1}^K$ be the $K$ layer gradients of the same matrix type, and let
$\mat{u}_1^{(k)}, \mat{v}_1^{(k)}$ denote the leading left/right singular vectors of $\mat{G}^{(k)}$.
\begin{itemize}
    \item If the leading right singular vectors are aligned across layers, i.e.,
$\langle \mat{v}_1^{(k)}, \mat{v}_1^{(k')}\rangle \to 1$ for all $k,k'$,
then \textsc{Teon} with mode-1 orthogonalization approaches the maximal $\sqrt{K}\times$ improvement over \textsc{Muon}.

\item If the leading left singular vectors are aligned across layers, i.e.,
$\langle \mat{u}_1^{(k)}, \mat{u}_1^{(k')}\rangle \to 1$ for all $k,k'$,
then \textsc{Teon} with mode-2 orthogonalization approaches the maximal $\sqrt{K} \times$ improvement over \textsc{Muon}.
\end{itemize}
\end{proposition}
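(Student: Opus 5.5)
Within the framework of Theorem~\ref{thm:convergence_bound_main}, the gain of \textsc{Teon} over \textsc{Muon} is governed by the ratio $L_{\textsc{muon}}/L_{\textsc{teon}}\in[1,K]$. Both constants are, to leading order, controlled by how the dual (nuclear-type) norms compare on the gradient differences that drive smoothness. The plan is therefore to show that, under the alignment hypothesis, the two dual norms of the relevant gradient tensors differ by exactly a factor $\sqrt{K}$ in the favorable direction. This turns the $\sqrt{K}$ improvement in the bound $\sqrt{2L(f(\ten W_0)-f^\star)/T}$ into a statement about singular values of matricizations.

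\textbf{Step 1 (mode-1 case, rank-one model).} Take the idealized regime in which each slice is dominated by its top component, $\mat G^{(k)} \approx \sigma_k \mat u_1^{(k)}\mat v_1^{(k)\top}$. The mode-1 unfolding is the horizontal concatenation
\[
\mathcal M_1(\ten G)=[\mat G^{(1)},\dots,\mat G^{(K)}].
\]
Its Gram matrix is $\mathcal M_1\mathcal M_1^\top=\sum_k \mat G^{(k)}\mat G^{(k)\top}$, and the cross terms between slices involve $\langle \mat v_1^{(k)},\mat v_1^{(k')}\rangle$.

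The interpolation inequalities, with $\|\mathcal M_1\|_*\le\sum_k\|\mat G^{(k)}\|_*$ and $\|\mathcal M_1\|_{\mathrm{op}}\ge \max_k\|\mat G^{(k)}\|_{\mathrm{op}}$, are what yield $L_{\textsc{teon}}\le L_{\textsc{muon}}\le K L_{\textsc{teon}}$. I will identify when they are tight at the lower extreme. Concretely, when the right factors coincide, $\mat v_1^{(k)}\to \mat v$, the concatenation becomes $\mathcal M_1 \approx [\sigma_1\mat u_1^{(1)},\dots]$ acting block-diagonally on a common $\mat v$. Combined with the same-type-layer condition that the rows share structure, $\mathcal M_1$ collapses to effectively rank one. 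Its nuclear norm then equals its operator norm $\sqrt{\sum_k\sigma_k^2}$, whereas $\sum_k\|\mat G^{(k)}\|_*=\sum_k\sigma_k$. With comparable $\sigma_k$ this gives the ratio $\sqrt{K}$, the extreme allowed by Cauchy--Schwarz.

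\textbf{Step 2 (transfer to smoothness constants).} I will write $L_{\textsc{teon}}$ and $L_{\textsc{muon}}$ as the best constants in Assumption~\ref{ass:teon_smooth}. These are suprema of $\|\Delta\nabla f\|_{*}/\|\Delta\ten W\|$, and I will argue that the ratio of the two norms on the aligned gradient differences is the one computed in Step 1. For the informal proposition it suffices to exhibit that the near-extremal direction is realized. A quadratic model $f(\ten W)=\tfrac12\langle \ten W,\mathcal H\ten W\rangle$ with the aligned rank-one structure works: for it both constants are explicit, and their ratio tends to $K$ as the overlaps $\langle\mat v_1^{(k)},\mat v_1^{(k')}\rangle\to1$. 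Plugging this into Theorem~\ref{thm:convergence_bound_main} gives the $\sqrt{K}$ factor.

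\textbf{Step 3 (mode-2 case).} The mode-2 case follows by symmetry. $\mathcal M_2(\ten G)=[\mat G^{(1)\top},\dots,\mat G^{(K)\top}]$, so transposition swaps the roles of $\mat u$ and $\mat v$, and the same argument applies with left singular vectors.

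The main obstacle is Step 2: making rigorous that alignment of leading singular vectors forces the smoothness constants, which are worst-case quantities over all pairs $\ten W,\ten W'$, into the extremal ratio $K$. For that reason I would keep the statement at the informal, model-based level, proving it exactly for the rank-one/quadratic model and arguing by continuity in the overlaps $\langle\mat v_1^{(k)},\mat v_1^{(k')}\rangle$ that the gain approaches $\sqrt K$.
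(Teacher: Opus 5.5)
Your proposal has a genuine gap in Step 1: the computation does not use the stated hypothesis, and done correctly it points to the other mode. With the paper's convention $\mathcal M_1(\ten X)=[\mat X^{(1)},\dots,\mat X^{(K)}]$ (Lemma~\ref{lem:comp-i1}), one has $\mathcal M_1\mathcal M_1^\top=\sum_k \mat G^{(k)}\mat G^{(k)\top}$.

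\begin{itemize}
\item \textbf{No cross-slice terms.} In your rank-one model this sum equals $\sum_k\sigma_k^2\,\mat u_1^{(k)}\mat u_1^{(k)\top}$, so the overlaps $\langle \mat v_1^{(k)},\mat v_1^{(k')}\rangle$ never appear. The right vectors sit in disjoint column blocks of $\mathcal M_1$.
\item \textbf{Only left singular data matters.} The singular values of $\mathcal M_1(\ten G)$, and hence both $\|\cdot\|_{\textsc{teon}-1}$ and $\|\cdot\|_{\textsc{teon}-1,*}$, depend only on the left singular data of the slices. Replacing each $\mat G^{(k)}$ by $\mat G^{(k)}\mat Q_k$ with $\mat Q_k$ orthogonal changes nothing in your computation, nor the \textsc{Muon} norms. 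Yet it can make the right vectors aligned or not at will.
\item \textbf{Your own factorization confirms this.} For a shared unit $\mat v$, $\mathcal M_1=[\sigma_1\mat u_1^{(1)},\dots,\sigma_K\mat u_1^{(K)}]\,\mathrm{blockdiag}(\mat v^\top,\dots,\mat v^\top)$. The second factor has orthonormal rows, so $\mathcal M_1$ has exactly the singular values of $[\sigma_1\mat u_1^{(1)},\dots,\sigma_K\mat u_1^{(K)}]$.
\item \textbf{The rank-one collapse comes from an extra assumption.} Your condition that ``the rows share structure'' is alignment of the \emph{left} vectors, which is not the hypothesis of the first bullet.
\item \textbf{The intended regime gives no gain.} Take shared $\mat v$ with (nearly) orthonormal $\mat u_1^{(k)}$, as in Lemma~\ref{lem:teon_max_gain_modes} and as reported for QKV in Figure~\ref{fig:svd_top}. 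Then $\|\mathcal M_1(\ten G)\|_*=\sum_k\sigma_k=\|\ten G\|_{\textsc{muon},*}$, a ratio of $1$. For exactly orthonormal $\mat u_1^{(k)}$, $\mathrm{Ortho}(\mathcal M_1(\ten G))$ coincides slice by slice with the \textsc{Muon} update.
\end{itemize}

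Carried out consistently, your Steps 1 and 3 prove the pairing with the modes interchanged. Left alignment saturates the mode-1 comparison. Right alignment saturates the mode-2 comparison, via $\mathcal M_2\mathcal M_2^\top=\sum_k\sigma_k^2\,\mat v_1^{(k)}\mat v_1^{(k)\top}$. No argument routed through the norms of $\mathcal M_1$ can extract a gain from right alignment.

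The paper's proof has the same outline as yours: rank-one idealization, tightness of a norm comparison, transfer to $L$, and symmetry for mode 2. It works with the primal norm in the setting $\mat G^{(k)}=\mat u^{(k)}\mat v^\top$ with orthonormal $\mat u^{(k)}$, and asserts $\|\mathcal M_1(\ten G)\|_{\mathrm{op}}=\sqrt K\|\mat v\|_2$. That step fails for the same reason. With $\mat U=[\mat u^{(1)},\dots,\mat u^{(K)}]$, one gets $\mathcal M_1\mathcal M_1^\top=\|\mat v\|_2^2\,\mat U\mat U^\top$, so $\|\mathcal M_1(\ten G)\|_{\mathrm{op}}=\|\mat v\|_2=\|\ten G\|_{\textsc{muon}}$. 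You therefore cannot patch Step 1 by borrowing the paper's computation.

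Your Step 2 is also only asserted. Smoothness constants are properties of the Hessian, not of the gradient at one point. For a quadratic $f$, the equality $L_{\textsc{muon}}=K\,L_{\textsc{teon}-1}$ needs a near-maximizing direction $\Delta$ of the \textsc{Teon} ratio at which both $\Delta$ and $\mathcal H\Delta$ saturate their comparisons (Lemma~\ref{lem:comp-i1} and Corollary~\ref{cor:dual-i12}). For mode 1, each saturation condition forces the slices to share a common direction in their column spaces, so it is again a left-vector condition. You would have to exhibit such an $\mathcal H$ explicitly, rather than appeal to continuity in the $\mat v$-overlaps, which do not enter the mode-1 quantities at all.
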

\subsubsection{Implication in Algorithm Design}
\label{sec:guidance}

Proposition~\ref{prop:maximal_gain} offers valuable insight on how to implement \textsc{Teon} in practice: 
\begin{itemize}
\vspace{-5pt}
    \item {\bf Implication 1: which layer to stack?} Proposition~\ref{prop:maximal_gain} shows that we should stack the gradient matrices that exhibit strongly aligned top left (or right) singular vectors. As will be shown in Section~\ref{sec:ablation_layer_type}, such a strong alignment exists in the $\mat{Q}$, $ \mat{K}$, and $\mat{V}$ matrices of transformers. 
    \vspace{-5pt}
    \item {\bf Implication 2: how many layers to stack?} Theorem \ref{thm:convergence_bound} suggests that stacking more layers (larger $K$) achieves a greater potential maximum performance gain.
Moreover, stacking more gradient matrices reduces the number of calls for SVD functions and thus the overall computational cost. However, Proposition \ref{prop:maximal_gain} shows that to approach the best performance gains, all $K$ matrices should have aligned top singular vectors. Therefore, the practical choice of $K$ involves a trade-off, which is studied in Section~\ref{sec:ablation_layer_number}.
\vspace{-5pt}
    \item {\bf Implication 3: along which mode to matricize the tensor?} According to Proposition \ref{prop:maximal_gain}, we should perform the mode-1 orthogonalization when the top right singular vectors ($\mathbf{v}_{1}$) have high similarity; and we should perform the mode-2 orthogonalization when the top left singular vectors ($\mathbf{u}_{1}$) have high similarity. The study in Section~\ref{sec:ablation_mode} shows that mode-1 orthogonalization works better in practice.
\end{itemize}

\subsection{Sketch of Proofs}
\label{subsec:proof}
We provide a more detailed convergence analysis of \textsc{Teon} based on NTR \cite{kovalev2025understanding}. We follow the NTR convergence bound derived by \cite{khaled2025muonbp}.

\begin{theorem}[NTR convergence under $\|\cdot\|$]

\label{thm:teon_ntr_full}
Under Assumptions~\ref{ass:teon_smooth}--\ref{ass:teon_normeq} and suppose $f$ is lower bounded by $f^\star$,
the iterates of~\eqref{eq:teon_ntr} satisfy
\begin{equation}
\label{eq:teon_ntr_bound}
\resizebox{\columnwidth}{!}{$
\begin{aligned}
&\ \mathbb{E}\!\left[\min_{t<T}\|\nabla f(\ten{W}_t)\|_{*}\right]
\;\le\;
\frac{\Delta_0}{\eta T}
\;+\;
3\sqrt{\frac{L\Delta_0}{T}}\cdot\frac{\mu}{1-\mu}
\\
&\ +\;
\frac{L\eta}{2}
\;+\;
L\eta\cdot\frac{\mu}{1-\mu}
\;+\;
\frac{2(1-\mu)\rho\sigma}{T}
\; +\;
\rho\sigma\sqrt{\frac{1-\mu}{1+\mu}} .
\end{aligned}
$}
\end{equation}
where $\Delta_0:=f(\ten{W}_0)-f^\star$.
\end{theorem}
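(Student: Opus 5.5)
The argument is the standard one-step descent analysis for the non-Euclidean trust-region step \eqref{eq:teon_ntr}, combined with a bias--variance bound on the momentum error in the dual norm. We do not track the paper's exact normalization of the momentum. For the leading term to be $\Delta_0/(\eta T)$ we need to work with the normalized buffer $\ten{M}_t=\mu\ten{M}_{t-1}+(1-\mu)\ten{G}_t$. Since the argmin in \eqref{eq:teon_ntr} is invariant to positive rescaling of $\ten{M}_t$, this is harmless.

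\textbf{Step 1 (descent inequality).} Let $\ten{D}_t=\ten{W}_{t+1}-\ten{W}_t$. By construction $\|\ten{D}_t\|\le\eta$ and $\langle\ten{M}_t,\ten{D}_t\rangle=-\eta\|\ten{M}_t\|_*$. Assumption~\ref{ass:teon_smooth} gives
\[
f(\ten{W}_{t+1})\le f(\ten{W}_t)+\langle\nabla f(\ten{W}_t),\ten{D}_t\rangle+\tfrac{L\eta^2}{2}.
\]
Write $\nabla f(\ten{W}_t)=\ten{M}_t+\ten{E}_t$, where $\ten{E}_t:=\nabla f(\ten{W}_t)-\ten{M}_t$. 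Then
\[
\langle\nabla f,\ten{D}_t\rangle\le-\eta\|\ten{M}_t\|_*+\eta\|\ten{E}_t\|_*\le-\eta\|\nabla f(\ten{W}_t)\|_*+2\eta\|\ten{E}_t\|_*.
\]
Summing over $t<T$, dividing by $\eta T$, and bounding the minimum by the average yields
\[
\mathbb{E}\min_t\|\nabla f(\ten{W}_t)\|_*\le\frac{\Delta_0}{\eta T}+\frac{L\eta}{2}+\frac{2}{T}\sum_t\mathbb{E}\|\ten{E}_t\|_*.
\]
This already accounts for the first and third terms of \eqref{eq:teon_ntr_bound}.

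\textbf{Step 2 (error recursion).} Set $\ten{\xi}_t=\ten{G}_t-\nabla f(\ten{W}_t)$. Unrolling gives
\[
\ten{E}_t=\mu^{t}\bigl(\nabla f(\ten{W}_t)-\ten{M}_{0}\text{-term}\bigr)+\sum_{s\le t}\mu^{t-s}\bigl(\nabla f(\ten{W}_t)-\nabla f(\ten{W}_s)\bigr)(1-\mu)-(1-\mu)\sum_{s\le t}\mu^{t-s}\ten{\xi}_s.
\]
We bound the three pieces separately.

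\emph{Drift piece.} Smoothness and $\|\ten{W}_t-\ten{W}_s\|\le(t-s)\eta$ give $\|\nabla f(\ten{W}_t)-\nabla f(\ten{W}_s)\|_*\le L\eta(t-s)$. Then
\[
(1-\mu)\sum_j\mu^j jL\eta=L\eta\frac{\mu}{1-\mu},
\]
which after the factor bookkeeping produces the $L\eta\,\mu/(1-\mu)$ term.

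\emph{Noise piece.} We pass from the dual norm to Frobenius using Assumption~\ref{ass:teon_normeq}, i.e.\ $\|\cdot\|_*\le\rho\|\cdot\|_F$ for the relevant pairing. We then use Jensen and the independence (martingale structure) of the $\ten{\xi}_s$ under Assumption~\ref{ass:teon_var}:
\[
\mathbb{E}\Bigl\|(1-\mu)\sum_s\mu^{t-s}\ten{\xi}_s\Bigr\|_F\le(1-\mu)\sigma\Bigl(\sum_j\mu^{2j}\Bigr)^{1/2}\le\sigma\sqrt{\frac{1-\mu}{1+\mu}}.
\]
This gives the $\rho\sigma\sqrt{(1-\mu)/(1+\mu)}$ term.

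\emph{Initialization transient.} The transient from $\ten{M}_{-1}=0$ decays like $\mu^t$. Summed over $t$ it contributes $O(1/((1-\mu)T))$ times the initial error. The initial error consists of $\|\nabla f(\ten{W}_0)\|_*$, which we control via $\|\nabla f(\ten{W}_0)\|_*\le\sqrt{2L\Delta_0}$ (the standard smoothness/lower-bound inequality), plus a single-sample noise term $\rho\sigma$. This produces the $3\sqrt{L\Delta_0/T}\cdot\mu/(1-\mu)$ and $2(1-\mu)\rho\sigma/T$ terms. It may require a slightly different weighting or a Cauchy--Schwarz split between $1/T$ and $1/\sqrt{T}$ to match the stated form.

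\textbf{Main obstacle.} The delicate part is the initialization and transient bookkeeping, i.e.\ getting exactly the constants $3\sqrt{L\Delta_0/T}$ and $2(1-\mu)\rho\sigma/T$. The drift and steady-state noise terms are routine. Following \cite{khaled2025muonbp}, we would first try the approach above: bound $\sum_t\mu^t\|\ten{E}_0\|_*\le\|\ten{E}_0\|_*/(1-\mu)$ and use $\|\nabla f(\ten{W}_0)\|_*\le\sqrt{2L\Delta_0}$. If the constants do not align, we would fall back to the averaged-then-min bound with a looser constant, since only the rates matter for Theorem~\ref{thm:convergence_bound_main}.
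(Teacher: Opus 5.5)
The paper does not prove this theorem; it imports the bound from \cite{khaled2025muonbp}. Your descent-lemma-plus-momentum-error decomposition is the natural reconstruction, but as written it has two genuine gaps.

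The first gap is the noise step. Assumption~\ref{ass:teon_normeq} bounds the \emph{primal} norm, $\|\ten{X}\|\le\rho\|\ten{X}\|_F$. Dualizing it gives $\|\ten{X}\|_F\le\rho\|\ten{X}\|_*$, which is the reverse of the inequality $\|\cdot\|_*\le\rho\|\cdot\|_F$ you invoke to bound $\mathbb{E}\|\ten{N}_t\|_*$, where $\ten{N}_t:=(1-\mu)\sum_{s\le t}\mu^{t-s}(\ten{G}_s-\nabla f(\ten{W}_s))$. Your ``i.e.'' is therefore a change of hypothesis, not a consequence of the stated one. Under the assumption as literally written the step fails. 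For the spectral norm, Lemma~\ref{lem:rho_muon_teon} gives $\rho=1$, while isotropic noise with $\mathbb{E}\|\cdot\|_F^2=\sigma^2$ has nuclear norm of order $\sqrt{\min(m,n)}\,\sigma$. What your argument actually needs is $\|\ten{X}\|_F\le\rho\|\ten{X}\|$ for all $\ten{X}$ (equivalently $\|\cdot\|_*\le\rho\|\cdot\|_F$). This is the form used in the NTR literature, and it gives $\rho=\sqrt{\min(m,n)}$, not $1$, for the spectral norm. You should state explicitly that you prove the theorem under that hypothesis.

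The second gap is the constants. Step~1 pays $\|\ten{E}_t\|_*$ twice: once to pass from $\|\ten{M}_t\|_*$ to $\|\nabla f(\ten{W}_t)\|_*$, and once for $\langle\ten{E}_t,\ten{D}_t\rangle$. So your drift and noise pieces enter the final bound as $2L\eta\frac{\mu}{1-\mu}$ and $2\rho\sigma\sqrt{\frac{1-\mu}{1+\mu}}$, twice the stated coefficients. The ``factor bookkeeping'' you appeal to does not remove this. The one-step inequality $\langle\nabla f(\ten{W}_t),\ten{D}_t\rangle\le-\eta\|\nabla f(\ten{W}_t)\|_*+2\eta\|\ten{E}_t\|_*$ is tight up to $O(\epsilon)$ already for $2\times 2$ diagonal matrices under the spectral norm: take momentum $\mathrm{diag}(1,\epsilon)$ and error $\mathrm{diag}(0,-a)$ with $a>\epsilon$. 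Reaching coefficient $1$ therefore needs a genuinely multi-step argument, or at least a check of the exact constants in the cited source rather than an assertion.

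By contrast, the transient you flag as the main obstacle is routine once set up correctly. With $\ten{M}_{-1}=0$ and normalized momentum, the bias at step $t$ is $\mu^{t+1}\nabla f(\ten{W}_t)$; it involves $\ten{W}_t$, not a fixed initial error. Merging it with the drift via $(1-\mu)\sum_{j\le t}j\mu^j+t\mu^{t+1}\le\frac{\mu}{1-\mu}$ leaves only $\mu^{t+1}\|\nabla f(\ten{W}_0)\|_*\le\mu^{t+1}\sqrt{2L\Delta_0}$. After the factor $2/T$ this contributes at most $\frac{2\sqrt{2}\,\mu}{(1-\mu)T}\sqrt{L\Delta_0}\le 3\sqrt{\frac{L\Delta_0}{T}}\cdot\frac{\mu}{1-\mu}$. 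No separate single-sample noise term arises, so the $\frac{2(1-\mu)\rho\sigma}{T}$ term is pure slack.

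Net result: under the corrected norm-equivalence hypothesis, your argument proves the displayed bound with the $L\eta\frac{\mu}{1-\mu}$ and $\rho\sigma\sqrt{\frac{1-\mu}{1+\mu}}$ terms doubled, not the bound as stated.
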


Considering the simplified setting ($\sigma=0$, $\mu=0$), minimizing Eq. ~\eqref{eq:teon_ntr_bound}  over $\eta$ yields $\eta_{*}=\sqrt{\frac{2\Delta_0}{T L}}$ and the convergence guarantee
\begin{equation}
\label{eq:teon_simple}
\min_{t<T}\|\nabla f(\ten{W}_t)\|_{*}
\;\le\;
\sqrt{\frac{2L\Delta_0}{T}}.
\end{equation}

Equation~\eqref{eq:teon_simple} shows that convergence comparisons reduce to comparing the dual norm and the smoothness constant $L$. 

\paragraph{Sketch of Proof for Theorem \ref{thm:convergence_bound_main}.}
For $i\in\{1,2\}$,
we first derive the dual norm relation 
$
\|\cdot\|_{\textsc{teon}-i,*}
\;\le\;
\|\cdot\|_{\textsc{muon},*}
\;\le\;
\sqrt{K}\,\|\cdot\|_{\textsc{teon}-i,*}
$
(proved in Appendix \ref{app:norm_comparability})
and smoothness constant relation
$
L_{\textsc{teon}-i}
\;\le\;
L_{\textsc{muon}}
\;\le\;
K\,L_{\textsc{teon}-i}
$
(proved in Appendix \ref{app:smoothness-comparability}). Plug in to Eq. \eqref{eq:teon_simple}, when $L_{\textsc{muon}}\approx L_{\textsc{teon}-i}$, the bounds of \textsc{teon} and \textsc{muon} matches, while in the best case $L_{\textsc{muon}}= K\cdot  L_{\textsc{teon}-i}$ the bound of \textsc{teon} can be up to $\sqrt{K}\times $ better than that of \textsc{muon}. We provide detailed proof and derivation of converence guranties in Appendix \ref{app:Convergence gurantee comparison}.

\paragraph{Sketch of Proof for Proposition \ref{prop:maximal_gain}.} 
We first use the following lemma to show how the maximal gain could be achieved.

\begin{lemma}[\textsc{teon}'s maximal gain over \textsc{muon}. Proved in Appendix \ref{app:teon_max_gain_modes}]
\label{lem:teon_max_gain_modes}
Let $\ten{G}=\nabla f(\ten{W}) \in \mathbb{R}^{m \times n \times K}$ have slices
$\mat{G}^{(k)} \in \mathbb{R}^{m \times n}$.
\begin{itemize}
\vspace{-5pt}
    \item \emph{Mode-1 case.}
Assume each slice is rank--one,
$\mat{G}^{(k)} = \mat{u}^{(k)} \mat{v}^\top$,
where $\mat{v} \in \mathbb{R}^{n}$ is shared across all $k$, and the vectors
$\{\mat{u}^{(k)}\}_{k=1}^K \subset \mathbb{R}^{m}$ are orthonormal.
Then $L_{\textsc{muon}}=\sqrt{K}\; L_{\textsc{teon}-1}$.

\vspace{-5pt}
\item  \emph{Mode-2 case.}
Assume each slice is rank--one,
$\mat{G}^{(k)} = \mat{u} {\mat{v}^{(k)}}^\top$,
where $\mat{u} \in \mathbb{R}^{m}$ is shared across all $k$, and the vectors
$\{\mat{v}^{(k)}\}_{k=1}^K \subset \mathbb{R}^{n}$ are orthonormal.
Then $L_{\textsc{muon}}=\sqrt{K}\; L_{\textsc{teon}-2}$
\end{itemize}
\end{lemma}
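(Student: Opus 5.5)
The plan is to reduce the lemma to an explicit computation of the singular values of the matricizations of a structured tensor, and then feed those into the tight characterization of the smoothness constant from the comparability argument (Appendix~\ref{app:smoothness-comparability}). For a norm pair $(\|\cdot\|,\|\cdot\|_*)$ the tight constant is
\[
L=\sup_{\ten{W}\neq\ten{W}'}\frac{\|\nabla f(\ten{W})-\nabla f(\ten{W}')\|_*}{\|\ten{W}-\ten{W}'\|}.
\]
For the locally quadratic model used there, this becomes a supremum of a dual-norm to primal-norm ratio over the directions realized by the gradient field. In the hypothesis of the lemma the relevant gradient (and gradient-difference) tensors all have the stated rank-one-slice structure. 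So the whole comparison reduces to evaluating $\|\cdot\|_{\textsc{muon}}$, $\|\cdot\|_{\textsc{muon},*}$, $\|\cdot\|_{\textsc{teon}-1}$ and $\|\cdot\|_{\textsc{teon}-1,*}$ on such tensors, and then bounding the resulting ratios against the general inequalities $L_{\textsc{teon}-i}\le L_{\textsc{muon}}\le K L_{\textsc{teon}-i}$ to see where in that range the constants fall.

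The first key step is the mode-1 case. With $\mat{G}^{(k)}=\mat{u}^{(k)}\mat{v}^\top$, the mode-1 unfolding is
\[
\mathcal{M}_1(\ten{G})=\sum_{k=1}^K \mat{u}^{(k)}\,(\mat{e}_k\otimes \mat{v})^\top .
\]
Here $\{\mat{u}^{(k)}\}$ are orthonormal and $\{\mat{e}_k\otimes\mat{v}\}$ are mutually orthogonal with common norm $\|\mat{v}\|$. This expression is therefore already an SVD with $K$ equal singular values $\|\mat{v}\|$, which gives the op and nuclear norms of the unfolding immediately. Slice-wise, each $\mat{G}^{(k)}$ is rank one with singular value $\|\mat{v}\|$, which gives the \textsc{Muon} norms. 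The second step is to do the same bookkeeping for the primal direction $\ten{W}-\ten{W}'$ that attains the supremum, and to compare the two ratios.

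The mode-2 case is symmetric. Unfolding along mode 2 swaps the roles of $\mat{u}$ and $\mat{v}^{(k)}$, so the identical computation applies with $\mathcal{M}_2(\ten{G})=\sum_k \mat{v}^{(k)}(\mat{e}_k\otimes\mat{u})^\top$.

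The step I expect to be the main obstacle is pinning down exactly which direction, and which normalization, realizes the supremum defining $L$. A naive evaluation of the ratio on the gradient tensor alone gives equal values for the two geometries: $K\|\mat{v}\|/\|\mat{v}\|$ in both. The factor must therefore come from comparing a primal direction in which \textsc{Teon}'s op norm aggregates all $K$ slices (a shared-singular-vector direction, whose mode-1 unfolding is rank one with singular value $\sqrt{K}\|\mat{v}\|$) against the \textsc{Muon} max-norm, which only sees one slice. I would first try the direction in which all slices share the same rank-one factor, compute both ratios, and check whether the $\sqrt{K}$ gap arises from the $\sqrt{K}$ scaling of that rank-one unfolding. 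This scaling is the same one that makes $\|\cdot\|_{\textsc{muon},*}\le\sqrt{K}\|\cdot\|_{\textsc{teon}-i,*}$ tight in Appendix~\ref{app:norm_comparability}. If this does not reproduce the stated constant, the fallback is to quote the general sandwich $L_{\textsc{teon}-i}\le L_{\textsc{muon}}\le K L_{\textsc{teon}-i}$ and exhibit the structured tensor as the extremal case of the corresponding norm inequality, reading the lemma's constant off that extremal ratio.
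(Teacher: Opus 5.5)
There is a genuine gap, and it is the step you flag as the obstacle: nothing in the hypothesis lets you evaluate the suprema that define $L_{\textsc{muon}}$ and $L_{\textsc{teon}-1}$. These constants depend on the whole map $\ten{W}\mapsto\nabla f(\ten{W})$, i.e.\ on its curvature. The lemma only prescribes the shape of a gradient value.

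Your premise that the gradient \emph{differences} share the rank-one-slice structure is not given. Even granting it, pairing a shared-rank-one primal direction (where $\|\cdot\|_{\textsc{teon}-1}=\sqrt{K}\,\|\cdot\|_{\textsc{muon}}$) with a dual tensor of the lemma's form (where the dual norms coincide) gives $\sqrt{K}$ only as the ratio of the two difference quotients at one pair. It does not give the ratio of two suprema, which may be attained at different pairs.

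In fact the statement is false for $K\ge 2$. Take $f(\ten{W})=\tfrac{1}{2}\langle\ten{A},\ten{W}\rangle^2$ with $\ten{A}$ of the lemma's mode-1 form. Then $\nabla f(\ten{W})=\langle\ten{A},\ten{W}\rangle\,\ten{A}$ has the hypothesized structure wherever it is nonzero (absorb the scalar into $\mat{v}$). Moreover $L=\|\ten{A}\|_*^2$ under every norm, with $\|\cdot\|_*$ its dual. By your own SVD, $\|\ten{A}\|_{\textsc{teon}-1,*}=K\|\mat{v}\|_2=\|\ten{A}\|_{\textsc{muon},*}$, so $L_{\textsc{muon}}=L_{\textsc{teon}-1}$. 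Meanwhile $\mathcal{M}_2(\ten{A})=\mat{v}\,[\mat{u}^{(1)\top},\dots,\mat{u}^{(K)\top}]$ is rank one with nuclear norm $\sqrt{K}\|\mat{v}\|_2$, so $L_{\textsc{muon}}=K\,L_{\textsc{teon}-2}$. A correct version must therefore constrain $\nabla^2 f$ rather than $\nabla f$. In this example, for shared $\mat{v}$ the full gain ($K$ in $L$, hence $\sqrt{K}$ in the bound) appears in mode 2, not mode 1.

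Your fallback, reading the constant off an extremal case of the norm sandwich, is essentially the paper's appendix argument, and your own computation shows why it does not work. The paper asserts $\|\mathcal{M}_1(\ten{G})\|_{\mathrm{op}}=\sqrt{K}\|\mat{v}\|_2$, but that value is $\|\mathcal{M}_1(\ten{G})\|_F$. As you correctly found, $\mathcal{M}_1(\ten{G})=\sum_k\mat{u}^{(k)}(\mat{e}_k\otimes\mat{v})^\top$ has $K$ singular values equal to $\|\mat{v}\|_2$. So this tensor sits at the \emph{equality} end of both $\|\cdot\|_{\textsc{muon}}\le\|\cdot\|_{\textsc{teon}-1}$ and $\|\cdot\|_{\textsc{teon}-1,*}\le\|\cdot\|_{\textsc{muon},*}$, not at the $\sqrt{K}$ end. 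The paper then passes from ``the norm comparison is tight'' to $L_{\textsc{muon}}=K\,L_{\textsc{teon}-1}$. That step is a non sequitur for the reason above, and it also disagrees with the $\sqrt{K}$ in the statement. Following the paper's route would therefore not complete your proof.
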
 

\begin{figure*}[t]
    \centering
    \footnotesize
    \vspace{-5pt} 
    \includegraphics[width=0.95\textwidth]{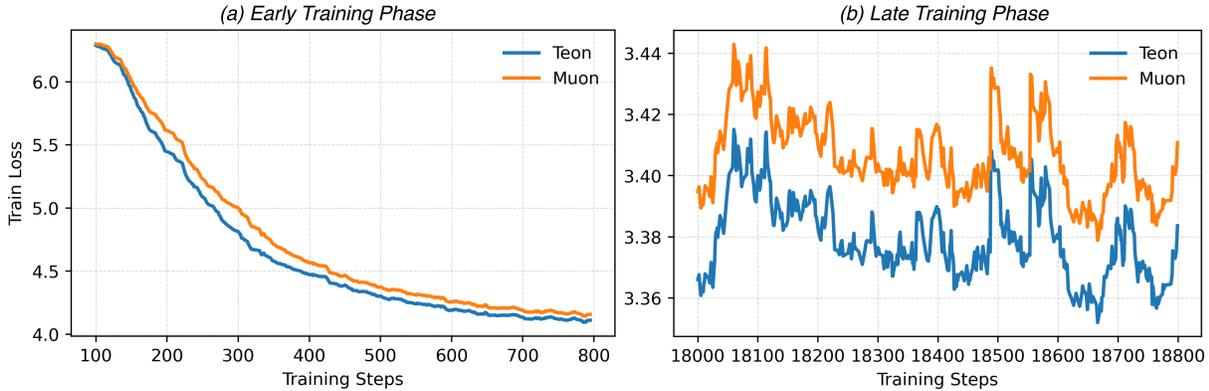}
    \caption{Training curves of GPT-Small on 10B tokens from the FineWeb dataset, comparing \textsc{Muon} and \textsc{Teon} using the original Newton–Schulz iteration with 5 steps.
    (a) At the beginning of training, \textsc{Teon} exhibits faster convergence, achieving lower training loss than \textsc{Muon}.
    (b) In the later stages, \textsc{Teon} continues to outperform \textsc{Muon}, achieving better final training loss.}
    \label{fig:gpt_small_train_loss}
  \vspace{-10pt}  
\end{figure*}

The assumption of rank-$1$ and shared singular vector in Lemma~\ref{lem:teon_max_gain_modes}
characterizes an extreme case that achieves $\sqrt{K}\times$ improvement.
In realistic training, layer-wise gradients are not rank-$1$ but typically \emph{low-rank} \cite{gur2018gradient, zhao2024galore}. In this case,
\textsc{Teon} achieves a gain that is smaller than but close to $\sqrt{K}\times$, with the magnitude
controlled by the alignment of the leading singular vectors across layers.
This motivates the Proposition \ref{prop:maximal_gain}: stacking layers whose gradients exhibit strong top-singular-vector alignment as a
practical criterion for applying cross-layer (mode-1 / mode-2) orthogonalization.

\section{Experiment}
\label{sec:experiment}
We pre-train GPT-style and Llama-style models at different scales in this section. See hyperparameters in Appendix~\ref{apx:Hyperparameters}. \textsc{Teon} and \textsc{Muon} have almost identical per-step computational cost in both memory and runtime. For this reason, we do not separately report memory usage or per-step runtime comparisons, and focus on optimization effectiveness in the following experiments.

\subsection{Pre-training GPT-2}

\label{exp:GPT_pretraining}

In this section, we evaluate \textsc{Teon} on GPT-style models, as summarized in Table~\ref{tab:gpt_model_configs}. All models are pre-trained on the FineWeb dataset~\cite{penedo2024fineweb}, tokenized using the GPT tokenizer. Each model is trained on 10 billion tokens, with a vocabulary size of 50,257, a context length of 8,192, and a batch size of 512. Training is conducted on H100/A100 GPUs using mixed precision (bfloat16). Following the setup in \cite{amsel2025polar}, we apply \textsc{Teon} (or \textsc{Muon}) to all parameters except for embeddings, unembeddings, normalization layers, and positional encodings, which are optimized using AdamW. To validate the robustness of \textsc{Teon} under different SVD approximation methods, we adopt 3 different methods in this section: You \cite{you2019large}, Jordan\cite{jordan2024muon}, and a more recent method, PolarExpress\cite{amsel2025polar}.

As shown in Table~\ref{tab:gpt2_muon_vs_teon_10b_ppl}, \textsc{Teon} consistently outperforms \textsc{Muon} across different model scales and orthogonalization approximation methods. Among all configurations, \textsc{Teon} combined with PolarExpress yields the lowest perplexity.

\begin{table}[t]
\centering
\footnotesize
\setlength{\tabcolsep}{5pt}
\begin{tabular}{c c c c c}
\toprule
\textbf{Model} & $\mathrm{Ortho}(\cdot)$ & \textbf{AdamW} & \textbf{Muon} & \textbf{\textsc{Teon}} \\
\midrule
\multirow{3}{*}{GPT-Small}
& You          &     & 30.89 & \textbf{27.45} \\
& Jordan       & \multicolumn{1}{c}{32.84} & 30.86 & \textbf{27.23} \\
& PolarExpress & \multicolumn{1}{c}{} & 28.53 & \textbf{27.12} \\
\midrule
\multirow{3}{*}{GPT-Base}
& You          &     & 22.55 & \textbf{21.16} \\
& Jordan       & \multicolumn{1}{c}{29.33} & 22.26 & \textbf{21.15} \\
& PolarExpress & \multicolumn{1}{c}{} & 21.64 & \textbf{20.92} \\
\midrule
\multirow{3}{*}{GPT-Large}
& You          &     & 20.38 & \textbf{18.91} \\
& Jordan       & \multicolumn{1}{c}{27.31} & 20.25 & \textbf{18.90} \\
& PolarExpress & \multicolumn{1}{c}{} & 19.26 & \textbf{18.73} \\
\bottomrule
\end{tabular}
\caption{\footnotesize Validation perplexity (PPL $\downarrow$) comparison among AdamW, \textsc{Muon}, and \textsc{Teon} on GPT-Small/Base/Large trained for 10B tokens. $Ortho(\cdot)$ applies only to \textsc{Muon}/\textsc{Teon}; all orthogonalization methods use 5 iterations.}
\label{tab:gpt2_muon_vs_teon_10b_ppl}
\vspace{-10pt}
\end{table}

Due to resource constraints, we perform five independent runs only for GPT-Small trained on 10 billion tokens to estimate variance. The mean and variance of validation perplexity are reported in Table~\ref{tab:gpt_small_var}. The best single-run results for GPT-Small are reported separately in Table~\ref{tab:gpt2_muon_vs_teon_10b_ppl}.

\begin{table}[t]
\centering
\footnotesize
\setlength{\tabcolsep}{4pt}
\begin{tabular}{llcc}
\toprule
\textbf{Ortho$(\cdot)$} & \textbf{Optimizer} & \textbf{Mean val PPL} & \textbf{Var} \\
\midrule
\multirow{2}{*}{PolarExpress}
& \textsc{Teon} & \textbf{27.15} & 0.0020 \\
& Muon & 28.57 & 0.0009 \\
\midrule
\multirow{2}{*}{Jordan}
& \textsc{Teon} & \textbf{27.31} & 0.0066 \\
& Muon & 31.00 & 0.0167 \\
\midrule
\multirow{2}{*}{You}
& \textsc{Teon} & \textbf{27.52} & 0.0026 \\
& Muon & 31.00 & 0.0241 \\
\bottomrule
\end{tabular}
\caption{Mean and variance of validation perplexity across five runs for pre-training GPT-Small on 10 billion FineWeb tokens.}
\label{tab:gpt_small_var}
\vspace{-10pt}
\end{table}

To further illustrate, we also present training curves in Figure~\ref{fig:gpt_small_train_loss}. As shown in the figure, \textsc{Teon} exhibits superior performance in the early stages of training, demonstrating faster convergence compared to \textsc{Muon}. As training progresses, \textsc{Teon} continues to maintain its advantage by achieving a lower final training loss.

\subsection{Pre-training Llama}
In addition to GPT-style models, we further validate our proposed methods by comparing against AdamW and \textsc{Muon} through pre-training LLaMA-style LLMs at scales ranging from 60M to 1B parameters on the FineWeb dataset. 
Model configurations are summarized in Table~\ref{tab:llama_model_configs}. 

\begin{figure*}[t]
    \centering
    \footnotesize
    % ----------- top subfigure -----------
    \begin{subfigure}[t]{0.8\linewidth}
        \centering
        \includegraphics[width=\linewidth]{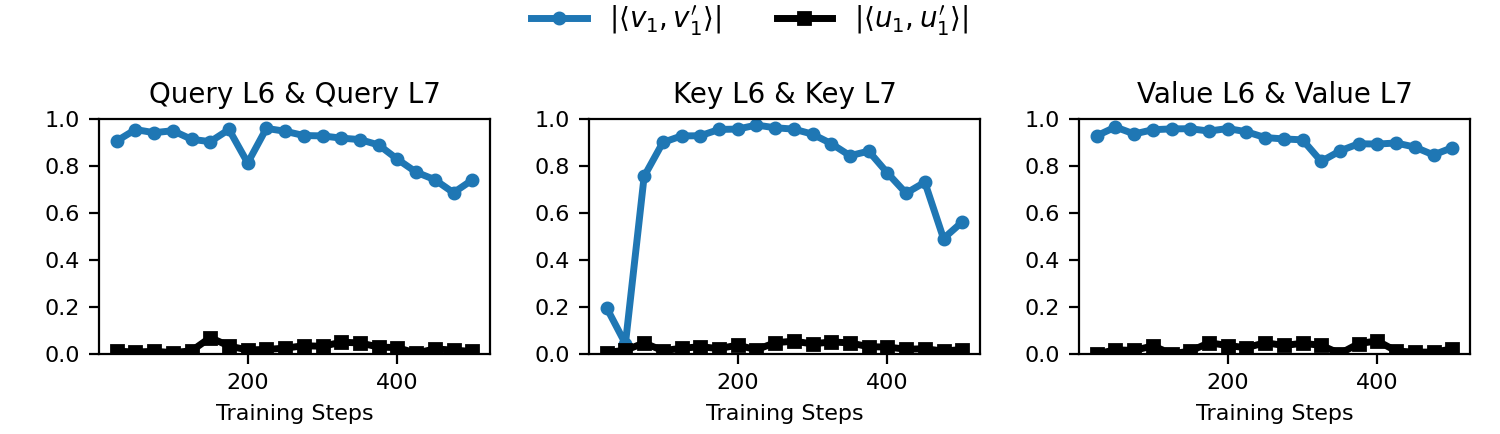}
        \vspace{-15pt}
        \caption{Same type matrices across layers.}
        \label{fig:qq_kk_vv}
    \end{subfigure}
    % ----------- bottom subfigure -----------
    \begin{subfigure}[t]{0.8\linewidth}
        \centering
        \includegraphics[width=0.98\linewidth]{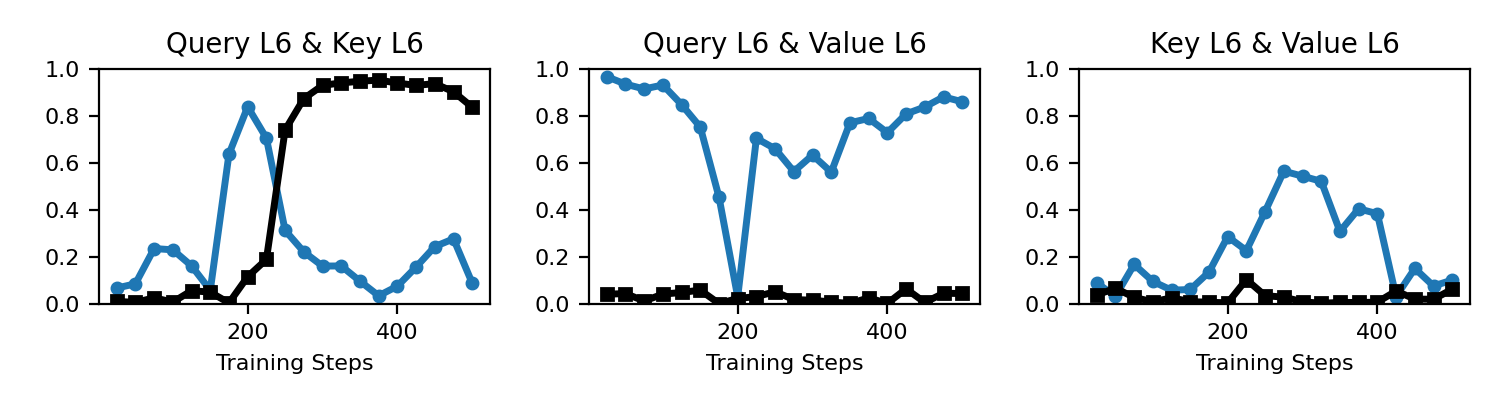}
        \vspace{-10pt}
        \caption{Different matrices in the same layer.}
        \label{fig:qk_kv_qv}
    \end{subfigure}

    % ----------- overall caption -----------
    \caption{We perform SVD on the momentum terms $\mat{M}_t$ and plot the inner product between top singular vectors of consecutive layer pairs. For QKV matrices, the top right singular vectors are aligned while top left singular vectors remain almost orthogonal. We provide the plot for all matrices pairs in Appendix \ref{app:Alignment of top singular vectors}.}
    \label{fig:svd_top}
\end{figure*}

Following the compute-optimal token-to-parameter ratios~\cite{hoffmann2022training}, we train 60M, 130M, 350M, and 1B parameter models on 1.1B, 2.2B, 6.4B, and 13.1B tokens respectively. All models use a sequence length of 1,024, batch size of 512 and a vocabulary size of 32,000, tokenized with the LLaMA-2 tokenizer.
Consistent with Section~\ref{exp:GPT_pretraining}, training is conducted on H100/A100 GPUs using mixed precision. See more hyperparameters in Appendix~\ref{apx:Hyperparameters}. As shown in Table~\ref{tab:llama_pretraining}, \textsc{Teon} consistently achieves the best overall performance at different scales.

\begin{table}[t]
\centering
\footnotesize
\begin{tabular}{lcccc}
\toprule
Param
 & 60M & 130M & 350M & 1B \\
\midrule
Tokens
 & 1.1B & 2.2B & 6.4B & 13.1B \\
\midrule
AdamW
 & 33.10 & 23.64 & 16.18 & 14.38 \\
\textsc{Muon}(PolarExpress)
 & 26.13 & 19.45& 14.11 & 11.19 \\
\textsc{Teon}(PolarExpress)
 & \textbf{25.62} & \textbf{18.92} & \textbf{13.80} & \textbf{10.84} \\
\bottomrule
\end{tabular}
\caption{Validation perplexity (PPL $\downarrow$) for LLaMA-style models pre-trained on FineWeb under different model scales.}
\label{tab:llama_pretraining}
\end{table}

\vspace{-5pt}
\section{Ablation Study}
\label{sec:ablation}

As mentioned in Section~\ref{sec:implementation}, the implementation of our \textsc{Teon} algorithm needs proper choices of some hyper-parameters and stacking configurations. Section~\ref{sec:guidance} has already provided some theoretical guidance. This section provides a detailed ablation study, which further explains how these choices are made in practice. 

\subsection{Ablation for Orthogonalization Mode}
\label{sec:ablation-mode}
Proposition~\ref{prop:maximal_gain} suggests two options for orthogonalization: (1) mode-1 orthogonalization when the top right singular vectors exhibit high similarity, and (2) mode-2 orthogonalization when the top left singular vectors are more aligned. In order to know which option works better in practice, we pre-train a GPT-Small model on 1 billion FineWeb tokens and track the evolution of the top singular vectors similarity of the momentum gradients across two consecutive layers during \textsc{Muon} training. 

As shown in Figure~\ref{fig:qq_kk_vv}, the momentum matrices exhibit strong alignment in their top right singular vectors, whereas their top left singular vectors remain largely orthogonal. This implies that {\it \textsc{Teon} with mode-1 orthogonalization is expected to achieve more significant gains over \textsc{Muon}}. This is consistent with our training results in Table~\ref{tab:Teon modes}: mode-1 choice yields the highest performance improvement.
\begin{table}[t]
  \centering
  \footnotesize
    \begin{tabular}{cccc}
    \toprule
          &       & SVD   & PolarExpress \\
    \midrule
    \textsc{muon} & /     & 36.43 & 36.70 \\
    \midrule
    \multirow{2}[4]{*}{\textsc{teon}} & Mode-1 & \textbf{34.29} & \textbf{34.94} \\
    \cmidrule{2-4}      & Mode-2 & 34.54 & 35.17 \\
    \bottomrule
    \end{tabular}%
    
  \caption{Validation perplexity (PPL $\downarrow$) of \textsc{muon} and \textsc{teon} with different orthogonalization modes.}
\vspace{-10pt}
  \label{tab:Teon modes}%
\end{table}%

\label{sec:ablation_mode}

\subsection{Ablation for Layer Type in Stacking}

\begin{table}[t]
\centering
\small
\begin{tabular}{c c c c c c}
\hline
\textbf{Method} & \textbf{QKV} & \textbf{O} & \textbf{MLP1} & \textbf{MLP2} & \textbf{Val PPL} $\downarrow$ \\
\hline
\textsc{Muon}
& $\times$ & $\times$ & $\times$ & $\times$ & 36.43 \\
\hline
\multirow{8}{*}{\textsc{Teon}}
& \checkmark & $\times$ & $\times$ & $\times$ & \textbf{34.29} \\
& \checkmark & \checkmark & $\times$ & $\times$ & 34.37 \\
& \checkmark & $\times$ & $\times$ & \checkmark & 34.53 \\
& \checkmark & $\times$ & \checkmark & $\times$ & 34.53 \\
& \checkmark & \checkmark & \checkmark & $\times$ & 34.59 \\
& \checkmark & \checkmark & \checkmark & \checkmark & 34.65 \\
& \checkmark & \checkmark & $\times$ & \checkmark & 34.71 \\
& \checkmark & $\times$ & \checkmark & \checkmark & 34.72 \\
\hline
\end{tabular}
\caption{Ablation of layer grouping strategies for \textsc{Teon}. All the experiments in this table use exact SVD as their $Ortho(\cdot)$}
\label{tab:muon_concat_ablation_ppl}
\end{table}
\label{sec:ablation_layer_type}
In the algorithm implementation, we must decide which layers to stack. Proposition~\ref{prop:maximal_gain} suggests stacking gradient matrices whose top singular vectors are strongly aligned. To verify this, we measure cross-layer singular vector similarity under the same setup as Table~\ref{tab:Teon modes}, with results shown in Figure~\ref{fig:qk_kv_qv}. Compared to same-type groupings in Figure~\ref{fig:qq_kk_vv}, the similarities among $\mat{Q}$, $\mat{K}$, and $\mat{V}$ gradients are weaker and less stable, indicating that stacking gradients from the same layer type yields better alignment.

We further study how different stacking choices affect performance. To isolate the effect of stacking, we use exact SVD to compute $\mat{U}\mat{V}^\top$ and consider stacking gradients from attention projections ($\mat{Q}$, $\mat{K}$, $\mat{V}$), the attention output projection ($\mat{O}$), and the two MLP sublayers. As shown in Table~\ref{tab:muon_concat_ablation_ppl}, stacking only QKV achieves the largest improvement. We attribute this to the functional asymmetry in Transformers: prior work \cite{dong2025attention} shows that MLP layers mainly store heterogeneous memorization, while attention layers share a common retrieval-oriented role.

In practice, \textsc{Teon} relies on approximate SVD. Stacking MLP gradients often produces highly rectangular matrices with extreme aspect ratios, which degrade the accuracy and numerical stability of SVD approximations. Table~\ref{tab:Teon_appro_SVD_paired} shows that \textsc{Teon} gains diminish when using approximate SVD, especially for less accurate methods and when stacking MLP2 gradients. PolarExpress performs best and is closest to exact SVD, while other methods suffer larger performance gaps due to ill-conditioned matrices. Therefore, we recommend stacking only QKV gradients in \textsc{Teon} for optimal performance.

\begin{table}[t]
\centering
\small
\begin{tabular}{c c c}
\hline
$\mathrm{Ortho}(\cdot)$ & \textbf{Method} & \textbf{Val PPL} $\downarrow$ \\
\hline
\multirow{3}{*}{SVD}
& \textsc{Muon} & 36.43 \\
& \textsc{Teon} (QKV) & \textbf{34.29} \\
& \textsc{Teon} (MLP2+QKV) & 34.53 \\
\hline
\multirow{3}{*}{You}
& \textsc{Muon} & 38.95 \\
& \textsc{Teon} (QKV) & \textbf{37.43} \\
& \textsc{Teon} (MLP2+QKV) & 37.83 \\
\hline
\multirow{3}{*}{Jordan}
& \textsc{Muon} & 38.59 \\
& \textsc{Teon}(QKV) & \textbf{37.45} \\
& \textsc{Teon} (MLP2+QKV) & 38.01 \\
\hline
\multirow{3}{*}{PolarExpress}
& \textsc{Muon} & 36.70 \\
& \textsc{Teon} (QKV) & \textbf{34.94} \\
& \textsc{Teon} (MLP2+QKV) & 35.52 \\
\hline
\end{tabular}
\caption{\footnotesize Comparison between \textsc{Muon} and \textsc{Teon} (QKV vs. QKV + MLP2) using the same SVD approximation method.}
\label{tab:Teon_appro_SVD_paired}
\end{table}
% Table generated by Excel2LaTeX from sheet 'Teon'
\begin{table}[t]
  \centering
  \small
    % Table generated by Excel2LaTeX from sheet 'Teon'
        \begin{tabular}{cccc}
        \toprule
              &       & SVD   & PolarExpress \\
        \midrule
        \textsc{Muon}  & /     & 36.43 & 36.70 \\
        \midrule
        \multirow{4}[8]{*}{\textsc{Teon}} & $K$=2   & \textbf{34.29} & \textbf{34.94} \\
        \cmidrule{2-4}      & $K$=4   & 34.49 & 35.35 \\
        \cmidrule{2-4}      & $K$=6   & 34.75 & 35.88 \\
        \cmidrule{2-4}      & $K$=12  & 34.76 & 36.55 \\
        \bottomrule
        \end{tabular}%
  \caption{Validation perplexity (PPL $\downarrow$) of \textsc{teon} with different stacking group sizes $K$.}
  \label{tab:Teon L}%
  \vspace{-10pt}
\end{table}%

\subsection{Ablation for Layer Number}

\label{sec:ablation_layer_number}
\vspace{-2pt}
To study the trade-off caused by the stacking layer number ($K$) mentioned in Section~\ref{sec:guidance}, we conduct a controlled experiment on GPT-Small (with $12$ layers). As shown in Table~\ref{tab:Teon L}, \textsc{Teon} achieves the best performance at $K = 2$. Increasing $K$ gradually degrades performance under exact SVD. This suggests that as more layers are stacked, it becomes harder for all stacked gradient matrices to have similar top singular vectors, making it difficult to reach the best-case gain in Proposition \ref{prop:maximal_gain}.

\vspace{-2pt}
In practice, orthogonalization is approximated for efficiency. When using PolarExpress, this degradation is further amplified, as larger $K$ also leads to less accurate SVD approximations due to the increasing unbalanced tensor shape.

\section{Conclusion}
In this work, we have proposed \textsc{Teon}, a tensor-level generalization of the \textsc{Muon} optimizer. By modeling stacking multiple gradient matrices as higher-order tensors, \textsc{Teon} improves the training performance of Transformer models. We have provided theoretical analysis to show the better convergence properties of \textsc{Teon} over \textsc{Muon}, which has been experimentally observed in the pre-training experiments of GPT-2 and LlaMA-type models. These theoretical results also provide valuable guidance for practical implementation, which has been validated through extensive ablation studies.

\clearpage
\section*{Impact Statement}

This work focuses on improving the optimization methods for large-scale neural network pre-training. By extending gradient orthogonalization to structured tensors, Our proposed method \textsc{Teon} enables more effective use of cross-layer gradient information, which can improve training efficiency for large models. Such efficiency improvements may help lower the resource and energy costs associated with pre-training foundation models. The proposed method is a general optimization technique, evaluated using publicly available models and datasets, and does not introduce new model capabilities or inherently harmful functionalities. We do not foresee any immediate negative societal impacts arising from this work.

\bibliography{main}

@article{kolda2009tensor,
  title={Tensor decompositions and applications},
  author={Kolda, Tamara G and Bader, Brett W},
  journal={SIAM review},
  volume={51},
  number={3},
  pages={455--500},
  year={2009},
  publisher={SIAM}
}

@misc{jordan2024muon,
  author = {Keller Jordan and Yuchen Jin and Vlado Boza and Jiacheng You and Franz Cesista and Laker Newhouse and Jeremy Bernstein},
  title = {Muon: An optimizer for hidden layers in neural networks},
  url = {https://kellerjordan.github.io/posts/muon/},
  year = {2024}
}

@misc{bernstein2025deriving,
  author = {Jeremy Bernstein},
  title = {Deriving Muon},
  url = {https://jeremybernste.in/writing/deriving-muon},
  year = {2025}
}

@article{liu2025muon,
  title={Muon is scalable for LLM training},
  author={Liu, Jingyuan and Su, Jianlin and Yao, Xingcheng and Jiang, Zhejun and Lai, Guokun and Du, Yulun and Qin, Yidao and Xu, Weixin and Lu, Enzhe and Yan, Junjie and others},
  journal={arXiv preprint arXiv:2502.16982},
  year={2025}
}

@article{khaled2025muonbp,
  title={MuonBP: Faster Muon via Block-Periodic Orthogonalization},
  author={Khaled, Ahmed and Ozkara, Kaan and Yu, Tao and Hong, Mingyi and Park, Youngsuk},
  journal={arXiv preprint arXiv:2510.16981},
  year={2025}
}

@article{kovalev2025understanding,
  title={Understanding gradient orthogonalization for deep learning via non-euclidean trust-region optimization},
  author={Kovalev, Dmitry},
  journal={arXiv preprint arXiv:2503.12645},
  year={2025}
}

@article{kaplan2020scaling,
  title={Scaling laws for neural language models},
  author={Kaplan, Jared and McCandlish, Sam and Henighan, Tom and Brown, Tom B and Chess, Benjamin and Child, Rewon and Gray, Scott and Radford, Alec and Wu, Jeffrey and Amodei, Dario},
  journal={arXiv preprint arXiv:2001.08361},
  year={2020}
}

@inproceedings{hoffmann2022training,
  title={Training compute-optimal large language models},
  author={Hoffmann, Jordan and Borgeaud, Sebastian and Mensch, Arthur and Buchatskaya, Elena and Cai, Trevor and Rutherford, Eliza and de Las Casas, Diego and Hendricks, Lisa Anne and Welbl, Johannes and Clark, Aidan and others},
  booktitle={Proceedings of the 36th International Conference on Neural Information Processing Systems},
  pages={30016--30030},
  year={2022}
}

@inproceedings{
kumar2025scaling,
title={Scaling Laws for Precision},
author={Tanishq Kumar and Zachary Ankner and Benjamin Frederick Spector and Blake Bordelon and Niklas Muennighoff and Mansheej Paul and Cengiz Pehlevan and Christopher Re and Aditi Raghunathan},
booktitle={The Thirteenth International Conference on Learning Representations},
year={2025},
url={https://openreview.net/forum?id=wg1PCg3CUP}
}

@article{achiam2023gpt,
  title={Gpt-4 technical report},
  author={Achiam, Josh and Adler, Steven and Agarwal, Sandhini and Ahmad, Lama and Akkaya, Ilge and Aleman, Florencia Leoni and Almeida, Diogo and Altenschmidt, Janko and Altman, Sam and Anadkat, Shyamal and others},
  journal={arXiv preprint arXiv:2303.08774},
  year={2023}
}

@article{liu2024deepseek,
  title={Deepseek-v3 technical report},
  author={Liu, Aixin and Feng, Bei and Xue, Bing and Wang, Bingxuan and Wu, Bochao and Lu, Chengda and Zhao, Chenggang and Deng, Chengqi and Zhang, Chenyu and Ruan, Chong and others},
  journal={arXiv preprint arXiv:2412.19437},
  year={2024}
}

@article{grattafiori2024llama,
  title={The llama 3 herd of models},
  author={Grattafiori, Aaron and Dubey, Abhimanyu and Jauhri, Abhinav and Pandey, Abhinav and Kadian, Abhishek and Al-Dahle, Ahmad and Letman, Aiesha and Mathur, Akhil and Schelten, Alan and Vaughan, Alex and others},
  journal={arXiv preprint arXiv:2407.21783},
  year={2024}
}

@article{team2023gemini,
  title={Gemini: a family of highly capable multimodal models},
  author={Team, Gemini and Anil, Rohan and Borgeaud, Sebastian and Alayrac, Jean-Baptiste and Yu, Jiahui and Soricut, Radu and Schalkwyk, Johan and Dai, Andrew M and Hauth, Anja and Millican, Katie and others},
  journal={arXiv preprint arXiv:2312.11805},
  year={2023}
}

@article{kingma2014adam,
  title={Adam: A method for stochastic optimization},
  author={Kingma, Diederik P},
  journal={arXiv preprint arXiv:1412.6980},
  year={2014}
}

@article{loshchilov2017decoupled,
  title={Decoupled weight decay regularization},
  author={Loshchilov, Ilya and Hutter, Frank},
  journal={arXiv preprint arXiv:1711.05101},
  year={2017}
}

@inproceedings{
liu2024sophia,
title={Sophia: A Scalable Stochastic Second-order Optimizer for Language Model Pre-training},
author={Hong Liu and Zhiyuan Li and David Leo Wright Hall and Percy Liang and Tengyu Ma},
booktitle={The Twelfth International Conference on Learning Representations},
year={2024},
url={https://openreview.net/forum?id=3xHDeA8Noi}
}

@misc{yuan2024mars,
    title={MARS: Unleashing the Power of Variance Reduction for Training Large Models},
    author={Huizhuo Yuan and Yifeng Liu and Shuang Wu and Xun Zhou and Quanquan Gu},
    year={2024},
    eprint={2411.10438},
    archivePrefix={arXiv},
    primaryClass={cs.LG}
}

@misc{pooladzandi2024curvatureinformedsgdgeneralpurpose,
      title={Curvature-Informed SGD via General Purpose Lie-Group Preconditioners}, 
      author={Omead Pooladzandi and Xi-Lin Li},
      year={2024},
      eprint={2402.04553},
      archivePrefix={arXiv},
      primaryClass={cs.LG},
      url={https://arxiv.org/abs/2402.04553}, 
}

@misc{li2018preconditionermatrixliegroup,
      title={Preconditioner on Matrix Lie Group for SGD}, 
      author={Xi-Lin Li},
      year={2018},
      eprint={1809.10232},
      archivePrefix={arXiv},
      primaryClass={stat.ML},
      url={https://arxiv.org/abs/1809.10232}, 
}

@inproceedings{
vyas2025soap,
title={{SOAP}: Improving and Stabilizing Shampoo using Adam},
author={Nikhil Vyas and Depen Morwani and Rosie Zhao and Itai Shapira and David Brandfonbrener and Lucas Janson and Sham M. Kakade},
booktitle={The Thirteenth International Conference on Learning Representations},
year={2025},
url={https://openreview.net/forum?id=IDxZhXrpNf}
}

@article{Li_2018,
   title={Preconditioned Stochastic Gradient Descent},
   volume={29},
   ISSN={2162-2388},
   url={http://dx.doi.org/10.1109/TNNLS.2017.2672978},
   DOI={10.1109/tnnls.2017.2672978},
   number={5},
   journal={IEEE Transactions on Neural Networks and Learning Systems},
   publisher={Institute of Electrical and Electronics Engineers (IEEE)},
   author={Li, Xi-Lin},
   year={2018},
   month=may, pages={1454–1466} }

@misc{li2022blackboxliegroup,
      title={Black Box Lie Group Preconditioners for SGD}, 
      author={Xilin Li},
      year={2022},
      eprint={2211.04422},
      archivePrefix={arXiv},
      primaryClass={stat.ML},
      url={https://arxiv.org/abs/2211.04422}, 
}

@misc{li2024stochastichessianfittingslie,
      title={Stochastic Hessian Fittings with Lie Groups}, 
      author={Xi-Lin Li},
      year={2024},
      eprint={2402.11858},
      archivePrefix={arXiv},
      primaryClass={stat.ML},
      url={https://arxiv.org/abs/2402.11858}, 
}

@misc{pethick2025trainingdeeplearningmodels,
      title={Training Deep Learning Models with Norm-Constrained LMOs}, 
      author={Thomas Pethick and Wanyun Xie and Kimon Antonakopoulos and Zhenyu Zhu and Antonio Silveti-Falls and Volkan Cevher},
      year={2025},
      eprint={2502.07529},
      archivePrefix={arXiv},
      primaryClass={cs.LG},
      url={https://arxiv.org/abs/2502.07529}, 
}

@inproceedings{liu2025cola,
  title={Cola: Compute-efficient pre-training of llms via low-rank activation},
  author={Liu, Ziyue and Zhang, Ruijie and Wang, Zhengyang and Yan, Mingsong and Yang, Zi and Hovland, Paul D and Nicolae, Bogdan and Cappello, Franck and Tang, Sui and Zhang, Zheng},
  booktitle={Proceedings of the 2025 Conference on Empirical Methods in Natural Language Processing},
  pages={4627--4645},
  year={2025}
}

@article{zhang2025lax,
  title={LaX: Boosting Low-Rank Training of Foundation Models via Latent Crossing},
  author={Zhang, Ruijie and Liu, Ziyue and Wang, Zhengyang and Zhang, Zheng},
  journal={arXiv preprint arXiv:2505.21732},
  year={2025}
}

@article{zhao2024galore,
  title={Galore: Memory-efficient llm training by gradient low-rank projection},
  author={Zhao, Jiawei and Zhang, Zhenyu and Chen, Beidi and Wang, Zhangyang and Anandkumar, Anima and Tian, Yuandong},
  journal={arXiv preprint arXiv:2403.03507},
  year={2024}
}

@article{you2019large,
  title={Large batch optimization for deep learning: Training bert in 76 minutes},
  author={You, Yang and Li, Jing and Reddi, Sashank and Hseu, Jonathan and Kumar, Sanjiv and Bhojanapalli, Srinadh and Song, Xiaodan and Demmel, James and Keutzer, Kurt and Hsieh, Cho-Jui},
  journal={arXiv preprint arXiv:1904.00962},
  year={2019}
}

@article{han2024sltrain,
  title={SLTrain: a sparse plus low rank approach for parameter and memory efficient pretraining},
  author={Han, Andi and Li, Jiaxiang and Huang, Wei and Hong, Mingyi and Takeda, Akiko and Jawanpuria, Pratik Kumar and Mishra, Bamdev},
  journal={Advances in Neural Information Processing Systems},
  volume={37},
  pages={118267--118295},
  year={2024}
}

@article{mehmood2023efficient,
  title={An efficient optimization technique for training deep neural networks},
  author={Mehmood, Faisal and Ahmad, Shabir and Whangbo, Taeg Keun},
  journal={Mathematics},
  volume={11},
  number={6},
  pages={1360},
  year={2023},
  publisher={MDPI}
}

@book{higham2008functions,
  title={Functions of matrices: theory and computation},
  author={Higham, Nicholas J},
  year={2008},
  publisher={SIAM}
}

@article{penedo2024fineweb,
  title={The fineweb datasets: Decanting the web for the finest text data at scale},
  author={Penedo, Guilherme and Kydl{\'\i}{\v{c}}ek, Hynek and Lozhkov, Anton and Mitchell, Margaret and Raffel, Colin A and Von Werra, Leandro and Wolf, Thomas and others},
  journal={Advances in Neural Information Processing Systems},
  volume={37},
  pages={30811--30849},
  year={2024}
}

@misc{cesista2025muonoptcoeffs,
  author = {Franz Louis Cesista and You Jiacheng and Keller Jordan},
  title = {{S}queezing 1-2% Efficiency Gains Out of {M}uon by Optimizing the {N}ewton-{S}chulz {C}oefficients},
  year = {2025},
  month = {February},
  day = {21},
  url = {https://leloykun.github.io/ponder/muon-opt-coeffs/},
}

@article{amsel2025polar,
  title={The polar express: Optimal matrix sign methods and their application to the muon algorithm},
  author={Amsel, Noah and Persson, David and Musco, Christopher and Gower, Robert M},
  journal={arXiv preprint arXiv:2505.16932},
  year={2025}
}

@article{amari1998natural,
  title={Natural gradient works efficiently in learning},
  author={Amari, Shun-Ichi},
  journal={Neural computation},
  volume={10},
  number={2},
  pages={251--276},
  year={1998},
  publisher={MIT Press}
}

@article{dong2025attention,
  title={Attention Retrieves, MLP Memorizes: Disentangling Trainable Components in the Transformer},
  author={Dong, Yihe and Noci, Lorenzo and Khodak, Mikhail and Li, Mufan},
  journal={arXiv preprint arXiv:2506.01115},
  year={2025}
}

@article{team2025kimi,
  title={Kimi k2: Open agentic intelligence},
  author={Team, Kimi and Bai, Yifan and Bao, Yiping and Chen, Guanduo and Chen, Jiahao and Chen, Ningxin and Chen, Ruijue and Chen, Yanru and Chen, Yuankun and Chen, Yutian and others},
  journal={arXiv preprint arXiv:2507.20534},
  year={2025}
}

@article{ding2025kimi,
  title={Kimi-audio technical report},
  author={Ding, Ding and Ju, Zeqian and Leng, Yichong and Liu, Songxiang and Liu, Tong and Shang, Zeyu and Shen, Kai and Song, Wei and Tan, Xu and Tang, Heyi and others},
  journal={arXiv preprint arXiv:2504.18425},
  year={2025}
}

@article{zeng2025glm,
  title={Glm-4.5: Agentic, reasoning, and coding (arc) foundation models},
  author={Zeng, Aohan and Lv, Xin and Zheng, Qinkai and Hou, Zhenyu and Chen, Bin and Xie, Chengxing and Wang, Cunxiang and Yin, Da and Zeng, Hao and Zhang, Jiajie and others},
  journal={arXiv preprint arXiv:2508.06471},
  year={2025}
}

@article{team2025kimivl,
  title={Kimi-vl technical report},
  author={Team, Kimi and Du, Angang and Yin, Bohong and Xing, Bowei and Qu, Bowen and Wang, Bowen and Chen, Cheng and Zhang, Chenlin and Du, Chenzhuang and Wei, Chu and others},
  journal={arXiv preprint arXiv:2504.07491},
  year={2025}
}

@article{li2025normuon,
  title={NorMuon: Making Muon more efficient and scalable},
  author={Li, Zichong and Liu, Liming and Liang, Chen and Chen, Weizhu and Zhao, Tuo},
  journal={arXiv preprint arXiv:2510.05491},
  year={2025}
}

@article{hwang2024fadam,
  title={FAdam: Adam is a natural gradient optimizer using diagonal empirical Fisher information},
  author={Hwang, Dongseong},
  journal={arXiv preprint arXiv:2405.12807},
  year={2024}
}

@article{gur2018gradient,
  title={Gradient descent happens in a tiny subspace},
  author={Gur-Ari, Guy and Roberts, Daniel A and Dyer, Ethan},
  journal={arXiv preprint arXiv:1812.04754},
  year={2018}
}
\bibliographystyle{icml2026}

%%%%%%%%%%%%%%%%%%%%%%%%%%%%%%%%%%%%%%%%%%%%%%%%%%%%%%%%%%%%%%%%%%%%%%%%%%%%%%%
%%%%%%%%%%%%%%%%%%%%%%%%%%%%%%%%%%%%%%%%%%%%%%%%%%%%%%%%%%%%%%%%%%%%%%%%%%%%%%%
% APPENDIX
%%%%%%%%%%%%%%%%%%%%%%%%%%%%%%%%%%%%%%%%%%%%%%%%%%%%%%%%%%%%%%%%%%%%%%%%%%%%%%%
%%%%%%%%%%%%%%%%%%%%%%%%%%%%%%%%%%%%%%%%%%%%%%%%%%%%%%%%%%%%%%%%%%%%%%%%%%%%%%%
\newpage
\appendix
\onecolumn

%================================================
%=================Proofs======================
%================================================
%================================================

%================================================
\section{Detailed Proofs for Section~\ref{sec:Teon-theory}}

%================================================
% ===========================
% Appendix: Norms, dual norms, and comparisons (notation aligned with body)
% ===========================

\subsection{Tensor norms for \textsc{Muon} and \textsc{Teon}}
\label{app:norms}

Throughout, let $\ten{X}\in\mathbb{R}^{m\times n\times K}$ with slices
$\ten{X}[:,:, k]=\mat{X}^{(k)}\in\mathbb{R}^{m\times n}$ for $k\in[K]$, and let
$\mathcal{M}_i(\cdot)$ denote the mode-$i$ matricization.

\begin{definition}[\textsc{Teon} norm family]
\label{def:teon_norm_family}
For each $i\in\{1,2,3\}$, define the \textsc{Teon} norm and its dual by
\begin{equation}
\label{eq:def-teon-primal}
\|\ten{X}\|_{\textsc{teon}-i}
\;:=\;
\|\mathcal{M}_i(\ten{X})\|_{\mathrm{op}},
\end{equation}
and
\begin{equation}
\label{eq:def-teon-dual}
\|\ten{X}\|_{\textsc{teon}-i,*}
\;:=\;
\|\mathcal{M}_i(\ten{X})\|_{*}.
\end{equation}
\end{definition}

\begin{definition}[\textsc{Muon} norm]
\label{def:muon_norm}
Define the layer-wise \textsc{Muon} norm and its dual by
\begin{equation}
\label{eq:def-muon-primal}
\|\ten{X}\|_{\textsc{muon}}
\;:=\;
\max_{k\in[K]}\|\mat{X}^{(k)}\|_{\mathrm{op}},
\end{equation}
and
\begin{equation}
\label{eq:def-muon-dual}
\|\ten{X}\|_{\textsc{muon},*}
\;:=\;
\sum_{k=1}^K \|\mat{X}^{(k)}\|_{*}.
\end{equation}
\end{definition}

Since these norms are induced by matrix operator norms, they are well-defined norms on
$\mathbb{R}^{m\times n\times K}$.

% ---------------------------
% Dual norms (proofs)
% ---------------------------
The dual of \textsc{teon} norm and \textsc{muon} norm are derived as follows:

\begin{lemma}[Dual of $\|\cdot\|_{\textsc{teon}-i}$]
\label{lem:dual-teon}
Fix $i\in\{1,2,3\}$. The dual of $\|\cdot\|_{\textsc{teon}-i}$ is
$\|\cdot\|_{\textsc{teon}-i,*}$ as defined in~\eqref{eq:def-teon-dual}, i.e.,
\begin{equation}
\label{eq:dual-teon}
\big(\|\cdot\|_{\textsc{teon}-i}\big)_*(\ten{X})
\;=\;
\|\mathcal{M}_i(\ten{X})\|_*.
\end{equation}
\end{lemma}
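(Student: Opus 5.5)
The plan is to reduce the tensor statement to the classical matrix fact that the operator (spectral) norm and the nuclear norm are dual under the Frobenius (trace) inner product. The reduction rests on one structural observation: the mode-$i$ matricization $\mathcal{M}_i$ is a linear bijection from $\mathbb{R}^{m\times n\times K}$ onto $\mathbb{R}^{n_i\times \prod_{j\neq i} n_j}$. It only permutes entries, so it is an isometry for the inner products,
\begin{equation*}
\langle \ten{X},\ten{Y}\rangle \;=\; \langle \mathcal{M}_i(\ten{X}),\mathcal{M}_i(\ten{Y})\rangle_{F}.
\end{equation*}
I would state this first. Each entry $x_{i_1i_2i_3}$ appears exactly once in $\mathcal{M}_i(\ten{X})$, at the position determined by the same index map used for $\ten{Y}$. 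Hence the entrywise sums defining the two inner products coincide term by term.

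Next I would write out the definition of the dual norm,
\begin{equation*}
\big(\|\cdot\|_{\textsc{teon}-i}\big)_*(\ten{X}) \;=\; \sup_{\|\mathcal{M}_i(\ten{Y})\|_{\mathrm{op}}\le 1}\langle \ten{X},\ten{Y}\rangle .
\end{equation*}
Then I would change variables to $\mat{Z}=\mathcal{M}_i(\ten{Y})$. Because $\mathcal{M}_i$ is bijective, as $\ten{Y}$ ranges over the tensor unit ball, $\mat{Z}$ ranges over the entire operator-norm unit ball of matrices. The supremum therefore equals $\sup_{\|\mat{Z}\|_{\mathrm{op}}\le 1}\langle \mathcal{M}_i(\ten{X}),\mat{Z}\rangle_F$.

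It remains to prove the matrix duality $\sup_{\|\mat{Z}\|_{\mathrm{op}}\le1}\langle \mat{A},\mat{Z}\rangle_F=\|\mat{A}\|_*$, which I would do in two directions.

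\textbf{Upper bound.} Take the SVD $\mat{A}=\sum_r \sigma_r \mat{u}_r\mat{v}_r^\top$. Then
\begin{equation*}
\langle \mat{A},\mat{Z}\rangle_F=\sum_r \sigma_r\, \mat{u}_r^\top \mat{Z}\mat{v}_r \le \sum_r\sigma_r \|\mat{Z}\|_{\mathrm{op}},
\end{equation*}
since $|\mat{u}_r^\top\mat{Z}\mat{v}_r|\le\|\mat{Z}\|_{\mathrm{op}}$ for unit vectors. This gives the bound $\|\mat{A}\|_*$ on the unit ball.

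\textbf{Lower bound (attainment).} Choose $\mat{Z}=\mat{U}\mat{V}^\top=\mathrm{Ortho}(\mat{A})$. This matrix has operator norm at most $1$, and $\langle \mat{A},\mat{Z}\rangle_F=\mathrm{tr}(\mat{\Sigma})=\|\mat{A}\|_*$, so the bound is attained.

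Setting $\mat{A}=\mathcal{M}_i(\ten{X})$ yields \eqref{eq:dual-teon}. As a byproduct, the maximizer is $\mathcal{M}_i^{-1}(\mathrm{Ortho}(\mathcal{M}_i(\ten{X})))=\mathcal{O}_i(\ten{X})$, which connects the result to the \textsc{Teon} update.

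I do not expect a real obstacle. The only step requiring care is the isometry claim: one has to check that $\mathcal{M}_i$ uses a fixed index bijection, so that the inner product is preserved and the unit balls correspond exactly rather than merely being contained in one another. The rest is the standard von Neumann–type trace argument, which I would cite or carry out directly via the SVD as above.
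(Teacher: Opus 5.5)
Your proposal is correct and follows the same route as the paper. Both arguments use that $\mathcal{M}_i$ is an inner-product-preserving bijection to move the supremum onto the operator-norm unit ball of matrices, then apply operator/nuclear-norm duality. The only difference is that you prove that duality explicitly via the SVD (upper bound from $|\mat{u}_r^\top \mat{Z}\mat{v}_r|\le\|\mat{Z}\|_{\mathrm{op}}$, attainment at $\mat{U}\mat{V}^\top$), whereas the paper cites it as a standard fact.
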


\begin{proof}
By definition of the dual norm and invariance of the Frobenius inner product under matricization,
\begin{align}
\big(\|\cdot\|_{\textsc{teon}-i}\big)_*(\ten{X})
&=
\sup_{\|\ten{Y}\|_{\textsc{teon}-i}\le 1}\ \langle \ten{X},\ten{Y}\rangle
=
\sup_{\|\mathcal{M}_i(\ten{Y})\|_{\mathrm{op}}\le 1}
\left\langle \mathcal{M}_i(\ten{X}),\mathcal{M}_i(\ten{Y})\right\rangle_F
\nonumber\\
&=
\sup_{\|\mat{Y}\|_{\mathrm{op}}\le 1}\ \langle \mathcal{M}_i(\ten{X}),\mat{Y}\rangle_F
=
\|\mathcal{M}_i(\ten{X})\|_*,
\end{align}
where the last equality uses that the dual of $\|\cdot\|_{\mathrm{op}}$ is $\|\cdot\|_*$.
\end{proof}

\begin{lemma}[Dual of $\|\cdot\|_{\textsc{muon}}$]
\label{lem:dual-muon}
The dual of $\|\cdot\|_{\textsc{muon}}$ in~\eqref{eq:def-muon-primal} is
$\|\cdot\|_{\textsc{muon},*}$ in~\eqref{eq:def-muon-dual}, i.e.,
\begin{equation}
\label{eq:dual-muon}
\big(\|\cdot\|_{\textsc{muon}}\big)_*(\ten{X})
\;=\;
\sum_{k=1}^K \|\mat{X}^{(k)}\|_*.
\end{equation}
\end{lemma}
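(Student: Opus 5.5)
We compute the dual norm directly from its definition,
\[
\big(\|\cdot\|_{\textsc{muon}}\big)_*(\ten{X}) \;=\; \sup_{\|\ten{Y}\|_{\textsc{muon}}\le 1}\langle \ten{X},\ten{Y}\rangle ,
\]
and exploit the fact that both the constraint set and the objective decouple across the slices $k\in[K]$. The Frobenius inner product of tensors splits slice-wise:
\[
\langle \ten{X},\ten{Y}\rangle \;=\; \sum_{k=1}^K \langle \mat{X}^{(k)},\mat{Y}^{(k)}\rangle_F .
\]
The constraint $\max_k \|\mat{Y}^{(k)}\|_{\mathrm{op}}\le 1$ is equivalent to the $K$ independent constraints $\|\mat{Y}^{(k)}\|_{\mathrm{op}}\le 1$ for every $k$. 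The feasible set is therefore the Cartesian product of $K$ operator-norm unit balls, and the supremum of a separable sum over a product set equals the sum of the individual suprema.

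We then prove the two inequalities separately.

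For the upper bound, fix any feasible $\ten{Y}$. For each $k$ we apply the matrix trace duality (von Neumann) inequality, which gives
\[
\langle \mat{X}^{(k)},\mat{Y}^{(k)}\rangle_F \;\le\; \|\mat{X}^{(k)}\|_*\,\|\mat{Y}^{(k)}\|_{\mathrm{op}} \;\le\; \|\mat{X}^{(k)}\|_* .
\]
Summing over $k$ bounds the supremum by $\sum_k \|\mat{X}^{(k)}\|_*$. This is the same fact invoked at the end of the proof of Lemma~\ref{lem:dual-teon}, namely that the dual of $\|\cdot\|_{\mathrm{op}}$ is $\|\cdot\|_*$, here applied slice by slice.

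For the matching lower bound, we exhibit an explicit maximizer. Take the SVD $\mat{X}^{(k)}=\mat{U}_k\mat{\Sigma}_k\mat{V}_k^\top$ of each slice and set $\mat{Y}^{(k)}=\mat{U}_k\mat{V}_k^\top=\mathrm{Ortho}(\mat{X}^{(k)})$. For a zero slice we take $\mat{Y}^{(k)}=0$. Each $\mat{Y}^{(k)}$ has operator norm at most $1$, so $\ten{Y}$ is feasible. Moreover,
\[
\langle \mat{X}^{(k)},\mat{Y}^{(k)}\rangle_F \;=\; \mathrm{tr}(\mat{\Sigma}_k) \;=\; \|\mat{X}^{(k)}\|_* .
\]
Stacking these slices gives $\langle\ten{X},\ten{Y}\rangle=\sum_k\|\mat{X}^{(k)}\|_*$, so the supremum is attained and equality holds.

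There is no real obstacle. The only point needing care is justifying that the max-type constraint decouples into per-slice constraints, so that the supremum of the sum equals the sum of the suprema. This is immediate because the constraint set is a product set and the objective is separable. As a by-product, the maximizer is exactly the layer-wise $\mathrm{Ortho}$ update, which foreshadows the NTR derivation of the \textsc{Muon} step in Appendix~\ref{app:teon-muon-steepest}.
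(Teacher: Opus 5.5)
Your proof is correct and follows the paper's argument: you split the max-type constraint into per-slice operator-norm constraints, separate the inner product over slices, and apply operator/nuclear-norm duality to each slice. The only difference is that you verify that duality explicitly (via the Hölder bound and the maximizer $\mat{U}_k\mat{V}_k^\top$), whereas the paper simply cites it as a standard fact.
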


\begin{proof}
By definition, the constraint $\|\ten{Y}\|_{\textsc{muon}}\le 1$ is equivalent to
$\|Y^{(k)}\|_{\mathrm{op}}\le 1$ for all $k\in[K]$. Hence
\begin{align}
\big(\|\cdot\|_{\textsc{muon}}\big)_*(\ten{X})
&=
\sup_{\|\ten{Y}\|_{\textsc{muon}}\le 1}\ \langle \ten{X},\ten{Y}\rangle
=
\sup_{\max_{k}\|\mat{Y}^{(k)}\|_{\mathrm{op}}\le 1}
\sum_{k=1}^K \langle \mat{X}^{(k)},\mat{Y}^{(k)}\rangle_F
\nonumber\\
&=
\sum_{k=1}^K \sup_{\|\mat{Y}^{(k)}\|_{\mathrm{op}}\le 1}\ \langle \mat{X}^{(k)},\mat{Y}^{(k)}\rangle_F
=
\sum_{k=1}^K \|\mat{X}^{(k)}\|_*,
\end{align}
where the last equality again uses that the dual of $\|\cdot\|_{\mathrm{op}}$ is $\|\cdot\|_*$.
\end{proof}

\begin{lemma}[Norm equivalence constants for \textsc{Muon} and \textsc{Teon}]
\label{lem:rho_muon_teon}
The norm equivalence constants in Assumption~\ref{ass:teon_normeq}
for the \textsc{Muon} norm and the \textsc{Teon} norms
(mode-$i$, $i\in\{1,2\}$) are equal to one. That is,
\[
\rho_{\textsc{muon}} = 1,
\qquad
\rho_{\textsc{teon}-i} = 1 \quad (i=1,2).
\]
\end{lemma}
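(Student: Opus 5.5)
Our plan is to show $\|\ten{X}\|\le\|\ten{X}\|_F$ for each of the three norms, and then to exhibit a tensor attaining equality. The constant $\rho$ in Assumption~\ref{ass:teon_normeq} is understood as the smallest admissible one, $\rho=\sup_{\ten{X}\neq 0}\|\ten{X}\|/\|\ten{X}\|_F$. Both steps reduce to the elementary matrix fact $\|\mat{A}\|_{\mathrm{op}}=\sigma_1(\mat{A})\le\big(\sum_j\sigma_j(\mat{A})^2\big)^{1/2}=\|\mat{A}\|_F$, which holds with equality exactly when $\mat{A}$ has rank at most one.

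For the \textsc{Teon} norms with $i\in\{1,2\}$, we first note that matricization only permutes entries. Hence $\|\mathcal{M}_i(\ten{X})\|_F=\|\ten{X}\|_F$; this is the same invariance of the Frobenius inner product already used in Lemma~\ref{lem:dual-teon}. It follows that
$\|\ten{X}\|_{\textsc{teon}-i}=\|\mathcal{M}_i(\ten{X})\|_{\mathrm{op}}\le\|\mathcal{M}_i(\ten{X})\|_F=\|\ten{X}\|_F$.
For the \textsc{Muon} norm, let $k^\star$ be the index attaining the maximum. Then
$\|\ten{X}\|_{\textsc{muon}}=\|\mat{X}^{(k^\star)}\|_{\mathrm{op}}\le\|\mat{X}^{(k^\star)}\|_F\le\big(\sum_k\|\mat{X}^{(k)}\|_F^2\big)^{1/2}=\|\ten{X}\|_F$,
since the slices partition the entries of $\ten{X}$. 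These bounds give $\rho\le 1$ in all cases.

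For tightness we take $\ten{X}$ with a single nonzero entry, $x_{111}=1$, and all other entries zero. Then $\|\ten{X}\|_F=1$. Every slice and every matricization of this tensor is a matrix with a single unit entry, so its operator norm is $1$. Equality therefore holds for all three norms, and $\rho=1$ exactly.

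There is no real obstacle here. The only point requiring care is to state that $\rho$ denotes the tight constant, since otherwise ``equal to one'' simply means that $\rho=1$ is admissible. The same argument also gives $\rho_{\textsc{teon}-3}=1$, but we restrict the statement to $i\in\{1,2\}$ as in the lemma.
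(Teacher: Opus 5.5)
Your proof is correct and follows the paper's argument: you bound the operator norm by the Frobenius norm, slice-wise together with $\max_k \le \big(\sum_k \cdot^{\,2}\big)^{1/2}$ for \textsc{Muon}, and through Frobenius invariance of matricization for \textsc{Teon}-$i$. Your single-entry example showing that the bound is attained is a useful addition, since the paper only proves $\|\ten{X}\|\le\|\ten{X}\|_F$ and then asserts $\rho=1$ without checking tightness.
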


\begin{proof}
Let $\ten{X}\in\mathbb{R}^{A\times B\times K}$ with slices
$\ten{X}[:,:,k]=\mat{X}^{(k)}$.

\paragraph{Muon.}
By definition,
\[
\|\ten{X}\|_{\textsc{muon}}
\;:=\;
\max_{k\in[K]}\|\mat{X}^{(k)}\|_{\mathrm{op}} .
\]
For each slice,
$\|\mat{X}^{(k)}\|_{\mathrm{op}}\le \|\mat{X}^{(k)}\|_{F}$.
Hence,
\[
\|\ten{X}\|_{\textsc{muon}}
\le
\max_k \|\mat{X}^{(k)}\|_{F}
\le
\Big(\sum_{k=1}^K \|\mat{X}^{(k)}\|_{F}^2\Big)^{1/2}
=
\|\ten{X}\|_{F},
\]
which implies $\rho_{\textsc{muon}}=1$.

\paragraph{Teon (mode-$i$, $i=1,2$).}
By definition,
\[
\|\ten{X}\|_{\textsc{teon}-i}
\;:=\;
\|\mathcal{M}_i(\ten{X})\|_{\mathrm{op}} .
\]
The operator norm of any matrix is upper bounded by its Frobenius norm,
so
\[
\|\ten{X}\|_{\textsc{teon}-i}
=
\|\mathcal{M}_i(\ten{X})\|_{\mathrm{op}}
\le
\|\mathcal{M}_i(\ten{X})\|_{F}.
\]
Since matricization preserves the Frobenius norm,
$\|\mathcal{M}_i(\ten{X})\|_{F}=\|\ten{X}\|_{F}$,
we obtain
\[
\|\ten{X}\|_{\textsc{teon}-i}\le \|\ten{X}\|_{F},
\]
which implies $\rho_{\textsc{teon}-i}=1$.
\end{proof}

\subsection{\textsc{Teon}/\textsc{Muon} step as steepest descent}
\label{app:teon-muon-steepest}

We show that \textsc{Teon} and \textsc{Muon} updates are the exact solutions to the NTR linearized trust-region
subproblem under their respective norms. Throughout this subsection, $\ten{G}\in\mathbb{R}^{m\times n\times K}$
denotes the stacked gradient tensor, and $\langle \cdot,\cdot\rangle$ denotes the standard Frobenius inner product on
tensors (equivalently, the Euclidean inner product after vectorization). We write $\mathcal{M}_i(\cdot)$ for the
mode-$i$ matricization and $\mathcal{M}_i^{-1}(\cdot)$ for its inverse.

\begin{proposition}[\textsc{Teon} step as steepest descent / trust-region step]
\label{prop:teon-steepest}
Fix $i\in\{1,2,3\}$ and consider the trust-region subproblem
\begin{equation}
\label{eq:ntr-linear-teon}
\min_{\Delta \ten{W}:\ \|\Delta \ten{W}\|_{\textsc{teon}-i}\le \eta}
\ \langle \ten{G},\Delta \ten{W} \rangle,
\end{equation}
where $\|\ten{X}\|_{\textsc{teon}-i}:=\|\mathcal{M}_i(\ten{X})\|_{\mathrm{op}}$.
Let $\mathcal{M}_i(\ten{G})=\mat{U}\mat{\Sigma} \mat{V}^\top$ be a (thin) SVD. Then an optimal solution is
\begin{equation}
\label{eq:teon-steepest-solution}
\Delta \ten{W}
\;=\;
-\eta\ \mathcal{M}_i^{-1}\!\big(\mat{U}\mat{V}^\top\big).
\end{equation}
\end{proposition}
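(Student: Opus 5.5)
The plan is to move the whole trust-region subproblem~\eqref{eq:ntr-linear-teon} into matrix space via $\mathcal{M}_i$, solve the standard matrix problem there, and map the solution back with $\mathcal{M}_i^{-1}$.

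\textbf{Step 1: transfer to a matrix problem.} Matricization $\mathcal{M}_i$ is a linear bijection between $\mathbb{R}^{m\times n\times K}$ and the corresponding matrix space. It only permutes entries, so it preserves the Frobenius inner product: $\langle \ten{G},\Delta\ten{W}\rangle=\langle \mathcal{M}_i(\ten{G}),\mathcal{M}_i(\Delta\ten{W})\rangle_F$. This is the same fact used in the proof of Lemma~\ref{lem:dual-teon}. Setting $\mat{D}:=\mathcal{M}_i(\Delta\ten{W})$, problem~\eqref{eq:ntr-linear-teon} becomes
\begin{equation*}
\min_{\|\mat{D}\|_{\mathrm{op}}\le \eta}\ \langle \mathcal{M}_i(\ten{G}),\mat{D}\rangle_F .
\end{equation*}
Because $\mathcal{M}_i$ is a bijection, optimal $\mat{D}$ and optimal $\Delta\ten{W}$ correspond one-to-one, and the two problems have the same optimal value.

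\textbf{Step 2: lower bound on the matrix problem.} Write $\mat{G}:=\mathcal{M}_i(\ten{G})=\mat{U}\mat{\Sigma}\mat{V}^\top$. By the duality between operator and nuclear norms (equivalently, von Neumann's trace inequality), $\langle \mat{G},\mat{D}\rangle_F\ge -\|\mat{G}\|_*\|\mat{D}\|_{\mathrm{op}}\ge -\eta\|\mat{G}\|_*$ for every feasible $\mat{D}$.

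\textbf{Step 3: show the candidate attains the bound.} Take $\mat{D}=-\eta\,\mat{U}\mat{V}^\top$.
\begin{itemize}
\item \emph{Feasibility.} With a thin SVD, $\mat{U}$ and $\mat{V}$ have orthonormal columns, so every nonzero singular value of $\mat{U}\mat{V}^\top$ equals $1$. Hence $\|\mat{D}\|_{\mathrm{op}}=\eta$ (or $\mat{D}=0$ when $\mat{G}=0$).
\item \emph{Optimality.} $\langle \mat{G},-\eta\mat{U}\mat{V}^\top\rangle_F=-\eta\,\mathrm{tr}(\mat{V}\mat{\Sigma}\mat{U}^\top\mat{U}\mat{V}^\top)=-\eta\,\mathrm{tr}(\mat{\Sigma})=-\eta\|\mat{G}\|_*$, which matches the lower bound from Step 2.
\end{itemize}
Pulling back gives $\Delta\ten{W}=-\eta\,\mathcal{M}_i^{-1}(\mat{U}\mat{V}^\top)$, which is~\eqref{eq:teon-steepest-solution}.

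The only step with real content is the inequality $|\langle \mat{G},\mat{D}\rangle_F|\le\|\mat{G}\|_*\|\mat{D}\|_{\mathrm{op}}$. I will cite it as the standard op/nuclear duality already invoked in Lemma~\ref{lem:dual-teon}. For a self-contained version, expand $\langle \mat{G},\mat{D}\rangle_F=\sum_j\sigma_j\,\mat{u}_j^\top\mat{D}\mat{v}_j$ and bound each term using $|\mat{u}_j^\top\mat{D}\mat{v}_j|\le\|\mat{D}\|_{\mathrm{op}}$.

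The statement claims only that this is \emph{an} optimal solution, so non-uniqueness (zero or repeated singular values) needs no separate treatment. Setting $K=1$, or applying the same argument slice by slice using Lemma~\ref{lem:dual-muon}, recovers the \textsc{Muon} step.
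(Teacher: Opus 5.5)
Your proposal is correct and follows the paper's route. Like the paper, you transfer the subproblem through the inner-product-preserving matricization $\mathcal{M}_i$ to the operator-norm ball, solve it by op/nuclear duality, and pull back with $\mathcal{M}_i^{-1}$; the only difference is that you explicitly check feasibility and that $-\eta\,\mat{U}\mat{V}^\top$ attains the lower bound $-\eta\|\mat{G}\|_*$, whereas the paper just cites $\mat{U}\mat{V}^\top$ as the known maximizer.
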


\begin{proof}
Fix $i\in\{1,2,3\}$. By definition of $\|\cdot\|_{\textsc{teon}-i}$ and invariance of the Frobenius inner product under matricization,
\begin{align}
\arg\min_{\|\Delta \ten{W}\|_{\textsc{teon}-i}\le \eta}
\langle \ten{G},\Delta \ten{W}\rangle
&=
\mathcal{M}_i^{-1}\!\left(
\arg\min_{\|\mathcal{M}_i(\Delta \ten{W})\|_{\mathrm{op}}\le \eta}
\left\langle \mathcal{M}_i(\ten{G}),\ \mathcal{M}_i(\Delta \ten{W}) \right\rangle_F
\right)
\label{eq:teon-proof-1}
\\
&=
\mathcal{M}_i^{-1}\!\left(
\arg\min_{\|\mat{Y}\|_{\mathrm{op}}\le \eta}
\left\langle \mathcal{M}_i(\ten{G}),\ \mat{Y} \right\rangle_F
\right).
\label{eq:teon-proof-2}
\end{align}
Let $\mat{Y}=\eta \tilde{\mat{Y}}$ with $\|\tilde{\mat{Y}}\|_{\mathrm{op}}\le 1$. Then
\begin{align}
\arg\min_{\|\mat{Y}\|_{\mathrm{op}}\le \eta}
\left\langle \mathcal{M}_i(\ten{G}),\ \mat{Y} \right\rangle_F
&=
-\eta\ \arg\max_{\|\tilde{\mat{Y}}\|_{\mathrm{op}}\le 1}
\left\langle \mathcal{M}_i(\ten{G}),\ \tilde{\mat{Y}} \right\rangle_F.
\label{eq:teon-proof-3}
\end{align}
If $\mathcal{M}_i(\ten{G})=\mat{U}\mat{\Sigma} \mat{V}^\top$ is a thin SVD, then the maximizer of
$\max_{\|\tilde{\mat{Y}}\|_{\mathrm{op}}\le 1}\langle \mathcal{M}_i(\ten{G}),\tilde{\mat{Y}}\rangle_F$
is $\tilde{\mat{Y}}^\star=\mat{U}\mat{V}^\top$ (a standard duality fact: the dual of $\|\cdot\|_{\mathrm{op}}$ is $\|\cdot\|_*$).
Substituting $\tilde{\mat{Y}}^\star$ into~\eqref{eq:teon-proof-3} and then into~\eqref{eq:teon-proof-2}--\eqref{eq:teon-proof-1} yields
$\Delta \ten{W}=-\eta\,\mathcal{M}_i^{-1}(\mat{U}\mat{V}^\top)$.
\end{proof}

\begin{proposition}[\textsc{Muon} step as steepest descent / trust-region step]
\label{prop:muon-steepest}
Consider the trust-region subproblem
\begin{equation}
\label{eq:ntr-linear-muon}
\min_{\Delta \ten{W}:\ \|\Delta \ten{W}\|_{\textsc{muon}}\le \eta}
\ \langle \ten{G},\Delta \ten{W} \rangle,
\end{equation}
where the \textsc{Muon} norm is
$\|\ten{X}\|_{\textsc{muon}}:=\max_{k\in[K]}\|\ten{X}[:,:, k]\|_{\mathrm{op}}$.
For each slice, let $\ten{G}[:,:, k]=G^{(k)}=\mat{U}^{(k)}\mat{\Sigma}^{(k)}(\mat{V}^{(k)})^\top$
be a (thin) SVD. Then an optimal solution has slices
\begin{equation}
\label{eq:muon-steepest-solution}
\Delta \ten{W}[:,:, k]
\;=\;
-\eta\ \mat{U}^{(k)}(\mat{V}^{(k)})^\top,
\qquad \forall k\in[K].
\end{equation}
\end{proposition}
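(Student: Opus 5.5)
The problem decouples across slices, and the proof should mirror that of Proposition~\ref{prop:teon-steepest}, applied slice by slice. First, write the objective using the Frobenius inner product on tensors:
\[
\langle \ten{G},\Delta\ten{W}\rangle=\sum_{k=1}^K\langle \mat{G}^{(k)},\Delta\ten{W}[:,:,k]\rangle_F .
\]
Second, note, as in the proof of Lemma~\ref{lem:dual-muon}, that the constraint $\|\Delta\ten{W}\|_{\textsc{muon}}\le\eta$ is equivalent to $\|\Delta\ten{W}[:,:,k]\|_{\mathrm{op}}\le\eta$ for every $k\in[K]$. The feasible set is therefore a Cartesian product of $K$ independent operator-norm balls.

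Because the objective is a sum of terms, each depending on a single slice, and the feasible set is a product, the minimization splits into $K$ independent problems:
\[
\min_{\|\mat{Y}^{(k)}\|_{\mathrm{op}}\le\eta}\langle \mat{G}^{(k)},\mat{Y}^{(k)}\rangle_F .
\]
The minimum of the sum is the sum of the minima, and any tuple of per-slice minimizers is a joint minimizer. For each slice, substitute $\mat{Y}^{(k)}=-\eta\tilde{\mat{Y}}$ with $\|\tilde{\mat{Y}}\|_{\mathrm{op}}\le 1$. This turns each subproblem into maximizing $\langle \mat{G}^{(k)},\tilde{\mat{Y}}\rangle_F$ over the unit operator-norm ball.

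To solve the maximization, I would invoke the same duality fact used in Proposition~\ref{prop:teon-steepest}. Take the thin SVD $\mat{G}^{(k)}=\mat{U}^{(k)}\mat{\Sigma}^{(k)}(\mat{V}^{(k)})^\top$. For any feasible $\tilde{\mat{Y}}$,
\[
\langle \mat{G}^{(k)},\tilde{\mat{Y}}\rangle_F=\sum_j\sigma_j\,\mat{u}_j^\top\tilde{\mat{Y}}\mat{v}_j\le\sum_j\sigma_j=\|\mat{G}^{(k)}\|_*,
\]
since $|\mat{u}_j^\top\tilde{\mat{Y}}\mat{v}_j|\le\|\tilde{\mat{Y}}\|_{\mathrm{op}}\le 1$. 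The choice $\tilde{\mat{Y}}=\mat{U}^{(k)}(\mat{V}^{(k)})^\top$ has operator norm at most one and attains this bound. Assembling the slices gives $\Delta\ten{W}[:,:,k]=-\eta\,\mat{U}^{(k)}(\mat{V}^{(k)})^\top$. As a consistency check, the optimal value is $-\eta\sum_k\|\mat{G}^{(k)}\|_*=-\eta\|\ten{G}\|_{\textsc{muon},*}$, matching Lemma~\ref{lem:dual-muon}.

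There is no real obstacle. The only points needing care are the separability argument, namely that a product constraint together with an additive objective lets us optimize each slice independently, and zero slices: if $\mat{G}^{(k)}=0$, the thin SVD is empty, $\mat{U}^{(k)}(\mat{V}^{(k)})^\top=0$, and that slice's objective is identically zero, so this choice is still optimal. The statement says "an optimal solution", so non-uniqueness when some $\mat{G}^{(k)}$ is rank-deficient does not need to be addressed.
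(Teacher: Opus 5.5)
Your proposal is correct and follows the paper's proof essentially step by step. In both, the constraint splits into per-slice operator-norm bounds, the objective decouples as a sum over slices, and each slice subproblem is solved by operator/nuclear-norm duality with maximizer $\mat{U}^{(k)}(\mat{V}^{(k)})^\top$; the only difference is that you verify the duality fact explicitly via the singular-value expansion and handle zero slices, whereas the paper simply cites it as standard.
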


\begin{proof}
By definition of $\|\cdot\|_{\textsc{muon}}$, the constraint $\|\Delta \ten{W}\|_{\textsc{muon}}\le\eta$ is equivalent to
$\|\Delta \ten{W}[:,:, k]\|_{\mathrm{op}}\le \eta$ for all $k\in[K]$. Moreover,
$\langle \ten{G},\Delta \ten{W}\rangle=\sum_{k=1}^K\langle \mat{G}^{(k)},\Delta \mat{W}^{(k)}\rangle_F$.
Therefore~\eqref{eq:ntr-linear-muon} decouples across layers:
\begin{align}
\min_{\|\Delta \ten{W}\|_{\textsc{muon}}\le \eta}
\ \langle \ten{G},\Delta \ten{W}\rangle
&=
\sum_{k=1}^K\
\min_{\|\Delta \mat{W}^{(k)}\|_{\mathrm{op}}\le \eta}
\ \langle \mat{G}^{(k)},\Delta \mat{W}^{(k)}\rangle_F.
\label{eq:muon-proof-decouple}
\end{align}
Each summand is the matrix trust-region subproblem under $\|\cdot\|_{\mathrm{op}}$.
Let $G^{(k)}=\mat{U}^{(k)}\mat{\Sigma}^{(k)}(\mat{V}^{(k)})^\top$ be a thin SVD.
By the same duality argument as in Proposition~\ref{prop:teon-steepest}, the minimizer is
$\Delta \mat{W}^{(k)}=-\eta\,\mat{U}^{(k)}(\mat{V}^{(k)})^\top$.
Stacking these slices yields~\eqref{eq:muon-steepest-solution}.
\end{proof}

\subsection{Proof for comparability of the norm and dual norm of \textsc{teon} and \textsc{muon} }
\label{app:norm_comparability}

\begin{lemma}[Comparability for $i=1$]
\label{lem:comp-i1}
For any $\ten{X}\in\mathbb{R}^{m\times n\times K}$,
\begin{equation}
\label{eq:comp-i1}
\|\ten{X}\|_{\textsc{muon}}
\;\le\;
\|\ten{X}\|_{\textsc{teon}-1}
\;\le\;
\sqrt{K}\ \|\ten{X}\|_{\textsc{muon}}.
\end{equation}
\end{lemma}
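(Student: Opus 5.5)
The plan is to write the mode-1 matricization explicitly as a horizontal concatenation of the slices. Up to a column permutation, which does not affect singular values, the unfolding is
\[
\mathcal{M}_1(\ten{X}) \;=\; \big[\,\mat{X}^{(1)}\ \ \mat{X}^{(2)}\ \ \cdots\ \ \mat{X}^{(K)}\,\big]\in\mathbb{R}^{m\times nK}.
\]
The reason is that each mode-1 fiber of $\ten{X}$ is a column of some slice $\mat{X}^{(k)}$. Since right-multiplication by a permutation matrix is orthogonal, we get $\|\mathcal{M}_1(\ten{X})\|_{\mathrm{op}}=\|[\mat{X}^{(1)}\cdots\mat{X}^{(K)}]\|_{\mathrm{op}}$. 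Both inequalities then reduce to statements about the operator norm of a block row matrix $\mat{A}=[\mat{X}^{(1)}\cdots\mat{X}^{(K)}]$.

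For the lower bound, each block is obtained as $\mat{X}^{(k)}=\mat{A}\mat{E}_k$, where $\mat{E}_k\in\mathbb{R}^{nK\times n}$ selects the $k$-th column block and has orthonormal columns, so $\|\mat{E}_k\|_{\mathrm{op}}=1$. Submultiplicativity gives $\|\mat{X}^{(k)}\|_{\mathrm{op}}\le\|\mat{A}\|_{\mathrm{op}}$ for every $k$. Taking the maximum over $k$ yields $\|\ten{X}\|_{\textsc{muon}}\le\|\ten{X}\|_{\textsc{teon}-1}$.

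For the upper bound, use $\|\mat{A}\|_{\mathrm{op}}^2=\|\mat{A}\mat{A}^\top\|_{\mathrm{op}}$ together with the identity
\[
\mat{A}\mat{A}^\top=\sum_{k=1}^K \mat{X}^{(k)}{\mat{X}^{(k)}}^\top .
\]
Each summand is positive semidefinite with operator norm $\|\mat{X}^{(k)}\|_{\mathrm{op}}^2$. The triangle inequality therefore gives
\[
\|\mat{A}\|_{\mathrm{op}}^2\le\sum_k\|\mat{X}^{(k)}\|_{\mathrm{op}}^2\le K\max_k\|\mat{X}^{(k)}\|_{\mathrm{op}}^2,
\]
and taking square roots gives $\|\ten{X}\|_{\textsc{teon}-1}\le\sqrt{K}\,\|\ten{X}\|_{\textsc{muon}}$. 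An equivalent route writes $\mat{A}\mat{w}=\sum_k\mat{X}^{(k)}\mat{w}_k$ for a unit vector $\mat{w}=(\mat{w}_1,\dots,\mat{w}_K)$ and applies Cauchy--Schwarz, using $\sum_k\|\mat{w}_k\|\le\sqrt{K}$.

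The only step needing care is justifying that the paper's mode-1 unfolding, with its fiber-ordering convention, equals the block concatenation up to a column permutation. Once that is noted, the rest is routine. As a sanity check, both bounds are tight: the lower bound holds with equality when only one slice is nonzero, and the upper bound holds with equality when all slices are equal.
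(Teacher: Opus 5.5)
Your proof is correct and is essentially the paper's argument. The lower bound is the same block-selection step (the paper's unit block vector $\mat{z}$ is your $\mat{E}_k\mat{v}$), and for the upper bound the paper takes the Cauchy--Schwarz route you mention as an alternative, arriving at the same intermediate bound $\|\mathcal{M}_1(\ten{X})\|_{\mathrm{op}}^2\le\sum_k\|\mat{X}^{(k)}\|_{\mathrm{op}}^2$ that your Gram-matrix identity gives. Your care about the unfolding convention covers a point the paper simply assumes by writing $\mathcal{M}_1(\ten{X})=[\mat{X}^{(1)},\dots,\mat{X}^{(K)}]$, and your equal-slices tightness example is correct.
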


\begin{proof}
Write $\mathcal{M}_1(\ten{X})=[\mat{X}^{(1)},\dots,\mat{X}^{(K)}]\in\mathbb{R}^{A\times (BK)}$.

\emph{Lower bound.}
Fix $k\in[K]$. For any unit vector $\mat{v}\in\mathbb{R}^{B}$, let $\mat{z}\in\mathbb{R}^{BK}$ be the unit block vector
whose $k$-th block equals $\mat{v}$ and all other blocks are zero. Then
$\mathcal{M}_1(\ten{X})\mat{z}=\mat{X}^{(k)}\mat{v}$, hence
\[
\|\mathcal{M}_1(\ten{X})\|_{\mathrm{op}}
=
\sup_{\|\mat{z}\|_2=1}\|\mathcal{M}_1(\ten{X})\mat{z}\|_2
\ge
\sup_{\|\mat{v}\|_2=1}\|\mat{X}^{(k)}\mat{v}\|_2
=
\|\mat{X}^{(k)}\|_{\mathrm{op}}.
\]
Taking the maximum over $k$ gives $\|\ten{X}\|_{\textsc{teon}-1}\ge \|\ten{X}\|_{\textsc{muon}}$.

\emph{Upper bound.}
Let $\mat{z}\in\mathbb{R}^{BK}$ be any unit vector with blocks
$\mat{z}=[\mat{z}_1^\top~\cdots~\mat{z}_K^\top]^\top$, $\mat{z}_k\in\mathbb{R}^B$.
Then $\mathcal{M}_1(\ten{X})\mat{z}=\sum_{k=1}^K \mat{X}^{(k)}\mat{z}_k$, and
\begin{align*}
\|\mathcal{M}_1(\ten{X})\mat{z}\|_2
&\le
\sum_{k=1}^K \|\mat{X}^{(k)}\mat{z}_k\|_2
\le
\sum_{k=1}^K \|\mat{X}^{(k)}\|_{\mathrm{op}}\,\|\mat{z}_k\|_2 \\
&\le
\Big(\sum_{k=1}^K \|\mat{X}^{(k)}\|_{\mathrm{op}}^2\Big)^{1/2}
\Big(\sum_{k=1}^K \|\mat{z}_k\|_2^2\Big)^{1/2}
=
\Big(\sum_{k=1}^K \|\mat{X}^{(k)}\|_{\mathrm{op}}^2\Big)^{1/2}.
\end{align*}
Taking $\sup_{\|\mat{z}\|_2=1}$ yields
\[
\|\mathcal{M}_1(\ten{X})\|_{\mathrm{op}}
\le
\Big(\sum_{k=1}^K \|\mat{X}^{(k)}\|_{\mathrm{op}}^2\Big)^{1/2}
\le
\sqrt{K}\,\max_{k}\|\mat{X}^{(k)}\|_{\mathrm{op}}
=
\sqrt{K}\ \|\ten{X}\|_{\textsc{muon}}.
\]
\end{proof}

\begin{lemma}[Comparability for $i=2$]
\label{lem:comp-i2}
For any $\ten{X}\in\mathbb{R}^{m\times n\times K}$,
\begin{equation}
\label{eq:comp-i2}
\|\ten{X}\|_{\textsc{muon}}
\;\le\;
\|\ten{X}\|_{\textsc{teon}-2}
\;\le\;
\sqrt{K}\ \|\ten{X}\|_{\textsc{muon}}.
\end{equation}
\end{lemma}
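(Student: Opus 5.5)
The plan is to reduce to Lemma~\ref{lem:comp-i1} through a transposition identity. Write $\ten{X}^\top$ for the tensor whose $k$-th slice is $(\mat{X}^{(k)})^\top\in\mathbb{R}^{n\times m}$. The mode-2 matricization of $\ten{X}$ collects the mode-2 fibers, which are the rows of each slice. So, up to a fixed permutation of columns,
\[
\mathcal{M}_2(\ten{X}) = \big[(\mat{X}^{(1)})^\top,\dots,(\mat{X}^{(K)})^\top\big] = \mathcal{M}_1(\ten{X}^\top)\in\mathbb{R}^{n\times (mK)}.
\]
The exact column order depends on the matricization convention, but a column permutation is right-multiplication by a permutation matrix. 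That is orthogonal, so the operator norm is unchanged. Hence $\|\ten{X}\|_{\textsc{teon}-2}=\|\ten{X}^\top\|_{\textsc{teon}-1}$.

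Next, transposition preserves the operator norm of each slice, so $\|\ten{X}^\top\|_{\textsc{muon}}=\max_k\|(\mat{X}^{(k)})^\top\|_{\mathrm{op}}=\|\ten{X}\|_{\textsc{muon}}$. Applying Lemma~\ref{lem:comp-i1} to $\ten{X}^\top\in\mathbb{R}^{n\times m\times K}$ then gives
\[
\|\ten{X}\|_{\textsc{muon}}=\|\ten{X}^\top\|_{\textsc{muon}}\le\|\ten{X}^\top\|_{\textsc{teon}-1}=\|\ten{X}\|_{\textsc{teon}-2}\le\sqrt{K}\,\|\ten{X}\|_{\textsc{muon}}.
\]
Lemma~\ref{lem:comp-i1} was stated for arbitrary dimensions (its proof uses $A\times B$), so swapping $m$ and $n$ is harmless.

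The only point needing care is the matricization convention, i.e.\ confirming that $\mathcal{M}_2$ really is the horizontal concatenation of the transposed slices up to a column permutation. To avoid relying on the convention at all, I will also give a direct fallback that repeats the proof of Lemma~\ref{lem:comp-i1} with the blocks $(\mat{X}^{(k)})^\top$.

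For the lower bound in the fallback, choose a block unit vector supported on block $k$. This gives $\|\mathcal{M}_2(\ten{X})\|_{\mathrm{op}}\ge\|(\mat{X}^{(k)})^\top\|_{\mathrm{op}}=\|\mat{X}^{(k)}\|_{\mathrm{op}}$, and we then take the maximum over $k$.

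For the upper bound in the fallback, take any unit $\mat{z}=[\mat{z}_1^\top\cdots\mat{z}_K^\top]^\top$ with $\mat{z}_k\in\mathbb{R}^m$. Apply the triangle inequality to $\sum_k(\mat{X}^{(k)})^\top\mat{z}_k$, bound each term by $\|\mat{X}^{(k)}\|_{\mathrm{op}}\|\mat{z}_k\|_2$, and use Cauchy--Schwarz. This yields the bound $(\sum_k\|\mat{X}^{(k)}\|_{\mathrm{op}}^2)^{1/2}\le\sqrt{K}\max_k\|\mat{X}^{(k)}\|_{\mathrm{op}}$.
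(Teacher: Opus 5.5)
Your proposal is correct and follows essentially the same route as the paper. The paper likewise writes $\mathcal{M}_2(\ten{X})=[(\mat{X}^{(1)})^\top~\cdots~(\mat{X}^{(K)})^\top]$ and reruns the argument of Lemma~\ref{lem:comp-i1} using $\|(\mat{X}^{(k)})^\top\|_{\mathrm{op}}=\|\mat{X}^{(k)}\|_{\mathrm{op}}$. Your added remark that a column permutation does not change the operator norm makes the argument independent of the unfolding convention, which the paper simply takes for granted.
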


\begin{proof}
Under the standard mode-2 unfolding,
$\mathcal{M}_2(\ten{X})=[(\mat{X}^{(1)})^\top~\cdots~(\mat{X}^{(K)})^\top]\in\mathbb{R}^{B\times (AK)}$,
which is again a horizontal concatenation of $K$ blocks. The proof is identical to
Lemma~\ref{lem:comp-i1}, using $\|(\mat{X}^{(k)})^\top\|_{\mathrm{op}}=\|\mat{X}^{(k)}\|_{\mathrm{op}}$.
\end{proof}

\begin{lemma}[Dual-norm comparability]
\label{lem:dual-comparability}
Let $\|\cdot\|_a$ and $\|\cdot\|_b$ be norms on an inner-product space with duals
$\|\cdot\|_{a,*}$ and $\|\cdot\|_{b,*}$.
If there exist $\alpha,\beta>0$ such that for all $\mat{x}$,
\[
\alpha \|\mat{x}\|_a \le \|\mat{x}\|_b \le \beta \|\mat{x}\|_a,
\]
then for all $\mat{g}$,
\[
\frac{1}{\beta}\|\mat{g}\|_{a,*} \le \|\mat{g}\|_{b,*} \le \frac{1}{\alpha}\|\mat{g}\|_{a,*}.
\]
\end{lemma}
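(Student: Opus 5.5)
We prove the two inequalities separately, working directly from the definition of the dual norm as a supremum over the primal unit ball:
\[
\|\mat{g}\|_{a,*}=\sup_{\|\mat{x}\|_a\le 1}\langle \mat{g},\mat{x}\rangle,
\qquad
\|\mat{g}\|_{b,*}=\sup_{\|\mat{x}\|_b\le 1}\langle \mat{g},\mat{x}\rangle .
\]
The idea is to compare the two unit balls: the hypothesis $\alpha\|\cdot\|_a\le\|\cdot\|_b\le\beta\|\cdot\|_a$ sandwiches the $b$-ball between two rescaled $a$-balls. A supremum of a linear functional over nested sets is monotone under inclusion, and it scales linearly when the set is dilated.

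\textbf{Ball inclusions.} If $\|\mat{x}\|_a\le 1/\beta$, then $\|\mat{x}\|_b\le\beta\|\mat{x}\|_a\le 1$. If $\|\mat{x}\|_b\le 1$, then $\|\mat{x}\|_a\le \|\mat{x}\|_b/\alpha\le 1/\alpha$. Writing $B_a(r)$ for the $a$-ball of radius $r$, this gives
\[
B_a(1/\beta)\subseteq B_b(1)\subseteq B_a(1/\alpha).
\]

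\textbf{Taking suprema.} Take the supremum of $\langle\mat{g},\cdot\rangle$ over each of the three sets. Substituting $\mat{x}=r\mat{y}$ with $\|\mat{y}\|_a\le 1$ shows that
\[
\sup_{\mat{x}\in B_a(r)}\langle\mat{g},\mat{x}\rangle=r\|\mat{g}\|_{a,*}.
\]
The two inclusions therefore yield $\frac1\beta\|\mat{g}\|_{a,*}\le\|\mat{g}\|_{b,*}\le\frac1\alpha\|\mat{g}\|_{a,*}$, which is the claim. Both inequalities hold for every $\mat{g}$ and require no attainment of the suprema.

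The only step needing care is matching the direction of each inclusion with the correct constant, since the roles of $\alpha$ and $\beta$ swap when passing to the duals. Everything else is immediate.

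Applied with $\|\cdot\|_a=\|\cdot\|_{\textsc{muon}}$, $\|\cdot\|_b=\|\cdot\|_{\textsc{teon}-i}$, $\alpha=1$ and $\beta=\sqrt{K}$ (Lemmas~\ref{lem:comp-i1} and~\ref{lem:comp-i2}), this gives the dual relation used in the sketch of proof of Theorem~\ref{thm:convergence_bound_main}:
\[
\tfrac{1}{\sqrt K}\|\cdot\|_{\textsc{muon},*}\le\|\cdot\|_{\textsc{teon}-i,*}\le\|\cdot\|_{\textsc{muon},*}.
\]
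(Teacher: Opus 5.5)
Your argument is correct and complete. The inclusions $B_a(1/\beta)\subseteq B_b(1)\subseteq B_a(1/\alpha)$ follow directly from the hypothesis. Monotonicity of the supremum under inclusion, together with the scaling identity $\sup_{B_a(r)}\langle\mat{g},\cdot\rangle=r\|\mat{g}\|_{a,*}$, then gives both bounds with the constants on the correct sides.

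The paper states this lemma without any proof, treating it as a standard duality fact, so there is no competing argument to compare. The only difference is in how the lemma is applied. In Corollary~\ref{cor:dual-i12} the paper takes $\|\cdot\|_a=\|\cdot\|_{\textsc{teon}-i}$ and $\|\cdot\|_b=\|\cdot\|_{\textsc{muon}}$, which implicitly means $\alpha=1/\sqrt{K}$ and $\beta=1$. You use the reverse assignment with $\alpha=1$ and $\beta=\sqrt{K}$. Both yield the same two-sided relation between the dual norms.
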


\begin{corollary}[Dual comparability for $i=1,2$]
\label{cor:dual-i12}
For $i\in\{1,2\}$ and all $\ten{X}$,
\begin{equation}
\label{eq:dual-comp-i12}
\|\ten{X}\|_{\textsc{teon}-i,*}
\;\le\;
\|\ten{X}\|_{\textsc{muon},*}
\;\le\;
\sqrt{K}\ \|\ten{X}\|_{\textsc{teon}-i,*}.
\end{equation}
\end{corollary}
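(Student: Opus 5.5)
This follows directly from the primal comparability in Lemma~\ref{lem:comp-i1} and Lemma~\ref{lem:comp-i2}, pushed through the dual-norm transfer in Lemma~\ref{lem:dual-comparability}. Fix $i\in\{1,2\}$ and set $\|\cdot\|_a:=\|\cdot\|_{\textsc{muon}}$ and $\|\cdot\|_b:=\|\cdot\|_{\textsc{teon}-i}$. Both are norms on $\mathbb{R}^{m\times n\times K}$ equipped with the Frobenius inner product $\langle\cdot,\cdot\rangle$. The earlier lemmas give, for every $\ten{X}$,
$$\|\ten{X}\|_{\textsc{muon}}\le\|\ten{X}\|_{\textsc{teon}-i}\le\sqrt{K}\,\|\ten{X}\|_{\textsc{muon}},$$
so the hypothesis of Lemma~\ref{lem:dual-comparability} holds with $\alpha=1$ and $\beta=\sqrt{K}$.

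Lemma~\ref{lem:dual-comparability} then yields
$$\tfrac{1}{\sqrt{K}}\|\ten{X}\|_{a,*}\le\|\ten{X}\|_{b,*}\le\|\ten{X}\|_{a,*}.$$
Lemma~\ref{lem:dual-muon} identifies the dual of the \textsc{Muon} norm as $\sum_k\|\mat{X}^{(k)}\|_*=\|\ten{X}\|_{\textsc{muon},*}$. Lemma~\ref{lem:dual-teon} identifies the dual of the \textsc{Teon} norm as $\|\mathcal{M}_i(\ten{X})\|_*=\|\ten{X}\|_{\textsc{teon}-i,*}$. Substituting these identifications gives $\|\ten{X}\|_{\textsc{teon}-i,*}\le\|\ten{X}\|_{\textsc{muon},*}$. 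Multiplying the left inequality by $\sqrt{K}$ gives $\|\ten{X}\|_{\textsc{muon},*}\le\sqrt{K}\,\|\ten{X}\|_{\textsc{teon}-i,*}$. Together these are \eqref{eq:dual-comp-i12}.

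Lemma~\ref{lem:dual-comparability} is stated without proof, so I would add its one-line justification for completeness. The inclusion $\{\|\mat{y}\|_b\le1\}\subseteq\{\|\mat{y}\|_a\le 1/\alpha\}$ gives $\|\mat{g}\|_{b,*}=\sup_{\|\mat{y}\|_b\le1}\langle\mat{g},\mat{y}\rangle\le\frac1\alpha\|\mat{g}\|_{a,*}$. Symmetrically, $\{\|\mat{y}\|_a\le1/\beta\}\subseteq\{\|\mat{y}\|_b\le1\}$ gives $\|\mat{g}\|_{b,*}\ge\frac1\beta\|\mat{g}\|_{a,*}$. Both steps use only homogeneity of the supremum.

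There is no real obstacle. The only point needing care is the direction of the inequalities: the larger primal norm (\textsc{Teon}) has the smaller dual. As a sanity check on tightness, the rank-one configuration of Lemma~\ref{lem:teon_max_gain_modes} attains the factor $\sqrt{K}$. There $\mathcal{M}_1(\ten{X})=[\mat{u}^{(1)}\mat{v}^\top,\dots,\mat{u}^{(K)}\mat{v}^\top]$ has $K$ unit singular values, and each unit-norm slice contributes nuclear norm $1$. Hence both $\|\ten{X}\|_{\textsc{teon}-1,*}$ and $\|\ten{X}\|_{\textsc{muon},*}$ equal $K$, and the lower inequality is an equality. For the $\sqrt{K}$ end, take identical slices $\mat{X}^{(k)}=\mat{u}\mat{v}^\top$. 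Then $\mathcal{M}_1(\ten{X})=\mat{u}\,(\mathbf{1}_K\otimes\mat{v})^\top$ is rank one with nuclear norm $\sqrt{K}$, while $\|\ten{X}\|_{\textsc{muon},*}=K$, so the upper inequality is an equality.
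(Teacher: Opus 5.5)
Your proof is correct and follows the paper's argument: Lemma~\ref{lem:dual-comparability} applied to the primal bounds of Lemmas~\ref{lem:comp-i1}--\ref{lem:comp-i2}. You take $\|\cdot\|_a$ to be the \textsc{Muon} norm with $\alpha=1$, $\beta=\sqrt{K}$, whereas the paper takes the \textsc{Teon} norm with $\alpha=1/\sqrt{K}$, $\beta=1$; this difference is immaterial. Your inclusion-of-balls proof of Lemma~\ref{lem:dual-comparability} also supplies a step the paper states without proof.

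There is one slip in your optional tightness remark. The shared-$\mat{v}$, orthonormal-$\mat{u}^{(k)}$ configuration of Lemma~\ref{lem:teon_max_gain_modes} gives ratio $1$ (equality at the lower end), not $\sqrt{K}$, exactly as your own singular-value computation shows. So the clause ``attains the factor $\sqrt{K}$'' should be removed. For mode-1, the $\sqrt{K}$ end is realized by slices sharing a left factor, such as your identical-slice example.
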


\begin{proof}
Apply Lemma~\ref{lem:dual-comparability} with
$\|\cdot\|_a=\|\cdot\|_{\textsc{teon}-i}$,
$\|\cdot\|_b=\|\cdot\|_{\textsc{muon}}$,
and use Lemmas~\ref{lem:comp-i1}--\ref{lem:comp-i2}.
\end{proof}

\subsection{Proof for comparability of smoothness constants }
\label{app:smoothness-comparability}

\begin{lemma}[Smoothness constant comparability]
\label{lem:smooth-constants}
Let $L_{\textsc{muon}}$ be the smoothness constant under $\|\cdot\|_{\textsc{muon}}$ and
$L_{\textsc{teon}-i}$ the smoothness constant under $\|\cdot\|_{\textsc{teon}-i}$.
Then for $i\in\{1,2\}$,
\begin{equation}
\label{eq:smooth-i12}
L_{\textsc{teon}-i}
\;\le\;
L_{\textsc{muon}}
\;\le\;
K\cdot L_{\textsc{teon}-i},
\end{equation}
\end{lemma}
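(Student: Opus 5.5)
The plan is to treat each smoothness constant as the \emph{smallest} constant for which Assumption~\ref{ass:teon_smooth} holds under the corresponding norm. Concretely,
\[
L_{\textsc{muon}} := \sup_{\ten{W}\neq\ten{W}'} \frac{\|\nabla f(\ten{W})-\nabla f(\ten{W}')\|_{\textsc{muon},*}}{\|\ten{W}-\ten{W}'\|_{\textsc{muon}}},
\]
and similarly for $L_{\textsc{teon}-i}$. With this convention both inequalities follow by chaining the primal comparability (Lemmas~\ref{lem:comp-i1}--\ref{lem:comp-i2}) with the dual comparability (Corollary~\ref{cor:dual-i12}). No property of $f$ beyond the definition is needed. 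Throughout, fix $i\in\{1,2\}$ and write $\Delta\ten{G}:=\nabla f(\ten{W})-\nabla f(\ten{W}')$ and $\Delta\ten{W}:=\ten{W}-\ten{W}'$.

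\emph{Lower bound $L_{\textsc{teon}-i}\le L_{\textsc{muon}}$.} First use the left inequality of Corollary~\ref{cor:dual-i12}, $\|\Delta\ten{G}\|_{\textsc{teon}-i,*}\le\|\Delta\ten{G}\|_{\textsc{muon},*}$. Next apply \textsc{Muon}-smoothness to bound this by $L_{\textsc{muon}}\|\Delta\ten{W}\|_{\textsc{muon}}$. Finally use the left inequality of Lemma~\ref{lem:comp-i1} (or Lemma~\ref{lem:comp-i2}), $\|\Delta\ten{W}\|_{\textsc{muon}}\le\|\Delta\ten{W}\|_{\textsc{teon}-i}$. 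Together these give
\[
\|\Delta\ten{G}\|_{\textsc{teon}-i,*}\le L_{\textsc{muon}}\|\Delta\ten{W}\|_{\textsc{teon}-i}
\]
for all pairs $\ten{W},\ten{W}'$. Hence $L_{\textsc{muon}}$ is an admissible constant for the \textsc{Teon} norm, and minimality gives $L_{\textsc{teon}-i}\le L_{\textsc{muon}}$.

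\emph{Upper bound $L_{\textsc{muon}}\le K L_{\textsc{teon}-i}$.} This runs the chain in the opposite direction, paying a factor $\sqrt{K}$ at each end. The right inequality of Corollary~\ref{cor:dual-i12} gives $\|\Delta\ten{G}\|_{\textsc{muon},*}\le\sqrt{K}\|\Delta\ten{G}\|_{\textsc{teon}-i,*}$. \textsc{Teon}-smoothness bounds this by $\sqrt{K}L_{\textsc{teon}-i}\|\Delta\ten{W}\|_{\textsc{teon}-i}$. The right inequality of Lemma~\ref{lem:comp-i1} (resp.\ Lemma~\ref{lem:comp-i2}) then gives $\|\Delta\ten{W}\|_{\textsc{teon}-i}\le\sqrt{K}\|\Delta\ten{W}\|_{\textsc{muon}}$. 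Multiplying yields the admissible constant $K L_{\textsc{teon}-i}$ for the \textsc{Muon} norm, and minimality gives the claim.

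\emph{Main obstacle.} The only subtle point is interpretive rather than computational: the statement is false for arbitrary admissible constants, since one may always inflate $L_{\textsc{muon}}$. So I would state explicitly at the outset that $L$ denotes the optimal (infimal) Lipschitz constant of the gradient map between the primal and dual norms. I would also note that Lemma~\ref{lem:dual-comparability}, on which Corollary~\ref{cor:dual-i12} rests, follows in one line from the definition $\|\mat{g}\|_* = \sup_{\|\mat{x}\|\le 1}\langle \mat{g},\mat{x}\rangle$. If $\alpha\|\mat{x}\|_a\le\|\mat{x}\|_b$, then the $b$-unit ball is contained in the $a$-ball of radius $1/\alpha$, which gives $\|\mat{g}\|_{b,*}\le\alpha^{-1}\|\mat{g}\|_{a,*}$, and symmetrically for $\beta$. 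I would include this one-line argument for completeness, since the corollary is used twice above.
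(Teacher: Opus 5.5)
Your proposal is correct and follows essentially the same route as the paper. The paper also defines $L$ as the supremum of the ratio $\|\nabla f(\ten{X})-\nabla f(\ten{Y})\|_{*}/\|\ten{X}-\ten{Y}\|$, sandwiches the \textsc{Teon} and \textsc{Muon} ratios pointwise using Lemmas~\ref{lem:comp-i1}--\ref{lem:comp-i2} and Corollary~\ref{cor:dual-i12}, and then takes suprema, which is equivalent to your admissible-constant-plus-minimality chain.
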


\begin{proof}
We use the smoothness definition in Assumption~\ref{ass:teon_smooth} from the body.
For any $\ten{X}\neq \ten{Y}$, define the ratio
\[
R(\ten{X},\ten{Y})
:=
\frac{\|\nabla f(\ten{X})-\nabla f(\ten{Y})\|_{*}}
{\|\ten{X}-\ten{Y}\|}.
\]
Then $L=\sup_{\ten{X}\neq \ten{Y}} R(\ten{X},\ten{Y})$.

For $i\in\{1,2\}$, Lemmas~\ref{lem:comp-i1}--\ref{lem:comp-i2} give
$\|\cdot\|_{\textsc{muon}}\le \|\cdot\|_{\textsc{teon}-i}\le \sqrt{K}\|\cdot\|_{\textsc{muon}}$,
and Corollary~\ref{cor:dual-i12} gives
$\|\cdot\|_{\textsc{teon}-i,*}\le \|\cdot\|_{\textsc{muon},*}\le \sqrt{K}\|\cdot\|_{\textsc{teon}-i,*}$.
Therefore, for any $\ten{X}\neq \ten{Y}$,
\[
R_{\textsc{teon}-i}(\ten{X},\ten{Y})
\;\le\;
R_{\textsc{muon}}(\ten{X},\ten{Y})
\;\le\;
K\cdot R_{\textsc{teon}-i}(\ten{X},\ten{Y}),
\]
and taking suprema yields~\eqref{eq:smooth-i12}.

\end{proof}

\subsection{Proof of Theorem~\ref{thm:convergence_bound}} \label{app:Convergence gurantee comparison}

\begin{theorem}[Convergence Bound]
Consider minimizing $f(\ten{W}), \ten{W}\in \mathbb{R}^{m\times n\times K}$ 
and suppose $f$ is lower bounded by $f^\star$. $\Delta_0=\left(f(\ten{W}_0)-f^\star\right)$.
Define the best iterates of \textsc{Muon} and \textsc{Teon} as
$\tau_{\textsc{muon}} := \arg\min_{0\le t < T}\|\nabla f(\ten{W}_t)\|_{\textsc{muon},*}$
and
$\tau_{\textsc{teon}} := \arg\min_{0\le t < T}\|\nabla f(\ten{W}_t)\|_{\textsc{teon},*}$.

Under the same \textsc{teon} norm (c.f. Definition \ref{lem:norm_defs}), the convergence bounds satisfy:
\label{thm:convergence_bound}
\begin{align}
\|\nabla f(\ten{W}_{\tau_{\textsc{teon}}})\|_{\textsc{teon},*}
&\;\le\;
\sqrt{\frac{2L_{\textsc{teon}}\Delta_0}{T}}, \nonumber
\\
\|\nabla f(\ten{W}_{\tau_{\textsc{muon}}})\|_{\textsc{teon},*}
&\;\le\;
\sqrt{\frac{2L_{\textsc{muon}}\Delta_0}{T}} 
\\
&\;\le\;
\sqrt{K}\,
\sqrt{\frac{2L_{\textsc{teon}}\Delta_0}{T}} .\nonumber
\end{align}
\end{theorem}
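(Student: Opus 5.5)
The plan is to instantiate the simplified NTR guarantee \eqref{eq:teon_simple} twice, once under each geometry. Then I compare the resulting right-hand sides using the norm and smoothness comparisons already established in Appendices~\ref{app:norm_comparability}--\ref{app:smoothness-comparability}.

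First, for \textsc{Teon}, I take $\|\cdot\|=\|\cdot\|_{\textsc{teon}-i}$ with $i\in\{1,2\}$. Proposition~\ref{prop:teon-steepest} shows that the NTR step \eqref{eq:teon_ntr} with this norm is exactly the \textsc{Teon} update. Lemma~\ref{lem:dual-teon} identifies the dual norm as $\|\cdot\|_{\textsc{teon}-i,*}$, and Lemma~\ref{lem:rho_muon_teon} gives $\rho=1$. So Theorem~\ref{thm:teon_ntr_full} applies. In the setting $\sigma=0$, $\mu=0$, the choice $\eta_*=\sqrt{2\Delta_0/(TL_{\textsc{teon}})}$ reduces the bound to \eqref{eq:teon_simple}. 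Since the minimum over $t<T$ is attained at $\tau_{\textsc{teon}}$ by definition, this gives the first inequality.

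Second, for \textsc{Muon}, the same argument with $\|\cdot\|_{\textsc{muon}}$ applies. Here I use Proposition~\ref{prop:muon-steepest}, Lemma~\ref{lem:dual-muon}, and $\rho_{\textsc{muon}}=1$. This yields $\|\nabla f(\ten{W}_{\tau_{\textsc{muon}}})\|_{\textsc{muon},*}\le\sqrt{2L_{\textsc{muon}}\Delta_0/T}$. To express this in the \textsc{teon} dual norm, I apply Corollary~\ref{cor:dual-i12}, which gives $\|\cdot\|_{\textsc{teon}-i,*}\le\|\cdot\|_{\textsc{muon},*}$ pointwise. Evaluating at $\ten{W}_{\tau_{\textsc{muon}}}$ then gives the second inequality. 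The final inequality follows from Lemma~\ref{lem:smooth-constants}: since $L_{\textsc{muon}}\le K L_{\textsc{teon}-i}$, taking square roots produces the factor $\sqrt{K}$.

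The main obstacle is bookkeeping rather than analysis. The bound in Theorem~\ref{thm:teon_ntr_full} is on $\min_t$ of the dual norm in the optimizer's own geometry. The statement, however, measures the \textsc{Muon} iterate in the \textsc{Teon} dual norm at the \textsc{Muon}-best index $\tau_{\textsc{muon}}$. I must make sure the pointwise inequality is applied at that fixed index, not to a minimum taken in a different norm; the pointwise direction makes this valid.

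I also need to justify using the deterministic simplified form. I will state explicitly that the result is in the $\sigma=0$, $\mu=0$ regime with the optimally tuned $\eta$, so that the expectation disappears. The general case carries over as the full bound of Theorem~\ref{thm:teon_ntr_full}, with $L$ replaced accordingly.
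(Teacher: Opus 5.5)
Your proposal is correct and follows the paper's proof essentially step for step. You apply the simplified NTR bound \eqref{eq:teon_simple} in each geometry, transfer the \textsc{Muon} guarantee to the \textsc{Teon} dual norm via the pointwise inequality $\|\cdot\|_{\textsc{teon}-i,*}\le\|\cdot\|_{\textsc{muon},*}$ evaluated at $\tau_{\textsc{muon}}$, and finish with $L_{\textsc{muon}}\le K L_{\textsc{teon}-i}$ from Lemma~\ref{lem:smooth-constants}. The only difference is that you make explicit what the paper leaves implicit: the deterministic regime $\sigma=\mu=0$ with separately tuned step sizes $\eta_*$, the steepest-descent identification of each update, and $\rho=1$.
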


\begin{proof}
Define the best iterates of \textsc{Muon} and \textsc{Teon} as
$\tau_{\textsc{muon}} := \arg\min_{0\le t < T}\|\nabla f(\ten{W}_t)\|_{\textsc{muon},*}$
and
$\tau_{\textsc{teon}} := \arg\min_{0\le t < T}\|\nabla f(\ten{W}_t)\|_{\textsc{teon},*}$.

From the NTR convergence guarantee under the $\|\cdot\|_{\textsc{teon}}$ norm and
the corresponding smoothness constant $L_{\textsc{teon}}$, it follows that
\begin{equation}
\|\nabla f(\ten{W}_{\tau_{\textsc{teon}}})\|_{\textsc{teon},*}
\;\le\;
\sqrt{\frac{2L_{\textsc{teon}}\Delta_0}{T}} .
\label{eq:teon_convergence_guarantee}
\end{equation}

From the NTR convergence guarantee under the $\|\cdot\|_{\textsc{muon}}$ norm and
the corresponding smoothness constant $L_{\textsc{muon}}$, using the dual norm comparison in Lemma \ref{lem:dual-comparability}, it follows that
\begin{equation}
\|\nabla f(\ten{W}_{\tau_{\textsc{muon}}})\|_{\textsc{teon},*}
\;\le\;
\|\nabla f(\ten{W}_{\tau_{\textsc{muon}}})\|_{\textsc{muon},*}
\;\le\;
\sqrt{\frac{2L_{\textsc{muon}}\Delta_0}{T}} .
\label{eq:muon_convergence_guarantee}
\end{equation}

Using the smoothness-constant comparison in Lemma~\ref{lem:smooth-constants},
which states that $L_{\textsc{muon}}\le K L_{\textsc{teon}}$, we have
\begin{equation}
\|\nabla f(\ten{W}_{\tau_{\textsc{muon}}})\|_{\textsc{teon},*}
\;\le\;
\|\nabla f(\ten{W}_{\tau_{\textsc{muon}}})\|_{\textsc{muon},*}
\;\le\;
\sqrt{K}\,
\sqrt{\frac{2L_{\textsc{teon}}\Delta_0}{T}} .
\label{eq:muon_convergence_upper}
\end{equation}

Under \textsc{teon} norm metric, when the smoothness constant $L_{\textsc{teon}}\approx L_{\textsc{muon}}$, the convergence guarantees of \textsc{teon} (the right hand side of Eq. \eqref{eq:teon_convergence_guarantee}) and \textsc{muon} (the right hand side of Eq. \eqref{eq:muon_convergence_guarantee}) match up to constants. 

In the best case, the convergence guarantees of \textsc{teon} (the right hand side of Eq. \eqref{eq:teon_convergence_guarantee}) and \textsc{muon} (the right hand side of Eq. \eqref{eq:muon_convergence_upper}) indicates that \textsc{Teon} has an improved convergence bound over \textsc{Muon} by up to a factor of $\sqrt{K}$.

which completes the proof.
\end{proof}

\subsection{Proof of Lemma~\ref{lem:teon_max_gain_modes}}
\label{app:teon_max_gain_modes}
We prove the mode-1 case. The mode-2 case is symmetric.
\begin{proof}
Let $\ten{G}\in\mathbb{R}^{m\times n\times K}$ with slices
$\mat{G}^{(k)}=\mat{u}^{(k)}\mat{v}^\top$, where
$\mat{v}\in\mathbb{R}^n$ is shared across all $k$ and
$\{\mat{u}^{(k)}\}_{k=1}^K\subset\mathbb{R}^m$ are orthonormal.
By definition of the \textsc{Muon} norm,
\[
\|\ten{G}\|_{\textsc{muon}}
=
\max_{1\le k\le K}
\|\mat{G}^{(k)}\|_{\mathrm{op}}
=
\|\mat{v}\|_2 .
\]
The mode--1 matricization of $\ten{G}$ satisfies
\[
\mathcal{M}_1(\ten{G})
=
\bigl[
\mat{u}^{(1)}\mat{v}^\top \;\cdots\;
\mat{u}^{(K)}\mat{v}^\top
\bigr]
=
\mat{U}
\begin{bmatrix}
\mat{v}^\top & & \\
& \ddots & \\
& & \mat{v}^\top
\end{bmatrix},
\]
where $\mat{U}=[\mat{u}^{(1)},\dots,\mat{u}^{(K)}]$ has orthonormal columns.
It follows that
\[
\|\ten{G}\|_{\textsc{teon}-1}
=
\|\mathcal{M}_1(\ten{G})\|_{\mathrm{op}}
=
\sqrt{K}\,\|\mat{v}\|_2
=
\sqrt{K}\,\|\ten{G}\|_{\textsc{muon}} .
\]

By definition, the smoothness constant associated with a norm $\|\cdot\|$ is
\[
L_{\|\cdot\|}
=
\sup_{\|\Delta\|\le 1}
\|\nabla^2 f(\ten{W})[\Delta]\|_* .
\]
Since the norm comparison
$\|\Delta\|_{\textsc{teon}-1}\le\sqrt{K}\,\|\Delta\|_{\textsc{muon}}$
is tight for tensors of the above form, the induced smoothness bound satisfies
\[
L_{\textsc{muon}} = K\,L_{\textsc{teon}-1}.
\]

\end{proof}

\newpage
\section{Pre-training Hyperparameters}
\label{apx:Hyperparameters}
\subsection{Pre-training GPT}
\subsubsection{Model Configuration}
\begin{table}[h]
\centering

\begin{tabular}{lcccc}
\toprule
\textbf{Model} & $n_{\text{embd}}$ & $n_{\text{layer}}$ & $n_{\text{head}}$ & Param(M) \\
\midrule
GPT-Small & 768  & 12 & 12 & 124 \\
GPT-Base & 1024 & 24 & 16 & 362 \\
GPT-Large & 1280 & 36 & 20 & 774\\
\bottomrule
\end{tabular}
\caption{Architecture configurations of GPT models.}
\label{tab:gpt_model_configs}
\end{table}

\subsubsection{Hyperparameters for AdamW on GPT Models}

\begin{table}[htb]
\centering

\setlength{\tabcolsep}{3.5pt}
\begin{tabular}{lccc}
\toprule
\textbf{Hyperparameter} 
& \textbf{GPT-Small} 
& \textbf{GPT-Base} 
& \textbf{GPT-Large} \\
\midrule
Learning rate  & 0.004 & 0.004 & 0.0001 \\
Weight decay   & 0.1 & 0.1 & 0.1 \\
LR scheduler   & Cosine & Cosine & Linear \\
Warmup ratio   & 0.1 & 0.1 & 0.4 \\
\bottomrule
\end{tabular}
\caption{Training hyperparameters for different model scales for AdamW. For the learning rate, we swept from 1e-4 to 6e-3, and the best setting is reported. For GPT-Large, we found that using the learning rate scheduler from~\cite{amsel2025polar} yields the best performance. This schedule consists of a constant learning rate for the first 40\% of training steps, followed by a linear decay to zero.}
\label{tab:training_hparams_llama_adamw}
\end{table}

\subsubsection{Hyperparameters for \textsc{Teon}/\textsc{Muon} on GPT-Small}

\begin{table}[htb]
\centering
\begin{tabular}{lccc}
\toprule
\textbf{Hyperparameter} & \textbf{Muon} & \textbf{\textsc{Teon}} \\
\midrule
$Ortho(\cdot)$        & Polar-Express & Polar-Express \\
Learning rate           & $0.005$ & $0.005$ \\
LR scheduler            & Cosine & Cosine \\
Weight decay            & $0.1$ & $0.1$ \\
Warmup ratio            & $0.1$ & $0.1$ \\
\midrule
$Ortho(\cdot)$       & You            & You            \\
Learning rate           & $0.005$ & $0.005$ \\
LR scheduler            & Cosine & Cosine \\
Weight decay            & $0.1$ & $0.1$ \\
Warmup ratio            & $0.1$ & $0.1$ \\
\midrule
$Ortho(\cdot)$        & Jordan         & Jordan         \\
Learning rate           & $0.005$ & $0.005$ \\
LR scheduler            & Cosine & Cosine \\
Weight decay            & $0.1$ & $0.1$ \\
Warmup ratio            & $0.1$ & $0.1$ \\
\bottomrule
\end{tabular}
\caption{Training hyperparameters for GPT-Small on FineWeb 10B tokens. We adopt the best learning rate reported in \cite{amsel2025polar}. $K$=2 for all runs.}
\label{tab:hparams_gpt_small}
\end{table}

\newpage

\subsubsection{Hyperparameters for \textsc{Teon}/\textsc{Muon} on GPT-Base}

\begin{table}[htb]
\centering
\begin{tabular}{lccc}
\toprule
\textbf{Hyperparameter} & \textbf{\textsc{Muon}} & \textbf{\textsc{Teon}} \\
\midrule
$Ortho(\cdot)$        & Polar-Express & Polar-Express \\
Learning rate           & $0.02$ & $0.02$ \\
LR scheduler            & Cosine & Cosine \\
Weight decay            & $0.1$ & $0.1$ \\
Warmup ratio            & $0.1$ & $0.1$ \\
\midrule
$Ortho(\cdot)$       & You            & You            \\
Learning rate           & $0.02$ & $0.02$ \\
LR scheduler            & Cosine & Cosine \\
Weight decay            & $0.1$ & $0.1$ \\
Warmup ratio            & $0.1$ & $0.1$ \\
\midrule
$Ortho(\cdot)$        & Jordan         & Jordan         \\
Learning rate           & $0.02$ & $0.02$ \\
LR scheduler            & Cosine & Cosine \\
Weight decay            & $0.1$ & $0.1$ \\
Warmup ratio            & $0.1$ & $0.1$ \\
\bottomrule
\end{tabular}
\caption{Training hyperparameters for GPT-Base on FineWeb 10B tokens. $K$=2 for all runs. We swept [0.005, 0.01, 0.02, 0.04] for the best learning rate.}
\label{tab:hparams_gpt_base}
\end{table}

\subsubsection{Hyperparameters for \textsc{Teon}/\textsc{Muon} on GPT-Large}

\begin{table}[htb]
\centering
\begin{tabular}{lccc}
\toprule
\textbf{Hyperparameter} & \textbf{\textsc{Muon}} & \textbf{\textsc{Teon}} \\
\midrule
$Ortho(\cdot)$        & Polar-Express & Polar-Express \\
Learning rate           & $0.02$ & $0.02$ \\
LR scheduler            & Cosine & Cosine \\
Weight decay            & $0.1$ & $0.1$ \\
Warmup ratio            & $0.1$ & $0.1$ \\
\midrule
$Ortho(\cdot)$       & You            & You            \\
Learning rate           & $0.02$ & $0.02$ \\
LR scheduler            & Cosine & Cosine \\
Weight decay            & $0.1$ & $0.1$ \\
Warmup ratio            & $0.1$ & $0.1$ \\
\midrule
$Ortho(\cdot)$        & Jordan         & Jordan         \\
Learning rate           & $0.02$ & $0.02$ \\
LR scheduler            & Cosine & Cosine \\
Weight decay            & $0.1$ & $0.1$ \\
Warmup ratio            & $0.1$ & $0.1$ \\
\bottomrule
\end{tabular}
\caption{Training hyperparameters for GPT-Large on FineWeb 10B tokens. We adopt the best learning rate reported in \cite{amsel2025polar}. $K$=2 for all runs.}
\label{tab:hparams_gpt_large}
\end{table}
%%%%%%%%%%%%%%%%%%%%%%%%%%%%%%%%%%%%%%%%%%%%%%%%%%%%%%%%%%%%%%%%%%%%%%%%%%%%%%%
\newpage

\subsection{Pre-training LLaMA}
%%%%%%%%%%%%%%%%%%%%%%%%%%%%%%%%%%%%%%%%%%%%%%%%%%%%%%%%%%%%%%%%%%%%%%%%%%%%%%%
%%%%%%%%%%%%%%%%%%%%%%%%%%%%%%%%%%%%%%%%%%%%%%%%%%%%%%%%%%%%%%%%%%%%%%%%%%%%%%%
\subsubsection{Model Configuration}

\begin{table}[h]
\centering

\begin{tabular}{lccccc}
\toprule
\textbf{Model} & $n_{\text{embd}}$ & $n_{\text{layer}}$ & $n_{\text{head}}$ & FFN dim & Param(M) \\
\midrule
60M  & 512  & 8  & 8  & 1376 & 58 \\
130M & 768  & 12 & 12 & 2048 & 134 \\
350M & 1024 & 24 & 16 & 2736 & 368 \\
1B   & 2048 & 24 & 32 & 5461 & 1339 \\
\bottomrule
\end{tabular}
\caption{Architecture configurations of LLaMA-style models.}
\label{tab:llama_model_configs}
\end{table}

\subsubsection{Hyperparameters for AdamW}

\begin{table}[htb]
\centering

\setlength{\tabcolsep}{3.5pt}
\begin{tabular}{lcccc}
\toprule
\textbf{Hyperparameter} 
& \textbf{60M} 
& \textbf{130M} 
& \textbf{350M} 
& \textbf{1B} \\
\midrule
Learning rate  & 0.002 & 0.002 & 0.001 & 0.0001 \\
Weight decay   & 0.1 & 0.1 & 0.1 & 0.1 \\
LR scheduler   & Cosine & Cosine & Cosine & Cosine \\
Warmup ratio   & 0.1 & 0.1 & 0.1 & 0.1 \\
\bottomrule
\end{tabular}
\caption{Training hyperparameters for different model scales for AdamW. For the learning rate, we swept from 1e-4 to 4e-3, and the best setting is reported.}
\label{tab:training_hparams_llama_adamw}
\end{table}

\subsubsection{Hyperparameters for \textsc{Muon}}

\begin{table}[htb]
\centering

\setlength{\tabcolsep}{3.5pt}
\begin{tabular}{lcccc}
\toprule
\textbf{Hyperparameter} 
& \textbf{60M} 
& \textbf{130M} 
& \textbf{350M} 
& \textbf{1B} \\
\midrule
Learning rate  & 0.02 & 0.01 & 0.01 & 0.01 \\
Weight decay   & 0.1 & 0.1 & 0.1 & 0.1 \\
LR scheduler   & Cosine & Cosine & Cosine & Cosine \\
Warmup ratio   & 0.1 & 0.1 & 0.1 & 0.1 \\
\bottomrule
\end{tabular}
\caption{Training hyperparameters for different model scales for AdamW. For the learning rate, we swept [0.005, 0.01, 0.02],  and the best setting is reported.}
\label{tab:training_hparams_llama_adamw}
\end{table}

\subsubsection{Hyperparameters for \textsc{Teon}}

\begin{table}[!htb]
\centering

\setlength{\tabcolsep}{3.5pt}
\begin{tabular}{lcccc}
\toprule
\textbf{Hyperparameter} 
& \textbf{60M} 
& \textbf{130M} 
& \textbf{350M} 
& \textbf{1B} \\
\midrule
Learning rate  & 0.02 & 0.01 & 0.02 & 0.01 \\
Weight decay   & 0.1 & 0.1 & 0.1 & 0.1 \\
LR scheduler   & Cosine & Cosine & Cosine & Cosine \\
Warmup ratio   & 0.1 & 0.1 & 0.1 & 0.1 \\
\bottomrule
\end{tabular}
\caption{Training hyperparameters for different model scales for AdamW. For the learning rate, we swept [0.005, 0.01, 0.02], and the best setting is reported.}
\label{tab:training_hparams_llama_adamw}
\end{table}

\section{Additional Experiment Results}
\subsection{Alignment of top singular vectors}\label{app:Alignment of top singular vectors}
\begin{figure}[t]
    \centering

    \begin{subfigure}{\linewidth}
        \centering
        \includegraphics[width=\linewidth]{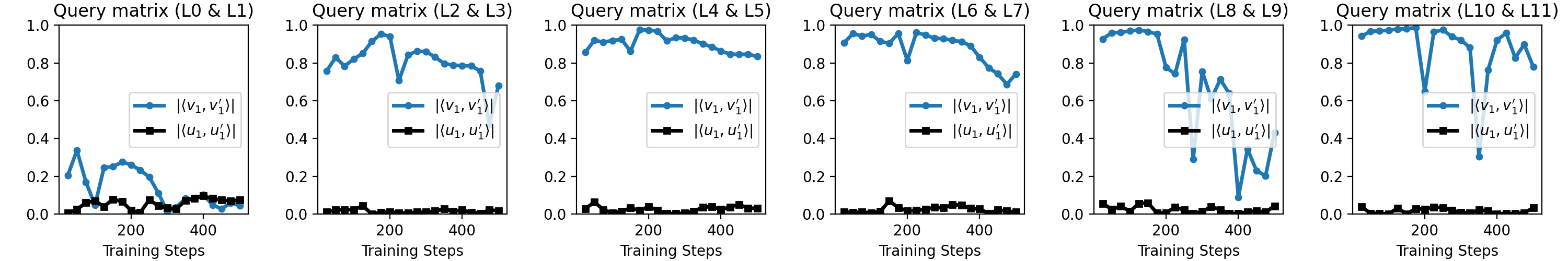}
        \caption{Momentum gradients of Attention Query Matrices}
        \label{fig:svd_top_Q}
    \end{subfigure}

    \begin{subfigure}{\linewidth}
        \centering
        \includegraphics[width=\linewidth]{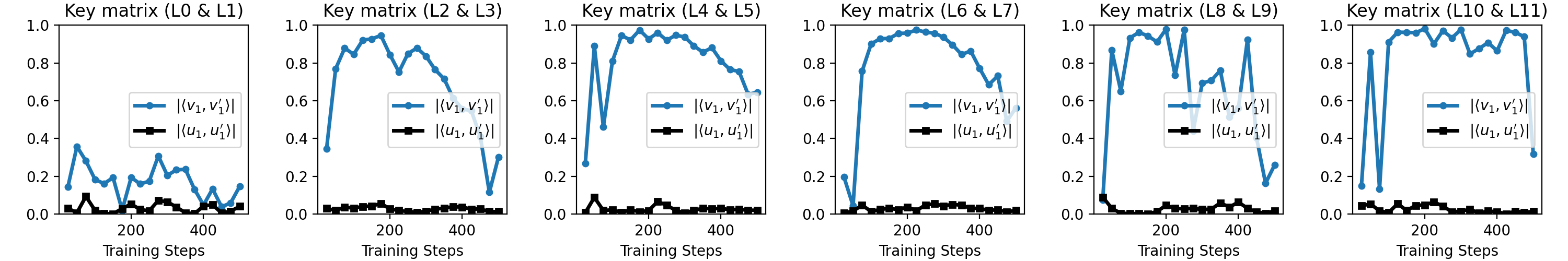}
        \caption{Momentum gradients of Attention Key Matrices}
        \label{fig:svd_top_K}
    \end{subfigure}

    \vspace{2mm}

    \begin{subfigure}{\linewidth}
        \centering
        \includegraphics[width=\linewidth]{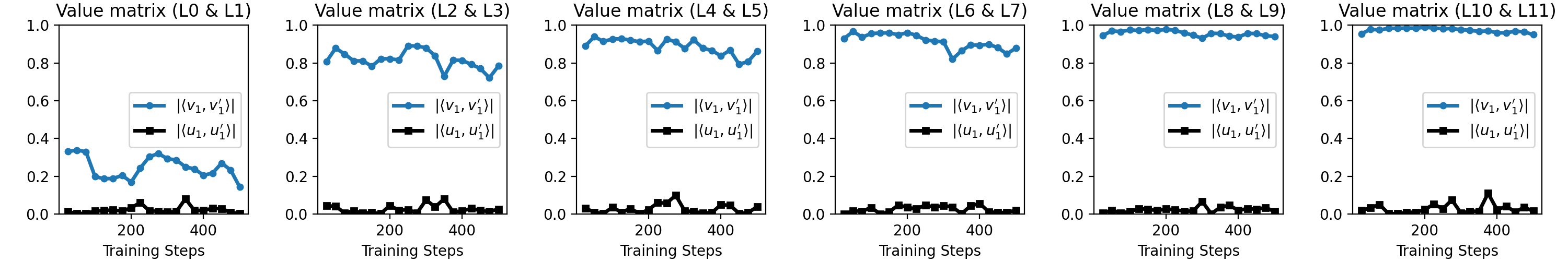}
        \caption{Momentum gradients of Attention Value Matrices}
        \label{fig:svd_top_V}
    \end{subfigure}

    \vspace{2mm}

    \begin{subfigure}{\linewidth}
        \centering
        \includegraphics[width=\linewidth]{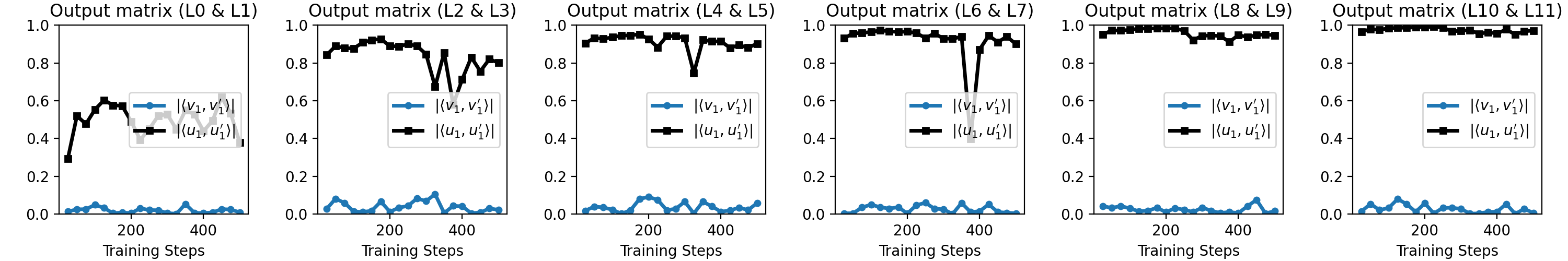}
        \caption{Momentum gradients of Attention Output Matrices}
        \label{fig:svd_top_O}
    \end{subfigure}

    \vspace{2mm}

    \begin{subfigure}{\linewidth}
        \centering
        \includegraphics[width=\linewidth]{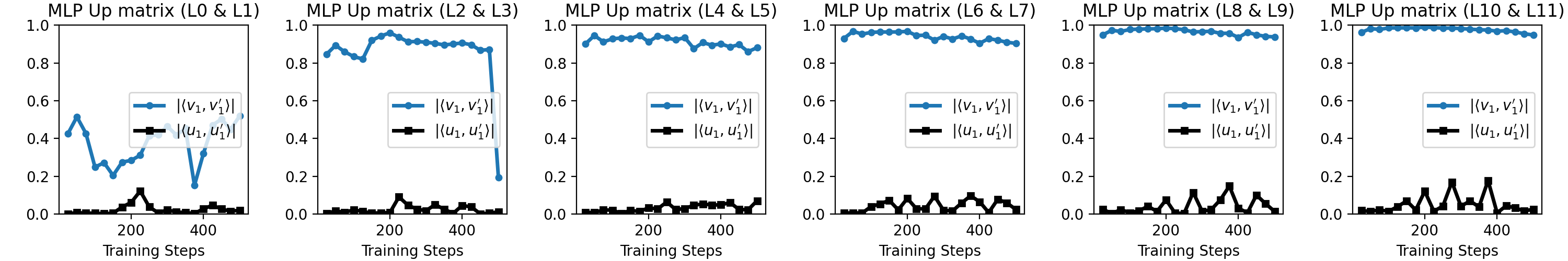}
        \caption{Momentum gradients of MLP Up Matrices}
        \label{fig:svd_top_mlp_up}
    \end{subfigure}

    \vspace{2mm}

    \begin{subfigure}{\linewidth}
        \centering
        \includegraphics[width=\linewidth]{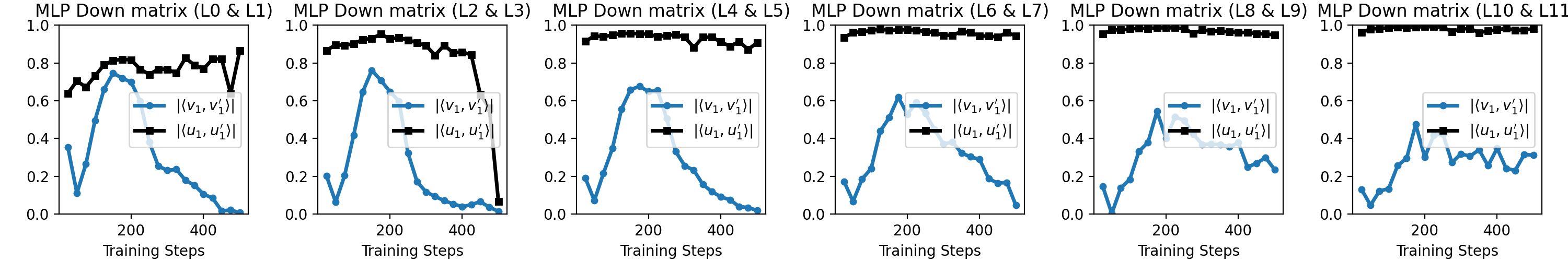}
        \caption{Momentum gradients of MLP Down Matrices}
        \label{fig:svd_top_mlp_down}
    \end{subfigure}

    \caption{Inner product between top singular vectors of consecutive layer pairs.}
    \label{fig:svd_top_full}
\end{figure}

\end{document}